\newcommand{\indep}{\mathrel{\perp\mspace{-7mu}\perp}}
\newcommand{\nindep}{\mathrel{\ooalign{$\indep$\cr\hidewidth$/$\hidewidth\cr}}}
\newcommand{\esssup}{\operatorname*{ess\,sup}}
\newcommand{\N}{\mathbb{N}}
\newcommand{\R}{\mathbb{R}}
\newcommand{\Rd}{\mathbb{R}^{d}}
\newcommand{\cX}{\mathcal{X}}
\newcommand{\cZ}{\mathcal{Z}}
\newcommand{\cH}{\mathcal{H}}
\newcommand{\cN}{\mathcal{N}}
\newcommand{\cQ}{\mathcal{Q}}
\newcommand{\cA}{\mathcal{A}}
\newcommand{\cR}{\mathcal{R}}
\newcommand{\cS}{\mathcal{S}}
\newcommand{\cL}{\mathcal{L}}
\newcommand{\cD}{\mathcal{D}}
\newcommand{\cT}{\mathcal{T}}
\renewcommand{\P}{\mathbb{P}}
\DeclareMathOperator{\Tr}{Tr}
\DeclareMathOperator*{\argmax}{arg\,max}
\DeclareMathOperator*{\argmin}{arg\,min}
\DeclareRobustCommand\onedot{\futurelet\@let@token\@onedot}
\def\@onedot{\ifx\@let@token.\else.\null\fi\xspace}
\def\eg{\textit{e.g}\onedot}
\def\iid{i.i.d\onedot}
\renewcommand{\emph}[1]{\textbf{\textit{#1}}}
\newcommand{\transpose}{^\mathsf{\scriptscriptstyle T}}
\newcommand{\upd}{{\scriptscriptstyle (d)}}
\let\phi\varphi
\def\eqref#1{equation~\ref{#1}}
\def\1{\bm{1}}
\DeclareMathAlphabet{\mathsfit}{\encodingdefault}{\sfdefault}{m}{sl}
\SetMathAlphabet{\mathsfit}{bold}{\encodingdefault}{\sfdefault}{bx}{n}
\def\gL{{\mathcal{L}}}
\def\gR{{\mathcal{R}}}
\def\gT{{\mathcal{T}}}
\newcommand{\E}{\mathbb{E}}
\newcommand{\normop}[1]{\left\| #1 \right\|_{\mathrm{op}}}
\newcommand{\normHS}[1]{\left\| #1 \right\|_{\scriptscriptstyle \mathrm{HS}}}
\newcommand{\loose}{\looseness=-1}
\DeclareRobustCommand\onedot{\futurelet\@let@token\@onedot}
\def\@onedot{\ifx\@let@token.\else.\null\fi\xspace}
\def\eg{e.g\onedot}
\def\iid{i.i.d\onedot}
\newcommand{\bbN}{\mathbb{N}}
\newcommand{\bbR}{\mathbb{R}}
\theoremstyle{plain}
\newtheorem{theorem}{Theorem}[section]
\newtheorem{prop}[theorem]{Proposition}
\newtheorem{lemma}[theorem]{Lemma}
\theoremstyle{definition}
\newtheorem{definition}[theorem]{Definition}
\newtheorem{asst}{Assumption}
\crefname{asst}{assumption}{assumptions}
\crefname{equation}{Eq.}{Eqs.}
\icmltitlerunning{Outcome-Aware Spectral Feature Learning for Instrumental Variable Regression}
\begin{document}

\twocolumn[
  \icmltitle{Outcome-Aware Spectral Feature Learning for Instrumental Variable Regression}



  \icmlsetsymbol{equal}{*}

  \begin{icmlauthorlist}
    \icmlauthor{Dimitri Meunier}{equal,gatsby}
    \icmlauthor{Jakub Wornbard}{equal,gatsby}
    \icmlauthor{Vladimir R. Kostic}{equal,csml,novisad}
    \icmlauthor{Antoine Moulin}{upf}\\
    \icmlauthor{Alek Fr\"ohlich}{csml,unige}
    \icmlauthor{Karim Lounici}{cmap}
    \icmlauthor{Massimiliano Pontil}{csml,aicentre}
    \icmlauthor{Arthur Gretton}{gatsby}
  \end{icmlauthorlist}

  \icmlaffiliation{gatsby}{Gatsby Computational Neuroscience Unit, University College London}
  \icmlaffiliation{csml}{CSML, Istituto Italiano di Tecnologia}
  \icmlaffiliation{novisad}{Faculty of Science, University of Novi Sad}
  \icmlaffiliation{upf}{Universitat Pompeu Fabra}
  \icmlaffiliation{aicentre}{AI Centre, University College London}
  \icmlaffiliation{cmap}{CMAP-Ecole Polytechnique}
  \icmlaffiliation{unige}{DIBRIS, University of Genoa}

  \icmlcorrespondingauthor{Dimitri Meunier}{dimitri.meunier.21@ucl.ac.uk}

  \icmlkeywords{Machine Learning, ICML}

  \vskip 0.3in
]



\printAffiliationsAndNotice{\icmlEqualContribution}

\begin{abstract}
    We address the problem of causal effect estimation in the presence of hidden confounders using nonparametric instrumental variable (IV) regression. An established approach is to use estimators based on learned \emph{spectral features}, that is, features spanning the top singular subspaces of the operator linking treatments to instruments. While powerful, such features are agnostic to the outcome variable. Consequently, the method can fail when the true causal function is poorly represented by these dominant singular functions. To mitigate this, we introduce \emph{Augmented Spectral Feature Learning}, a framework that makes the feature-learning process \emph{outcome-aware}. Our method learns features by minimizing a novel contrastive loss derived from an \emph{augmented} operator that incorporates information from the outcome. By learning these task-specific features, our approach remains effective even in the presence of spectral misalignment. We provide a theoretical analysis of this framework and validate our approach on challenging benchmarks.\loose
\end{abstract}
\section{Introduction} \label{sec:introduction}

We study the nonparametric instrumental variable (NPIV) model, a cornerstone of causal inference in the presence of unobserved confounding \citep{newey2003instrumental}, which assumes a relationship
\begin{equation} \label{eq:npiv}
    Y = h_0(X) + U, \quad \mathbb{E}[U \mid Z] = 0, \quad Z \nindep X, 
\end{equation}
where the confounder $U$ is conditionally mean zero with respect to the instrument $Z$. The goal is to recover the causal effect from treatment $X$ to outcome $Y$ by estimating the structural function $h_0$ from \iid samples of $(Y, X, Z)$. An example is the economic problem of estimating the effect of education on earnings \citep{card1993using}. $X$ represents years of schooling  and $Y$ an individual's wage. A direct regression is likely biased because unobserved factors like innate ability or family background ($U$) can influence both educational attainment and earning potential. To disentangle this effect, one could use an individual's proximity to a college as an instrument ($Z$), as living closer may increase years of schooling ($X$) but is unlikely to be directly correlated with innate ability ($U$).\loose

The NPIV model can be reformulated as a linear inverse problem \citep{darolles2011nonparametric}. Taking the conditional expectation with respect to $Z$ on both sides of \Cref{eq:npiv} yields the integral equation
\begin{equation} \label{eq:npiv_int}
    \cT h_0 = r_0, \qquad r_0 \doteq \mathbb{E}[Y \mid Z],
\end{equation}
where  $\cT\colon L_2(X) \to L_2(Z)$ is a bounded linear operator that maps every function $h \in L_2(X)$ to its conditional expectation $\E[h(X) | Z] \in L_2(Z)$. Here both the function $r_0$ and the operator $\cT$ are unknown, and we only have access to the set of \iid observations. Throughout this work, we assume that there exists a solution to the NPIV problem, that is $r_0$ is in the range of $\cT$. 

State-of-the-art techniques for NPIV estimation rely on learning adaptive features and integrating them into classic algorithms like two-stage least squares (2SLS) \citep{xu2020learning,petrulionyte2024functional} or primal-dual strategies \citep{dikkala2020minimax, liao2020provably, bennett2023minimax}. One successful technique is SpecIV \citep{sun25speciv}, which learns neural net features by approximating a low-rank decomposition of the operator $\cT$. \citet{meunier2025demystifying} showed  SpecIV to be optimal when the structural function $h_0$ is well-aligned with the top singular functions of $\cT$, but degrades otherwise. The issue is that the feature learning process is agnostic to the outcome $Y$; it only captures the main aspects of the relationship between the treatment $X$ and the instrument $Z$. If $h_0$ lies outside of this dominant subspace, the resulting features are uninformative for the final task, leading to failure. In this paper, we address this limitation by proposing a framework for outcome-aware feature learning in NPIV estimation. We introduce a new spectral objective that incorporates information from the outcome $Y$, guiding the feature learning to identify components of $X$ that are predictable from $Z$ and predictive of $Y$. This is equivalent to learning a low-rank decomposition of a perturbed version of $\cT$. Our approach ensures good performance even in cases of spectral misalignment, where target-agnostic methods fail. While our work focuses on IV regression for causal effect estimation, our method is also useful in settings where learning spectral decompositions of conditional operators is relevant. For instance, in off-policy evaluation in reinforcement learning, where value function estimation can be framed as an IV problem \citep{hu2024primaldualspectralrepresentationoffpolicy}, or when learning evolution operators in fields like molecular dynamics and climate science \citep{turri2025self}.

{\bf Contributions.}~Our contributions are as follows: 
\vspace*{-10pt}
\begin{enumerate}[topsep=11pt,itemsep=5pt,parsep=5pt]
    \item We identify a fundamental limitation of existing spectral methods for NPIV: the learned features are outcome-agnostic, degrading performance in cases of spectral misalignment. To address this, we propose \emph{Augmented Spectral Feature Learning} and introduce an augmented operator, $\cT_\delta$, which incorporates information on the outcome into the feature learning problem. This leads to a new, principled contrastive loss function for learning task-specific spectral features.

    \item We provide a comprehensive theoretical analysis of our method. This includes a full generalization error bound for the resulting 2SLS estimator, characterizing the settings where the augmented approach remains robust to the spectral misalignment issues of previous methods.\looseness=-1

    \item We validate our theory on challenging synthetic and semi-synthetic examples, including a new and more challenging version of a dSprites IV benchmark  \citep{xu2020learning}. The results demonstrate the practical benefits of outcome-aware feature learning in the challenging regimes where standard SpecIV fails. In addition, we include an Off-Policy Evaluation (OPE) experiment in the context of reinforcement learning \citep{chen2022ivForPolicyEval}, showing that our approach remains robust and competitive in challenging, dynamically changing environments.
\end{enumerate}

{\bf Paper Organization.}~The remainder of the paper is structured as follows. We situate our contribution within the broader literature in Sec.~\ref{sec:related-work}. Sec.~\ref{sec:preliminaries} introduces the notation, reviews the 2SLS estimator, and the SpecIV method \citep{sun25speciv}. In Sec.~\ref{sec:method}, we introduce our outcome-aware framework. Sec.~\ref{sec:analysis} presents our main theoretical results.  In Sec.~\ref{sec:experiments} we present numerical experiments that validate our theory and demonstrate the effectiveness of our approach. All proofs are deferred to the appendix.

\section{Related Work}
\label{sec:related-work}

This section provides an overview of the research areas that are most relevant to our study.

{\bf 2SLS methods.}~The classical approach to IV regression is the 2SLS method. In its nonparametric form, the first stage involves estimating the conditional expectation of the treatment given the instrument, and the second stage uses these predictions to estimate the structural function. Early influential works used sieve or series estimators, approximating unknown functions with basis functions like polynomials or splines \citep{newey2003instrumental, hall2005nonparametric, blundell2007semi, chen2012estimation, chen2018optimal}. Other approaches include Tikhonov regularization to stabilize the inverse problem \citep{darolles2011nonparametric} or frame the problem in reproducing kernel Hilbert spaces \citep{singh2019kernel, meunier2024nonparametric,shen2025nonparametric}. More recently, deep learning has been used to handle the nonparametric components of the 2SLS procedure. DeepIV \citep{hartford2017deep} uses a mixture density network to estimate the conditional distribution of the treatment given the instruments in the first stage, and then a second network for the structural function in the second stage. Deep Feature IV (DFIV; \citealp{xu2020learning,petrulionyte2024functional,kimoptimality}) uses neural networks to learn optimal features of the instruments, which are then used as inputs for the first-stage regression.\loose

{\bf Saddle-point methods.}~An alternative to 2SLS is to frame NPIV as a minimax optimization problem. These methods, often rooted in a generalized method of moments (GMM) framework, seek an equilibrium between a player that minimizes a loss function and an adversary that maximizes the violation of the moment conditions. This approach can bypass the direct estimation of conditional expectations. Different formulations exist, such as those based on the Lagrangian of a constrained least-norm problem \citep{bennett2023minimax,liao2020provably} or on maximizing the moment deviation directly \citep{lewis2018adversarial,dikkala2020minimax,wang2022fast,bennett2025inference,shen2026instrumental}. These methods are particularly well-suited for high-dimensional settings and integration with deep learning models.\loose

{\bf Spectral features learning.}~When the instrument-treatment relationship is complex, learning good features is crucial. Spectral methods use techniques such as the singular value decomposition (SVD) to obtain a low-dimensional representation of the conditional expectation operator $\cT$. Such an SVD can typically be estimated by minimizing a contrastive loss that has been used in various contexts \citep{sun25speciv, hu2024primaldualspectralrepresentationoffpolicy, kostic2024ncp, turri2025self}. While these methods are powerful, the learned features are agnostic to the outcome; they capture only the dominant modes of the instrument-treatment relationship, which may not be sufficient for predicting the outcome.

{\bf Outcome-aware and adaptive methods.}~Our work is part of a growing literature on adaptive methods for NPIV. While spectral methods like SpecIV \citep{sun25speciv} are powerful, their outcome-agnostic nature can be a significant drawback, as discussed above. The features are learned solely from the instrument-treatment relationship and may not be informative for predicting the outcome. Our approach addresses this by making the feature learning process outcome-aware and ensuring that the learned representations are not only predictive of the treatment but also relevant to the causal relationship of interest.  \citet{bruns-smith2024two} proposes a different adaptive method, where in stage one, he constructs a set of $Z$-features that is predictive of $Y$, and then estimates the structural function by selecting an element of an arbitrary high-dimensional function class whose projection onto the learned $Z$-features is most predictive of $Y$. The method is guaranteed to achieve low weak error under mild or testable assumptions. This approach is similar in spirit to ours but differs in that the usefulness of the $Z$ features for the IV problem is implicit and must be tested post hoc. Moreover, the strong-norm convergence holds under more stringent restrictions on the ill-posedness of the inverse problems than we needed.

\section{Preliminaries} \label{sec:preliminaries}

\textbf{Function Spaces.}
$Y$ is defined on $\mathbb{R}$, while $X$ and $Z$ take values in measurable spaces $\cX$ and $\cZ$, respectively. For a variable $R \in \{X,Z\}$, \(L_2(R) \) is the space of square-integrable functions (\(\E[f(R)^2] < \infty\)).

\textbf{Operators on Hilbert Spaces.}
Let \(\cH\) be a Hilbert space. For a bounded linear operator $A$ acting on $\cH$, we denote by \(\normop{A}\) its operator norm, \(\normHS{A}\) its Hilbert--Schmidt norm, \(A^\dagger\) its Moore--Penrose inverse, and \(A^\star\) its adjoint. For finite-dimensional operators, the Hilbert--Schmidt norm coincides with the Frobenius norm. We denote \(\mathcal{R}(A)\) and \(\mathcal{N}(A)\) the range and null spaces of \(A\), respectively. Given a closed subspace \(M \subseteq \cH\), we write \(M^\perp\) its orthogonal complement, \(\overline{M}\) its closure, and \(\Pi_M\) the orthogonal projection onto \(M\). Denote the orthogonal projection onto \( M^\perp \) by \((\Pi_M)_\perp \doteq I_\cH - \Pi_M\). For \(f, h \in L_2(X)\), \(g \in L_2(Z)\), the rank-one operator \(g \otimes f\) is defined as \((g \otimes f)(h) = \langle h, f \rangle g\), generalizing the standard outer product. For \(x \in \mathbb{R}^d\), we write \(\|x\|_{\ell_2}\) for the Euclidean norm.

\textbf{Data Splitting and Empirical Expectations.} We consider two independent datasets: \(\tilde{\mathcal{D}}_m = \{(\tilde{z}_i, \tilde{x}_i, \tilde{y}_i)\}_{i=1}^m\), to learn features for \(X\) and \(Z\), and \(\mathcal{D}_n {=} \{(z_i, x_i, y_i)\}_{i=1}^n\), to estimate the structural function. $\widehat{\mathbb{E}}_m$ and $\widehat{\mathbb{E}}_n$ denote empirical expectations with respect to $\tilde{\cD}_m$ and $\cD_n$, respectively.

\textbf{Feature Maps, Covariance Operators and Projections.} Let \(d \in \bbN^*\), \(\phi^\upd\colon \cX {\to} \mathbb{R}^d\) be a feature map with linearly independent components \(\varphi_1^\upd, \dots, \varphi_d^\upd {\in} L_2(X)\), and $\Phi^\upd \doteq [\varphi_1^\upd,\ldots \varphi_d^\upd]$ be the operator defined as $$\Phi^\upd\colon \mathbb{R}^d {\to} L_2(X), \alpha \mapsto \sum_{i=1}^d \alpha_i \varphi_i^\upd.$$ Its adjoint is given by $\Phi^{\upd \star}\colon h \mapsto (\langle h, \varphi_i^\upd\rangle_{L_2(X)})_{i=1}^d$. Let $C_{\phi^\upd} \doteq \Phi^{\upd \star}\Phi^\upd = \mathbb{E}[\phi^\upd(X) \phi^\upd(X)\transpose]$ be the (uncentered) covariance operator, $\widehat{C}_{\phi^\upd} \doteq \widehat{\E}_n[\phi^\upd(X) \phi^\upd(X)\transpose]$ its empirical counterpart, and \(\Pi_{\varphi^\upd} \doteq \Phi^\upd (C_{\phi^\upd})^{-1}  \Phi^{\upd \star}\) be the corresponding orthogonal projection operator. Analogous definitions apply for a feature map \(\psi^\upd\colon \cZ \to \mathbb{R}^d\), yielding \(\Psi^\upd,C_{\psi^\upd},\widehat{C}_{\psi^\upd}\) and \(\Pi_{\psi^\upd}\). The (uncentered) cross-covariance operator between \(\phi^\upd\) and \(\psi^\upd\) is 
\[
C_{\psi^\upd,\phi^\upd} \doteq \Psi^{\upd \star} \Phi^\upd = \mathbb{E}[\psi^\upd(Z) \phi^\upd(X)\transpose].
\] 
Denote $\widehat{C}_{\psi^\upd,\phi^\upd}$ its empirical counterpart. We drop the superscript \(d\) when it is clear from context.

{\bf 2SLS in feature space.}~Given feature maps \(\phi^\upd\colon \cX \to \mathbb{R}^d\) and \(\psi^{(p)}\colon \cZ \to \mathbb{R}^p\), a prominent estimator for NPIV is the 2SLS estimator \citep[see, \eg,][]{blundell2007semi}, given by $\widehat{h}_{\mathrm{2SLS}}(x)  = \phi^\upd\!(x)\transpose\widehat{\beta}_{\mathrm{2SLS}}$, with
\begin{equation*}
\begin{aligned}
\widehat{\beta}_{\mathrm{2SLS}}
&=
\Big\{
\widehat{C}_{\phi^\upd,\psi^{(p)}}
\widehat{C}_{\psi^{(p)}}^{-1}
\widehat{C}_{\psi^{(p)},\phi^\upd}
\Big\}^{-1} \\
&\qquad\cdot
\widehat{C}_{\phi^\upd,\psi^{(p)}}
\widehat{C}_{\psi^{(p)}}^{-1}
\widehat{\E}_n\!\big[ Y\,\psi^{(p)}(Z)\big].
\end{aligned}
\end{equation*}
When $d = p$ and the cross-covariance matrix is invertible, it simplifies to $\widehat{\beta}_{\mathrm{2SLS}} = \widehat{C}_{\!\psi^\upd\!, \phi^\upd}^{-1} \widehat{\E}_n [Y \psi^\upd(Z)]$.

{\bf 2SLS with spectral features.} Features plugged into the 2SLS estimator can be either fixed or learned adaptively from data (see \Cref{sec:related-work} for a discussion on related work). In the SpecIV approach \citep{sun25speciv}, features are learned to approximate the top eigenstructure of $\cT$. Throughout the paper, we make the following mild assumption \citep{darolles2011nonparametric,meunier2025demystifying}.
\begin{asst} \label{asst:exist_compact}
    $\cT\colon L_2(X)\to L_2(Z)$ is a compact operator.
\end{asst}
This allows us to write a countable singular value decomposition (SVD) for $\cT$,
\begin{equation*}
    \cT = \sum_{i \geq 1} \lambda_i u_i \otimes v_i, \quad u_i \in L_2(Z), \quad v_i \in L_2(X),
\end{equation*}
$\lambda_1 \geq \lambda_2 \geq \cdots > 0$, where $u_i$'s and $v_i$'s are orthonormal bases for $\overline{\cR(\cT)} \subseteq L_2(Z)$ and $\cN(\cT)^{\perp} \subseteq L_2(X)$, respectively. The operator $\cT^\upd \doteq \sum_{i=1}^d \lambda_i u_i \otimes v_i$ is the best (in terms of operator or Hilbert--Schmidt norm) rank-$d$ approximation to $\cT$. To avoid ambiguity, we assume that $\lambda_d > \lambda_{d+1}$. We do not assume that $\cT$ is injective, since we can always target the minimum-norm solution \citep{florens2011identification}
$$
\bar{h}_0 \doteq \cT^{\dagger} r_0
  = \sum_{i \geq 1} \frac{1}{\lambda_i} \langle r_0, u_i \rangle_{L_2(Z)} v_i.
$$
When $\cT$ is injective, $h_0 = \bar{h}_0$. In what follows, we denote $\bar{h}_0$ as $h_0$ and do not distinguish between the structural function and the minimal solution to the NPIV problem.

Given feature maps $\phi_{\theta}^\upd\colon \cX \to \R^d$ and $\psi_{\theta}^\upd\colon \cZ \to \R^d$, parametrized by neural networks, \citet{sun25speciv} proposed to learn the features by minimizing the empirical counterpart to the following loss
\begin{equation}\label{eq:vanilla_loss}
\begin{aligned}
\cL_0^\upd(\theta) \doteq\;&
\E_X \E_Z\!\Big[\big(\varphi_{\theta}^\upd(X)^\top \psi_{\theta}^\upd(Z)\big)^2\Big] \\
&\; - 2\,\E\!\Big[\varphi_{\theta}^\upd(X)^\top \psi_{\theta}^\upd(Z)\Big].
\end{aligned}
\end{equation}
where the first expectation is over the product of the marginals of $X$ and $Z$, and the second is over the joint distribution. It is shown by \citet[Theorem 2]{meunier2025demystifying} that $\cL_0^\upd \geq - \normHS{\cT_d}^2,$ and that the minimum is achieved if and only if $\Psi^\upd_{\theta} \Phi^{\upd \star}_{\theta} = \cT_d$. Therefore, by minimizing the empirical counterpart to \Cref{eq:vanilla_loss}, one learns features that approximate the best low-rank approximation to $\cT$.

{\bf Remark.} \citet{meunier2025demystifying} introduced~\Cref{eq:vanilla_loss} under the stronger assumption that $\cT$ is Hilbert--Schmidt, but neither Hilbert--Schmidt nor compactness is needed for the loss to be well-defined: boundedness of $\cT$ suffices. Compactness is used only to justify the truncated SVD target $\cT_d$ and attainment of the rank-$d$ optimum (Appendix~\ref{app:loss_welldefined}).

\section{Outcome-Aware Spectral Feature Learning}
\label{sec:method}

As discussed in \Cref{sec:introduction}, standard SpecIV learns features that are agnostic to the outcome $Y$. This can lead to poor performance when the structural function $h_0$ is not well-aligned with the top singular functions of $\cT$ \citep{meunier2025demystifying}. To mitigate this, we augment the SpecIV loss with a regularization term that incorporates information from the outcome  $Y$ by projecting it onto the orthonormal basis of the $Z$-features. The resulting loss is defined as
\begin{equation}\label{eq:loss_reg}
\begin{aligned}
\cL_{\delta}^\upd(\theta)
&\doteq \cL_0^\upd(\theta) + \gR_\delta^\upd(\theta),\\
\gR_\delta^\upd(\theta)
&\doteq -\delta^2\E\!\big[Y \psi_{\theta}^\upd(Z)\big]^{\transpose}
C_{\psi_{\theta}^\upd}^{-1}\,
\E\!\big[Y \psi_{\theta}^\upd(Z)\big].
\end{aligned}
\end{equation}
We propose to learn features by minimizing the empirical counterpart of \Cref{eq:loss_reg} over the training set $\tilde{\cD}_m$. The regularization term, controlled by the hyperparameter $\delta$, encourages the learned instrument features to be predictive of $Y$. Indeed, $\delta^{-2}\gR_\delta^\upd(\theta)$ is equal to the mean squared error of the best linear predictor of $Y$ from $\psi_{\theta}^\upd(Z)$ (up to a constant independent of $\theta$). As backpropagation through the inverse covariance matrix can be numerically unstable, we instead minimize the following equivalent loss jointly over $\theta$ and $\omega \in \bbR^d$:\loose
\begin{equation} \label{eq:loss_reg_extended}
\cL_{\delta}^\upd(\theta, \omega) \doteq \cL_0^\upd(\theta) - 2 \delta \E[Y \psi_{\theta}^\upd(Z)]\transpose \omega + \omega\transpose C_{\psi_{\theta}^\upd} \omega.
\end{equation}
For any fixed $\theta$, this loss is convex in $\omega$ and its minimum is attained at $\omega_{\theta}^\upd = \delta C_{\psi_{\theta}^\upd}^{-1} \E[Y \psi_{\theta}^\upd(Z)]$. Substituting this solution back into \Cref{eq:loss_reg_extended} recovers the profile loss in \Cref{eq:loss_reg}.

{\bf Operator learning perspective.} We now show this modification is equivalent to learning a low-rank approximation of an augmented version of the operator $\cT$. For $\delta \in \R$, we define the operator
$$
\cT_{\delta}: L_2(X) \times \R \to L_2(Z), (h,a) \mapsto \cT h + a \cdot \delta \cdot r_0.  
$$
$\cT_{\delta}$ augments $\cT$ with an additional ``column'' aligned with $r_0$, as captured by the compact notation $\cT_{\delta} = [\cT \mid \delta r_0] = \cT [I_{L_2(X)} \mid \delta h_0]$. As $\cT$ is compact, $\cT_{\delta}$ is also compact and admits an SVD
\begin{equation} \label{eq:svd_T_delta}
    \cT_\delta = \sum_{i \geq 1} \sigma_{*,i} \psi_{*,i} \otimes (\varphi_{*,i}, \omega_{*,i}), \quad \psi_{*,i} \in L_2(Z), 
\end{equation}
where $(\varphi_{*,i}, \omega_{*,i}) \in L_2(X) \times \R$ with $\sigma_{*,1} \geq \sigma_{*,2} \geq \cdots > 0$. The following proposition formalizes the connection: minimizing the augmented loss $\cL_{\delta}^\upd$ is equivalent to finding the best rank-$d$ approximation of the augmented operator $\cT_{\delta}$. We denote this best approximation by
$$\cT_{\delta}^\upd \doteq \sum_{i=1}^d \sigma_{*,i} \psi_{*,i} \otimes (\varphi_{*,i}, \omega_{*,i}).$$
\textcolor{black}{To avoid ambiguity in the definition of $\cT_{\delta}^\upd$, we assume that $\sigma_{*,d} > \sigma_{*,d+1}$.}

\begin{prop} \label{prop:loss_equivalence}
Given $\delta \in \R$, for all parameters $\theta$ and $\omega$ it holds that 
$\cL_{\delta}^\upd(\theta,\omega) \geq - \normHS{\cT_{\delta}^\upd}^2$. The lower bound is achieved if and only if the learned operator $\Psi^\upd_{\theta} [\Phi^{\upd \star}_{\theta} \mid \omega]$ is equal to $\cT_{\delta}^\upd$.
\end{prop}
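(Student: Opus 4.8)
The idea is to show that $\cL_\delta^\upd(\theta,\omega)$ equals, up to an additive constant that does not depend on $(\theta,\omega)$, the squared Hilbert--Schmidt distance from the finite-rank operator $\Psi_\theta^\upd[\Phi_\theta^{\upd\star}\mid\omega]\colon L_2(X)\times\R\to L_2(Z)$ to the augmented operator $\cT_\delta$, and then to invoke the Eckart--Young--Mirsky (Schmidt--Mirsky) theorem in Hilbert--Schmidt norm. Concretely, I would establish the identity
\[
\cL_\delta^\upd(\theta,\omega)=\normHS{\,\Psi_\theta^\upd[\Phi_\theta^{\upd\star}\mid\omega]-\cT_\delta\,}^2-\normHS{\cT_\delta}^2 ,
\]
which gives $\cL_\delta^\upd(\theta,\omega)\ge-\normHS{\cT_\delta}^2$ at once and reduces the characterization of minimizers to a rank-$d$ approximation problem for $\cT_\delta$.

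\textbf{Rewriting the loss.} For the unaugmented part I would recover the identity underlying \citet[Theorem 2]{meunier2025demystifying}: writing $\Psi_\theta^\upd\Phi_\theta^{\upd\star}=\sum_{i=1}^d\psi_{\theta,i}^\upd\otimes\varphi_{\theta,i}^\upd$, expanding the square, and using $\langle g\otimes f,\cT\rangle_{\mathrm{HS}}=\langle g,\cT f\rangle_{L_2(Z)}$ together with $\cT f=\E[f(X)\mid Z]$, one gets $\E_X\E_Z[(\varphi_\theta^\upd(X)\transpose\psi_\theta^\upd(Z))^2]=\normHS{\Psi_\theta^\upd\Phi_\theta^{\upd\star}}^2$ and $\E[\varphi_\theta^\upd(X)\transpose\psi_\theta^\upd(Z)]=\langle\Psi_\theta^\upd\Phi_\theta^{\upd\star},\cT\rangle_{\mathrm{HS}}$, hence $\cL_0^\upd(\theta)=\normHS{\Psi_\theta^\upd\Phi_\theta^{\upd\star}-\cT}^2-\normHS{\cT}^2$. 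For the augmentation terms in \Cref{eq:loss_reg_extended}, note that $\Psi_\theta^\upd\omega=\sum_{i=1}^d\omega_i\psi_{\theta,i}^\upd\in L_2(Z)$ satisfies $\norm{\Psi_\theta^\upd\omega}{L_2(Z)}^2=\omega\transpose C_{\psi_\theta^\upd}\omega$ by definition of $C_{\psi_\theta^\upd}$, while the tower property with $r_0=\E[Y\mid Z]$ gives $\langle\Psi_\theta^\upd\omega,\,r_0\rangle_{L_2(Z)}=\omega\transpose\E[Y\psi_\theta^\upd(Z)]$; therefore
\[
-2\delta\,\E[Y\psi_\theta^\upd(Z)]\transpose\omega+\omega\transpose C_{\psi_\theta^\upd}\omega=\norm{\Psi_\theta^\upd\omega-\delta r_0}{L_2(Z)}^2-\delta^2\norm{r_0}{L_2(Z)}^2 .
\]

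\textbf{Assembling and applying Eckart--Young.} I would then use that the Hilbert--Schmidt norm of an operator $A=[A_1\mid a_2]$ out of a product space $L_2(X)\times\R$ decomposes orthogonally as $\normHS{A}^2=\normHS{A_1}^2+\norm{a_2}{L_2(Z)}^2$ (evaluate the defining series on the orthonormal basis of $L_2(X)\times\R$ obtained by appending $(0,1)$ to an orthonormal basis of $L_2(X)$). Since $\cT_\delta=[\cT\mid\delta r_0]$ and $\Psi_\theta^\upd[\Phi_\theta^{\upd\star}\mid\omega]=[\Psi_\theta^\upd\Phi_\theta^{\upd\star}\mid\Psi_\theta^\upd\omega]$, applying this with $A_1-B_1=\Psi_\theta^\upd\Phi_\theta^{\upd\star}-\cT$ and $a_2-b_2=\Psi_\theta^\upd\omega-\delta r_0$, together with the two rewritings above and $\normHS{\cT_\delta}^2=\normHS{\cT}^2+\delta^2\norm{r_0}{L_2(Z)}^2$, yields the master identity; nonnegativity of the first term gives the lower bound. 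For the equality statement, I would observe that as $(\theta,\omega)$ range over the parameter space the operators $\Psi_\theta^\upd[\Phi_\theta^{\upd\star}\mid\omega]$ realize exactly the operators $L_2(X)\times\R\to L_2(Z)$ of rank at most $d$ (the map factors through $\R^d$, and conversely any such operator admits a factorization of this form). Minimizing $\cL_\delta^\upd$ is thus the best rank-$d$ approximation problem for $\cT_\delta$, which is compact (a finite-rank perturbation of the compact $\cT$); by the Schmidt--Mirsky theorem in Hilbert--Schmidt norm the minimum value is $\normHS{\cT_\delta-\cT_\delta^\upd}^2-\normHS{\cT_\delta}^2=-\normHS{\cT_\delta^\upd}^2$, attained precisely when $\Psi_\theta^\upd[\Phi_\theta^{\upd\star}\mid\omega]=\cT_\delta^\upd$, using the gap $\sigma_{*,d}>\sigma_{*,d+1}$ that makes $\cT_\delta^\upd$ well defined (and this minimum coincides with the stated bound $-\normHS{\cT_\delta}^2$ exactly when $\cT_\delta$ has rank at most $d$).

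\textbf{Main obstacle.} The steps rewriting the loss are routine manipulations with covariance operators and conditional expectations; the two points requiring care are the infinite-dimensional setting --- verifying compactness of $\cT_\delta$ and invoking the Hilbert--Schmidt (Schmidt--Mirsky) version of Eckart--Young rather than the finite-dimensional statement --- and the structural fact that the parametrized family $\{\Psi_\theta^\upd[\Phi_\theta^{\upd\star}\mid\omega]\}$ exhausts all rank-$\le d$ operators $L_2(X)\times\R\to L_2(Z)$, so that no implicit constraint raises the attainable minimum and the characterization of minimizers holds exactly, up to the intrinsic non-uniqueness of singular vectors when singular values coincide.
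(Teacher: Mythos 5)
Your proof is correct and takes essentially the same approach as the paper: both establish the master identity $\cL_{\delta}^\upd(\theta,\omega) = \normHS{\Psi^\upd_{\theta}[\Phi^{\upd\star}_{\theta}\mid\omega] - \cT_{\delta}}^2 - \normHS{\cT_{\delta}}^2$ and then conclude with Eckart--Young--Mirsky, the only difference being bookkeeping --- the paper expands the trace terms directly via the augmented feature operator $[\Phi^{\upd\star}_{\theta}\mid\omega]$, while you assemble the identity from the known expression for $\cL_0^\upd$ plus completing the square in $\omega$ and the orthogonal block decomposition of the Hilbert--Schmidt norm on $L_2(X)\times\R$. Your parenthetical remark that the stated constant $-\normHS{\cT_{\delta}}^2$ is literally attained only when $\cT_{\delta}$ has rank at most $d$ (so the ``achieved'' clause should be read as attaining the minimum over the rank-$d$ family, characterized by equality with $\cT_{\delta}^\upd$ under the gap $\sigma_{*,d}>\sigma_{*,d+1}$) is a careful reading that matches the paper's intent.
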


Intuitively, augmenting the operator with outcome information amplifies the components of $h_0$ that would otherwise lie in the low singular value region of $\mathcal T$, thereby improving their alignment with the top spectral features of the augmented operator, as illustrated in \Cref{fig:realignment}. \textcolor{black}{This operator characterization is the bridge to our estimator guarantees: in \Cref{sec:analysis}, we show that approximating the leading singular subspaces of $\cT_\delta$ controls the approximation term in the downstream 2SLS error bound. The same section formalizes this effect by bounding the distance between the learned subspaces and the signal subspaces of $\mathcal T$.} One can also encourage the learned features to retain predictive information about additional aspects of the outcome by extending the augmentation to multiple functions of $Y$, such as higher conditional moments $\mathbb E[Y^k\mid Z]$. We discuss this higher-rank extension in \Cref{sec:high_rank_perturbations}. The learning objective and optimality characterization via truncated SVD remain unchanged. A complete theoretical analysis of general rank $K$ perturbations requires further development of the perturbation framework and is left for future work.

\begin{figure}[t]
  \centering
  \begin{tikzpicture}[scale=.75,
      axislw/.initial=1.0pt,
      crosssize/.initial=0.08,
      circsize/.initial=10pt,
      bar/.style={line width=\pgfkeysvalueof{/tikz/axislw}, cap=round, color=black!70},
      axis/.style={->, line width=\pgfkeysvalueof{/tikz/axislw}, color=black!80, shorten >=1pt},
      bluecircle/.style={
        circle, draw=blue5, line width=1.4pt, minimum size=\pgfkeysvalueof{/tikz/circsize},
        inner sep=2pt, fill=none
      },
      xmark/.style={line width=1pt, color=black!70},
      redline/.style={red5, dashed, line width=1.2pt},
      brace/.style={red5, line width=1.2pt, decorate, decoration={brace, amplitude=6pt}},
      label/.style={font=\large}
  ]

      \def\ymax{2.9}

      \def\leftGraphData{
          0.6/0.8/0,
          1.2/0.2/0,
          1.8/0.5/0,
          3/1.2/1,
          3.6/2.2/1,
          4.2/1.6/1
      }
      \def\rightGraphData{
          0.6/2.4/1,
          1.2/.9/1,
          1.8/1.3/1,
          3.0/0.2/0,
          3.6/0.8/0,
          4.2/0.5/0
      }

      \begin{scope}[shift={(0,0)}]
          \draw[axis] (0,0) -- (4.8,0) node[right] {$i$};
          \draw[axis] (0,0) -- (0,\ymax) node[right] {$\langle h_0, v_i \rangle$};

          \foreach [count=\n] \x/\y/\isblue in \leftGraphData {
              \draw[bar] (\x,0) -- (\x,\y);
              \ifnum\isblue=1
                  \node[bluecircle] (cL\n) at (\x,\y) {};
                  \draw[xmark] ($(cL\n.center)+(-\pgfkeysvalueof{/tikz/crosssize},-\pgfkeysvalueof{/tikz/crosssize})$) -- ($(cL\n.center)+(\pgfkeysvalueof{/tikz/crosssize},\pgfkeysvalueof{/tikz/crosssize})$);
                  \draw[xmark] ($(cL\n.center)+(-\pgfkeysvalueof{/tikz/crosssize},\pgfkeysvalueof{/tikz/crosssize})$) -- ($(cL\n.center)+(\pgfkeysvalueof{/tikz/crosssize},-\pgfkeysvalueof{/tikz/crosssize})$);
              \else
                  \draw[xmark] ($(\x,\y)+(-\pgfkeysvalueof{/tikz/crosssize},-\pgfkeysvalueof{/tikz/crosssize})$) -- ($(\x,\y)+(\pgfkeysvalueof{/tikz/crosssize},\pgfkeysvalueof{/tikz/crosssize})$);
                  \draw[xmark] ($(\x,\y)+(-\pgfkeysvalueof{/tikz/crosssize},\pgfkeysvalueof{/tikz/crosssize})$) -- ($(\x,\y)+(\pgfkeysvalueof{/tikz/crosssize},-\pgfkeysvalueof{/tikz/crosssize})$);
              \fi
          }

          \draw[redline] (1.8,0) -- (1.8,\ymax);

          \draw[brace] (1.9,-0.15) -- (0.5,-0.15)
              node[pos=0, anchor=west, xshift=-50pt,yshift=-14pt, label, color=red5, font=] {truncated SVD $\mathcal{T}_3$};
          \node[anchor=south west, font=, text=blue5] at (2.2, \ymax-0.4) {misaligned};

          \node[font=\Large, color=black!60] at (2.45, 0.3) {$\cdots$};
      \end{scope}

      \begin{scope}[shift={(5.5,0)}]
          \draw[axis] (0,0) -- (4.8,0) node[right] {$i$};
          \draw[axis] (0,0) -- (0,\ymax) node[right] {$\langle h_0, \tilde{v}_i \rangle$};

          \foreach [count=\n] \x/\y/\isblue in \rightGraphData {
              \draw[bar] (\x,0) -- (\x,\y);
              \ifnum\isblue=1
                  \node[bluecircle] (cR\n) at (\x,\y) {};
                  \draw[xmark] ($(cR\n.center)+(-\pgfkeysvalueof{/tikz/crosssize},-\pgfkeysvalueof{/tikz/crosssize})$) -- ($(cR\n.center)+(\pgfkeysvalueof{/tikz/crosssize},\pgfkeysvalueof{/tikz/crosssize})$);
                  \draw[xmark] ($(cR\n.center)+(-\pgfkeysvalueof{/tikz/crosssize},\pgfkeysvalueof{/tikz/crosssize})$) -- ($(cR\n.center)+(\pgfkeysvalueof{/tikz/crosssize},-\pgfkeysvalueof{/tikz/crosssize})$);
              \else
                  \draw[xmark] ($(\x,\y)+(-\pgfkeysvalueof{/tikz/crosssize},-\pgfkeysvalueof{/tikz/crosssize})$) -- ($(\x,\y)+(\pgfkeysvalueof{/tikz/crosssize},\pgfkeysvalueof{/tikz/crosssize})$);
                  \draw[xmark] ($(\x,\y)+(-\pgfkeysvalueof{/tikz/crosssize},\pgfkeysvalueof{/tikz/crosssize})$) -- ($(\x,\y)+(\pgfkeysvalueof{/tikz/crosssize},-\pgfkeysvalueof{/tikz/crosssize})$);
              \fi
          }

          \draw[redline] (1.8,0) -- (1.8,\ymax);

          \draw[brace] (1.9,-0.15) -- (0.5,-0.15)
              node[pos=0, anchor=west, xshift=-50pt,yshift=-14pt, label, color=red5, font=] {truncated SVD $\tilde{\mathcal{T}}_3$};
          \node[anchor=south west, font=, text=blue5] at (2.2, \ymax-0.4) {aligned};

          \node[font=\Large, color=black!60] at (2.45, 0.3) {$\cdots$};
      \end{scope}
  \end{tikzpicture}

  \caption{In case of severe misalignment of $h_0$ and $\mathcal{T}$ (left), an ideal solution would aim to find another operator $\tilde{\mathcal{T}}$ whose top singular functions $\tilde v_i$ capture the signal in $h_0$ (right).}
  \label{fig:realignment}
\end{figure}
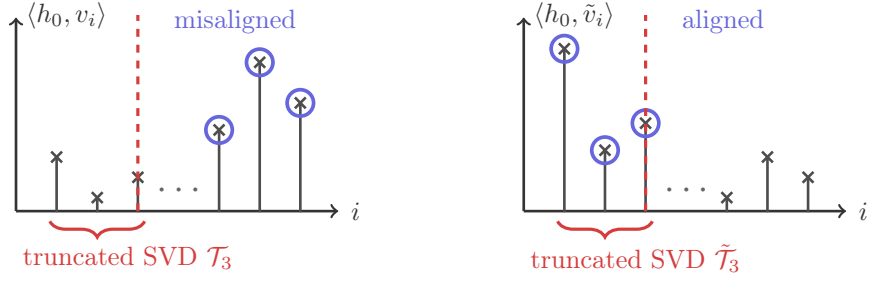
\section{Analysis}
\label{sec:analysis}

We now present the statistical guarantees for our estimator of $h_0$. Our first result is a non-asymptotic, high-probability error bound for the 2SLS estimator that is agnostic to the choice of representation $(\varphi_{\theta}^\upd,\psi_{\theta}^\upd)$. This improves upon prior results, such as \citet{chen2018optimal}, which typically provide guarantees in expectation. To this end, we introduce three standard assumptions. The first requires the whitened features to be uniformly bounded.

\begin{asst}[Representation Boundedness] 
    \label{asst:rep_bounds} Denoting the covariances of the instrument and feature representations by  $C_{Z,\theta} = \E[\psi^\upd_\theta(Z) \psi^\upd_\theta(Z)\transpose]$ and $C_{X,\theta} = \E[\varphi^\upd_\theta(X) \varphi_\theta^\upd(X)\transpose]$, respectively, there exists $\rho \geq 1$ such that representations satisfy
\begin{equation*}
\begin{aligned}
&\esssup_{x \sim \mathbb{P}_X}\max_{j\in [d]}
\left| \big(C_{X,\theta}^{-1/2}\varphi_{\theta}^\upd(x)\big)_j \right| \\
&\qquad\bigvee\;
\esssup_{z \sim \mathbb{P}_Z}\max_{j\in [d]}
\left| \big(C_{Z,\theta}^{-1/2}\psi_{\theta}^\upd(z)\big)_j \right|
\;\le\; \rho.
\end{aligned}
\end{equation*}
\end{asst}
Let \(C_{ZX,\theta} = \mathbb{E}[\psi_\theta^\upd(Z) {\varphi_\theta^\upd(X)}\transpose]\) be the cross-covariance between the instrument and feature representations. The measure of ill-posedness is defined as $c_{\varphi_\theta^\upd,\psi_\theta^\upd} \doteq \sigma_d(C_{Z,\theta}^{-1/2}C_{ZX,\theta}C_{X,\theta}^{-1/2})$, where $\sigma_d(\cdot)$ is the $d$-th singular value.
\begin{asst}[Measure of ill-posedness]\label{asst:eigenvalue}$c_{\varphi_\theta^\upd,\psi_\theta^\upd} > 0$.
\end{asst}
The measure of ill-posedness is equal to $\sigma_d(\Pi_{\Psi_\theta^\upd}\cT \Pi_{\Phi_\theta^\upd})$ and captures the stability of the inverse problem when restricted to the learned feature spaces. Its  positivity implies non-singularity of $C_{ZX,\theta}$, which guarantees that   
there exists a vector \(\beta_\theta \in \mathbb{R}^d\) such that
$
h_\theta(x) = \varphi_\theta^\upd(x)\transpose \beta_\theta
$
satisfies the instrumental moment condition:
$C_{ZX,\theta} \beta_\theta = \mathbb{E}[Y \psi_\theta^\upd(Z)] = \mathbb{E}[r_0(Z)\psi_\theta^\upd(Z)] = \mathbb{E}[h_0(X) \psi_\theta^\upd(Z)]$. \Cref{asst:eigenvalue} is reasonable provided that some conditions on the eigenvalues of $\cT$ are satisfied and our features capture an accurate representation of the singular spaces of $\cT_d$ (see \Cref{prop:ill_possedness_approx} in the appendix).

Our final assumption concerns the tail behavior of the model's error terms. For the precise definition of sub-Gaussian random variables, we refer to \Cref{def:subgaussian} in Appendix \ref{sec:stat_aux_result}.

\begin{asst}[Sub-Gaussian distributions]
\label{asst:subgaussian}
The model noise $U = Y - h_0(X)$ and the function approximation error $(h_0 - h_\theta)(X)$ are sub-Gaussian random variables.
\end{asst}

With these conditions, we consider the following 2SLS estimator,
$$\widehat{h}_\theta(x) = \varphi_\theta^\upd(x)\transpose \widehat{\mathbb{E}}_n[\psi_\theta^\upd(Z) {\varphi_\theta^\upd}(X)\transpose]^{-1} \widehat{\mathbb{E}}_n[Y \,\psi_\theta^\upd(Z)],$$
and state our main result below.
\begin{theorem}
    \label{thm:main_stat_result}
    Let Assumptions \ref{asst:rep_bounds}-\ref{asst:subgaussian} be satisfied. Given $\tau \in (0,1)$, let $n \geq 16 \, d\, \rho^2\,\log^2(4d/\tau)c_{\varphi_\theta^\upd,\psi_\theta^\upd}^{-2}$. Then there exists an absolute constant $C>0$ such that with probability at least $1 - \tau$,
\begin{equation*}
\begin{aligned}
\| \widehat h_{\theta} - h_0 \|_{L_2(X)}
&\le C\Big(
\| h_{0} - h_{\theta} \|_{L_2(X)}
\\
&+ \frac{1}{c_{\varphi_\theta^\upd,\psi_\theta^\upd}}
\sqrt{\frac{d}{n}}\,
\sqrt{ \sigma^2_{U} + \frac{\rho^2}{n} }\,
\log\frac{4}{\tau}
\Big),
\end{aligned}
\end{equation*}
where $\sigma^2_{U}$ is the noise variance.
\end{theorem}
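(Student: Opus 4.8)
The plan is to decompose the error of the 2SLS estimator into an \emph{approximation} part and an \emph{estimation (noise)} part, and to control the latter via concentration of the empirical covariance and cross-covariance operators in the whitened coordinates. Write $\beta_\theta$ for the population moment solution (so $h_\theta = \varphi_\theta^\upd{}\transpose\beta_\theta$ and $C_{ZX,\theta}\beta_\theta = \E[Y\psi_\theta^\upd(Z)]$), and let $\widehat\beta_\theta = \widehat C_{ZX,\theta}^{-1}\widehat\E_n[Y\psi_\theta^\upd(Z)]$ be its empirical counterpart. Since $\|\widehat h_\theta - h_0\|_{L_2(X)} \le \|h_0-h_\theta\|_{L_2(X)} + \|h_\theta-\widehat h_\theta\|_{L_2(X)}$, and $\|h_\theta-\widehat h_\theta\|_{L_2(X)}^2 = (\widehat\beta_\theta-\beta_\theta)\transpose C_{X,\theta}(\widehat\beta_\theta-\beta_\theta) = \|C_{X,\theta}^{1/2}(\widehat\beta_\theta-\beta_\theta)\|_{\ell_2}^2$, the task reduces to bounding $\|C_{X,\theta}^{1/2}(\widehat\beta_\theta-\beta_\theta)\|_{\ell_2}$. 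I would work throughout with the whitened quantities $\widetilde\varphi = C_{X,\theta}^{-1/2}\varphi_\theta^\upd$, $\widetilde\psi = C_{Z,\theta}^{-1/2}\psi_\theta^\upd$, and the whitened cross-covariance $M \doteq C_{Z,\theta}^{-1/2}C_{ZX,\theta}C_{X,\theta}^{-1/2}$, whose smallest singular value is exactly $c_{\varphi_\theta^\upd,\psi_\theta^\upd}$ by Assumption~\ref{asst:eigenvalue}; in these coordinates the whitened features have unit covariance and entrywise sup-norm at most $\rho$ by Assumption~\ref{asst:rep_bounds}.

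The core algebraic identity is $\widehat\beta_\theta - \beta_\theta = \widehat C_{ZX,\theta}^{-1}\big(\widehat\E_n[Y\psi_\theta^\upd(Z)] - \widehat C_{ZX,\theta}\beta_\theta\big)$, and since $\E[Y\psi_\theta^\upd(Z)] = C_{ZX,\theta}\beta_\theta$ and $Y - h_\theta(X) = U + (h_0-h_\theta)(X)$ on the moment, the numerator is an empirical mean of the mean-zero vectors $\psi_\theta^\upd(z_i)\,(y_i - \varphi_\theta^\upd(x_i)\transpose\beta_\theta)$. Passing to whitened coordinates, I need: (i) a lower bound on $\sigma_d(\widehat M)$ via a matrix Bernstein/Chernoff bound on $\|\widehat M - M\|$ together with the boundedness constant $\rho$ — this is where the sample-size condition $n \gtrsim d\rho^2\log^2(4d/\tau)\,c^{-2}$ enters, guaranteeing $\sigma_d(\widehat M) \ge \tfrac12 c_{\varphi_\theta^\upd,\psi_\theta^\upd}$ with the stated probability, so $\|\widehat M^{-1}\| \le 2/c_{\varphi_\theta^\upd,\psi_\theta^\upd}$; and (ii) a high-probability bound on $\big\|\widehat\E_n[\widetilde\psi(Z)(U + (h_0-h_\theta)(X))]\big\|_{\ell_2}$. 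For (ii) I would use a Bernstein-type vector concentration inequality for sums of sub-Gaussian-times-bounded random vectors (Assumptions~\ref{asst:rep_bounds} and~\ref{asst:subgaussian}), which produces a leading term of order $\sqrt{d/n}\,\sigma_U$ plus a higher-order term of order $\rho\sqrt{d}/n$ (coming from the sub-Gaussian product tails and the boundedness of $\widetilde\psi$), matching the $\sqrt{\sigma_U^2 + \rho^2/n}$ factor in the statement. Finally, $\|C_{X,\theta}^{1/2}(\widehat\beta_\theta-\beta_\theta)\|_{\ell_2} = \|\widetilde M^{-1}\,\widehat\E_n[\widetilde\psi(Y - \varphi\transpose\beta_\theta)]\|_{\ell_2}$ (up to relating $\widehat M$ to $\widehat C_{ZX,\theta}$ through the population whitening, which is harmless here since whitening uses population covariances and the $d{=}p$ simplification applies), so combining (i) and (ii) and absorbing constants gives the claimed bound.

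The main obstacle I anticipate is step (ii): getting the \emph{right} form of the noise term, i.e.\ the $\sqrt{\sigma_U^2 + \rho^2/n}$ factor rather than a cruder $\sigma_U + \rho/\sqrt n$, requires a careful Bernstein argument that separates the variance proxy (yielding $\sigma_U^2 d/n$) from the almost-sure/sub-exponential deviation term (yielding $\rho^2 d/n^2$ after squaring), and one must be attentive that the relevant ``noise'' is $U + (h_0-h_\theta)(X)$ rather than $U$ alone — the sub-Gaussianity of the approximation error in Assumption~\ref{asst:subgaussian} is precisely what lets its contribution be folded into the $\|h_0-h_\theta\|_{L_2(X)}$ term (with an absolute constant $C$) rather than appearing separately. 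A secondary technical point is handling the union bound over the two high-probability events (the $\sigma_d(\widehat M)$ event and the noise-concentration event) and tracking how the $\log(4d/\tau)$ in the sample-size condition versus the $\log(4/\tau)$ in the bound arise from the matrix versus vector concentration steps respectively; both are routine once the correct inequalities are invoked. The representation-agnostic nature of the bound is automatic since nothing in the argument uses how $(\varphi_\theta^\upd,\psi_\theta^\upd)$ were obtained — only Assumptions~\ref{asst:rep_bounds}--\ref{asst:subgaussian}.
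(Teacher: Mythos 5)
Your proposal is correct and follows essentially the same route as the paper: triangle inequality into approximation plus estimation error, then the whitened factorization $\|C_{X,\theta}^{1/2}(\widehat\beta_\theta-\beta_\theta)\|_{\ell_2}\le \|C_{X,\theta}^{1/2}\widehat C_{ZX,\theta}^{-1}C_{Z,\theta}^{1/2}\|\cdot\|C_{Z,\theta}^{-1/2}(\widehat g-\widehat C_{ZX,\theta}\beta_\theta)\|_{\ell_2}$, controlled respectively by a matrix Bernstein bound (giving $2/c_{\varphi_\theta^\upd,\psi_\theta^\upd}$ under the stated sample-size condition) and a Pinelis--Sakhanenko-type vector Bernstein inequality with effective noise $U+(h_0-h_\theta)(X)$, exactly as in the paper's Lemmas on matrix perturbation and vector concentration. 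Your remark that the sub-Gaussian approximation error is folded into $C\|h_0-h_\theta\|_{L_2(X)}$ (with the $\log^2$ in the sample-size condition ensuring the absorbed factor is an absolute constant) matches the paper's treatment via $\sigma_{\mathrm{eff}}^2=\|h_0-h_\theta\|_{L_2(X)}^2+\sigma_U^2$.
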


This theorem provides a non-asymptotic excess-risk bound that separates the error into a deterministic approximation error $(\|h_0 - h_\theta\|)$ and a statistical error. The latter depends on the dimension $d$, sample size $n$, and the feature-dependent ill-posedness. It improves on existing guarantees \citep[Theorem B.1.][]{chen2018optimal} by holding in high probability under a sub-Gaussian assumption.

{\bf Controlling the approximation error with learned representations.}~
We now specialize the general bound from \Cref{thm:main_stat_result} to the features learned by our outcome-aware method.
In light of \Cref{prop:loss_equivalence}, recalling that the leading left and right singular subspaces of $\cT_\delta$ are the ranges of $\Psi_*^{\upd} = [\psi_{*,1}\,\vert\,\ldots|\psi_{*,d}]$ and $[\,\Phi_*^{\upd} \,\vert\, \omega_*\,]^\star$, where $\Phi_*^{\upd} = [\phi_{*,1}\,\vert\,\ldots\phi_{*,d}]$, see \Cref{eq:svd_T_delta}, the feature learning stage of our approach produces  $\psi^\upd_{\widehat{\theta}_m}$ and $\phi^\upd_{\widehat{\theta}_m}$, where the parametrization $\widehat{\theta}_m$ is learned from dataset $\tilde{\cal D}_m$. To quantify their quality, as proposed in \citet{kostic2024ncp} and \citet{meunier2025demystifying}, we use the optimality gap 
\begin{equation}\label{eq:opt_gap}
{\cal E}_d(\theta,\omega,\delta) \doteq \normop{ \cT_\delta^\upd - \Psi^\upd_{\theta} [\Phi^{\upd \star}_{\theta}\,\vert\, \omega]},
\end{equation}

With this setup, our analysis hinges on relating the singular subspaces of the augmented operator $\cT_\delta$ back to the original singular subspaces of $\cT$. To do this, we first partition the singular components of $\cT$ into a $d$-dimensional \textit{``signal''} subspace and an infinite-dimensional \textit{``noise''} subspace. Namely, let $\overline{N} \,\dot{\cup} \,\underline{N}=\N$ be the partition, where we take $|\overline{N}|=d$ spectral features for the signal. We define the signal components $s_d = \overline{V}_d\overline{V}_d^*h_0 = \sum_{i\in\overline{N}} \overline{\alpha}_i v_i$ and noise components $q_d = \underline{V}_d\underline{V}_d^*h_0 =\sum_{i\in\underline{N}}\underline{\alpha}_i v_i$, where the partition of the SVD of $\cT$ is $\cT = \overline{U}_d \overline{\Lambda}_d \overline{V}_d^* + \underline{U}_d \underline{\Lambda}_d \underline{V}_d^*$ with $\overline{\Lambda}_d={\rm diag}(\lambda_i)_{i\in\overline{N}}$. The augmented operator $\cT_\delta$ can be viewed as a perturbation of $\cT$ relative to the positioning of the signal:
\begin{equation*}
\begin{aligned}
\cT_\delta
&= [\,\cT \,\vert\, \delta \cT h_0\,] \\
&= \overline{U}_d \overline{\Lambda}_d[\,\overline{V}_d^* \,\vert\, \delta \overline{\alpha}\,]
 + \underline{U}_d \underline{\Lambda}_d[\,\underline{V}_d^* \,\vert\, 0\,]
 + [\,0 \,\vert\, \delta \underline{U}_d \underline{\Lambda}_d \underline{\alpha}\,].
\end{aligned}
\end{equation*}
This decomposition shows that if the noise $\|q_d\|=\|\underline{\alpha}\|_{\ell^2}$ is small, then $\cT_\delta$ is well-approximated by the first two terms (the noiseless part) that expose how the left and right singular subspaces of $\cT$, relative to the partition,  align with those of $\cT_\delta$. If the singular value gap between the first and second terms
\begin{equation}\label{eq:gap}
\gamma_d(\delta)= \normop{[\overline{\Lambda}_d(I+\delta^2\overline{\alpha}\,\overline{\alpha}^\top)^{1/2}]^{-1}}^{-1}-\normop{\underline{\Lambda}_d}  
\end{equation}
is positive, the dominant singular subspace of the noiseless perturbation of $\cT$ is exactly $\cR(\overline{U}_d)$. Therefore, 
by carefully applying perturbation results, we are able to control the differences in orthogonal projections
\begin{align*}
\normop{\Pi_{\overline{U}_d} - \Pi_{{\Psi}_{\widehat{\theta}_m}^\upd}} &\leq \normop{\Pi_{\overline{U}_d} - \Pi_{\Psi_*^{\upd}}} \\
&+ \normop{\Pi_{\Psi_*^{\upd}}
-\Pi_{{\Psi}_{\widehat{\theta}_m}^\upd}}
\end{align*}
to align the appropriate left subspaces. The first term depends on the interplay between the parameter $\delta$, the positioning, and the size of the signal, and is bounded by $\delta \|q_d\|_{L_2(X)}\, /\, \gamma_d(\delta)$. The second term is the error from learning the features of $\cT_\delta$ from dataset $\tilde{\cal D}_m$ with our representation learning method, quantified by the optimality gap ${\cal E}_d(\widehat{\theta}_m,\widehat{\omega}_m,\delta)$ given in \Cref{eq:opt_gap}. A similar decomposition holds for the right subspaces. By bounding these components, we can control the total approximation error. A detailed proof is provided in \Cref{thm:main_approx_result} in \Cref{app:approx_error}; in the following, we present the consequences for the ``good'' and ``bad'' scenarios from \citet{meunier2025demystifying}.

{\bf Learning in the \textit{``good''} scenario.}~When the signal $s_d$ is spanned by the top-$d$ singular functions of $\cT$ (i.e., $\overline{N}=\{1,\dots,d\}$), our method works efficiently even with $\delta=0$. In this case, the spectral gap $\gamma_d(0) = \lambda_d - \lambda_{d+1}$ is positive by assumption. A direct corollary of Theorems~\ref{thm:main_stat_result}, and \ref{thm:main_approx_result} yields\loose 
\begin{equation*}
\| h_0 - \widehat{h}_{\theta}\|_{L_2} \lesssim \Vert q_d\Vert_{L_2} +  \frac{{\cal E}_d(\widehat{\theta}_m,\widehat{\omega}_m,0)}{\lambda_d} +\frac{1}{\lambda_d}\sqrt{\frac{d}{n}}\log \tau^{-1}, 
\end{equation*}
with probability at least $1 - \tau$. This result is the high-probability version of \citet[Theorem 3]{meunier2025demystifying}. 

{\bf Learning in the \textit{``bad''} scenario.}~Our outcome-aware method is designed to succeed even when the structural function $h_0$ is highly aligned with a ``bad'' singular function $v_k$ (for large $k$). \citet{meunier2025demystifying} showed that standard SpecIV would require learning a high-dimensional feature space of dimension $d \geq k$ (out of which $k-1$ are spurious). In contrast, our method can isolate the relevant signal by setting $d=1$ with the signal concentrating mostly on the space defined by $v_k$ ($\overline{N}=\{k\}$). 
As shown in \Cref{app:approx_error}, choosing a $\delta$ large enough guarantees that the one-dimensional spectral gap $\gamma_1(\delta)$ becomes positive. Applying \Cref{thm:main_approx_result} of \Cref{app:approx_error} in this setting shows:
$$
\| h_0 - \widehat{h}_{\theta}\|_{L_2} \lesssim  \frac{1}{\lambda_k^2}\frac{\|q_1\|_{L_2}}{ \|s_1\|_{L_2}} +  
\frac{{\cal E}_1(\widehat{\theta}_m,\widehat{\omega}_m,\delta)}{\lambda_k}\, +\frac{\log \tau^{-1}}{\lambda_k\sqrt{n}}, 
$$ 
with probability at least $1 - \tau$, indicating that whenever signal-to-noise ratio dominates the decay $\|s_1\|_{L_2(X)} /\|q_1\|_{L_2(X)} \gg \lambda_k^{-2}$, our method can recover the structural function with one feature, while for the standard SpecIV one would need to learn $k$ of them, as illustrated in \Cref{fig:realignment}.

\textcolor{black}{The guarantees above should be read as conditional on successful representation learning. In particular, the downstream 2SLS error is small when the learned features have small population optimality gap ${\cal E}_d(\widehat{\theta}_m,\widehat{\omega}_m,\delta)$, together with the remaining approximation and sampling terms. \Cref{prop:excess_loss_to_operator_gap} in the Appendix shows that this operator gap is controlled by the population excess augmented loss
\[
\Delta_\delta(\theta,\omega)
\doteq
\cL_\delta^\upd(\theta,\omega)
+
\normHS{\cT_\delta^\upd}^2
\]
If $\sigma_{*,d}>\sigma_{*,d+1}$, then
\[
{\cal E}_d(\theta,\omega,\delta)
\le
(1-\rho_{\delta,d})^{-1/2}
\sqrt{\Delta_\delta(\theta,\omega)},
\]
where $\rho_{\delta,d} \doteq \sigma_{*,d+1}/\sigma_{*,d}$. This connection between the population contrastive excess loss and the operator optimality gap is not specific to the augmented operator: taking $\delta=0$ yields the corresponding result for standard SpecIV \citep{meunier2025demystifying}, and to our knowledge, this explicit loss-to-operator-gap bound is new even in that setting. Proving that empirical DNN training drives $\Delta_\delta$ to zero would require an architecture-specific analysis of the spectral contrastive objective's optimization and generalization, which we leave open.}\loose

\section{Experiments}
\label{sec:experiments}

We first illustrate the utility of our method on a synthetic example in which all parameters of $\gT$ are controlled. To further support our claims, we benchmark outcome-aware spectral learning on two challenging benchmarks based on the dSprites dataset \citep{dsprites17}. \textcolor{black}{Finally, we include an Off-Policy Evaluation (OPE) experiment in the context of reinforcement learning, which demonstrates that the method remains robust and competitive in demanding environments.} We find that a small positive $\delta$ always leads to improved performance. In most challenging cases, the resulting method outperforms standard SpecIV ($\delta=0$) by a wide margin.

\subsection{Synthetic data}

Following \citet{meunier2025demystifying}, we generate a dataset of tuples $(Z,X,Y)$ from a conditional expectation operator $\gT = \mathds{1}_Z \otimes \mathds{1}_X + \sum_{i=1}^{d-1} \sigma_i u_i \otimes v_i$ with explicit $\sigma_i,v_i,u_i$. We take $\sigma_i$ to decay linearly from a fixed $\sigma_1$ to $\sigma_{d-1} = c_\sigma\sigma_{1}$, with $c_\sigma \in [0, 1]$. The families $(u_i), (v_i)$ are random orthonormal bases of the span of $(\sin(\ell \cdot x))_{\ell=1}^{d-1}$ on $[-\pi,\pi]$. By setting the structural function $h_0=\sum_{i=1}^{d-1}\alpha_iv_i$
with the constraint $\|\alpha\|_{\ell_2}=1$, and having the coefficients $\alpha_i$ change linearly in $i$, we are able to control the alignment of $h_0$ with the spectrum of $\gT$. We parametrize the rate at which $\alpha_i$ change with $c_\alpha \doteq \alpha_{d-1}/\alpha_1$.\loose

{\bf Synthetic data results.} \Cref{fig:partial_synthetic_MSEs} shows the distribution $\|\widehat{h}_\theta-h_0\|^2$ for different values of $\delta$, normalized so that for each $c_\alpha$, the mean loss for $\delta=0$ equals 1. The reported figures show how the losses change relative to the baseline performance of SpecIV. In cases of very poor spectral alignment, corresponding to $c_\alpha=5.0$, increasing $\delta$ leads to improvement in the IV regression loss even when the singular values decay quickly. Moreover, when $c_\sigma$ is sufficiently large, that is more singular functions can be learned easily, we observe improvement even on a very well-aligned case when $c_\alpha=0.2$. A more extensive evaluation of how the benefit of using $\delta$ changes with the rate of decay of singular values can be found in \Cref{sec:synthetic_models}.

 \begin{figure}[htbp]
 \vspace{-0.2cm}
    \centering
    \begin{minipage}{0.24\textwidth}
        \centering
        \includegraphics[width=\textwidth]{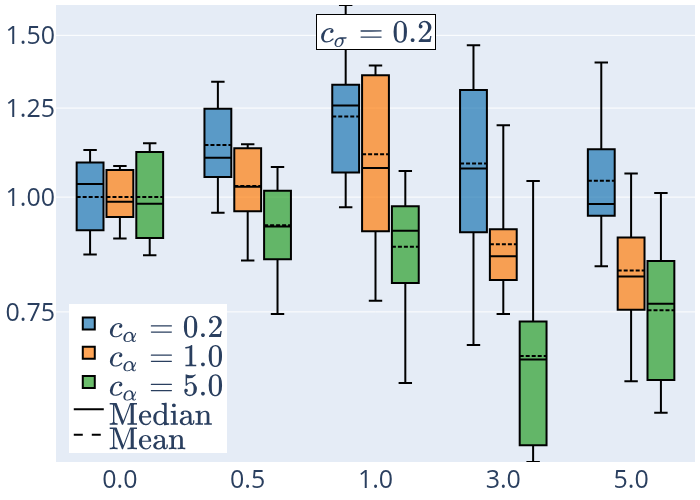}
        \label{fig:mse_sv0.2_partial}
    \end{minipage}\hfill
    \begin{minipage}{0.24\textwidth}
        \centering
        \includegraphics[width=\textwidth]{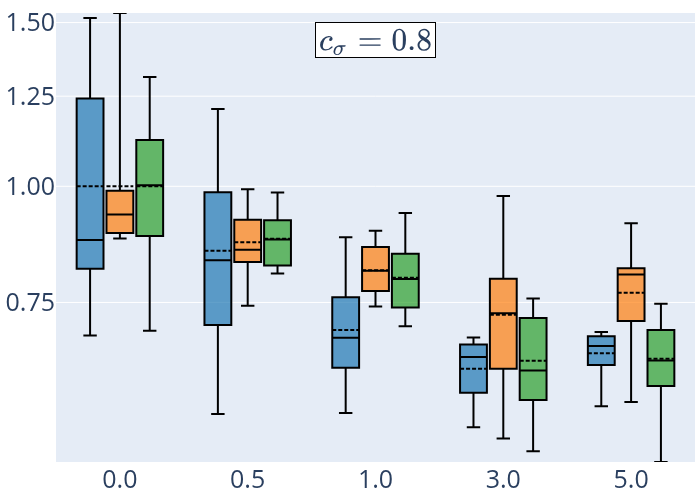}
        \label{fig:mse_sv0.8_partial}
    \end{minipage}
    \caption{Distributions of relative IV regression MSEs ($\|\widehat{h}_\theta-h_0\|^2$) for the synthetic example with $\delta\in\{0,0.5,1.0,3.0,5.0\}$ and $c_\sigma\in\{0.2,0.8\}$}
    \label{fig:partial_synthetic_MSEs}
    \vspace{-0.2cm}
\end{figure}

\subsection{dSprites data}

The original dSprites dataset of \citet{dsprites17} consists of $64\times64$ noisy images (sprites) of hearts, squares, and ellipses with varying position, size, and orientation. In the standard IV benchmarking setting, as described in, \eg, \citet{sun25speciv}, only the heart images are used. The structural function takes the form 
\[
    h_0(x)=(\|A\circ X\|^2-3000)/500,
\]
where $A_{ij}=|32-j|/32$ measures the distance from the central vertical bar in the image and $\circ$ denotes the pointwise (Hadamard) product. 
The instrumental variable is defined as $Z=$ (sprite orientation, sprite $x$-position, sprite scale), and the outcome is $$Y=h_0(X)+32((\text{sprite $y$-position)}-32) + \varepsilon,$$ $\varepsilon\sim\mathcal N(0,0.5)$.
For reasons discussed below, we shall refer to this $h_0$ as $h_\text{old}$.

{\bf New structural function.}
As first argued in \citet{meunier2025demystifying} and further discussed in \Cref{sec:dsprites_is_aligned}, the standard dSprites benchmark is an instance of the ``good'' case where $h_0$ is well-aligned with the leading singular functions of $\gT$. Hence, we propose an alternative, more challenging structural function. We refer to the new function as $h_\text{new}$. It is based on sprite images of ellipses, as opposed to the hearts in the original case, and approximates the ellipse's orientation, which we expect to be a property that is only recovered from singular functions associated to small singular values of $\gT$. The details of our argument and construction can be found in \Cref{sec:dsprites_is_aligned}.

\begin{figure}[htbp]
\vspace{-0.2cm}
    \centering
    \begin{minipage}{0.24\textwidth}
        \centering
        \includegraphics[width=\textwidth]{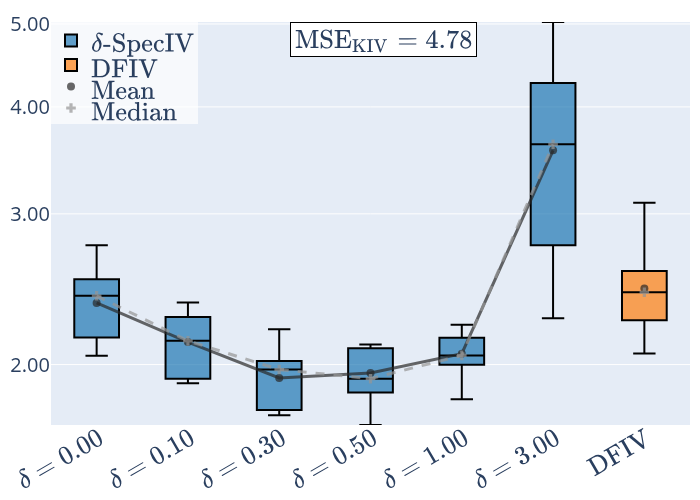}
        \caption*{}
        \label{fig:dsprite_old_losses}
    \end{minipage}\hfill
    \begin{minipage}{0.24\textwidth}
    
        \centering
        \includegraphics[width=\textwidth]{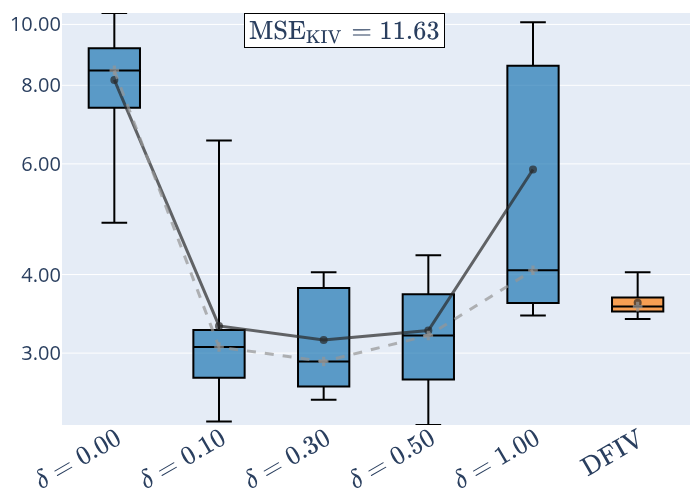}
        \caption*{}
        \label{fig:dsprite_new_losses}
    \end{minipage}
    \caption{Distribution of $\|\widehat h_\theta-h_0\|^2$ on $h_\text{old}$ (\textbf{left}) and $h_\text{new}$ (\textbf{right}) evaluated for a range of $\delta$ values, compared to those attained by DFIV and KIV \citep{singh2019kernel}. Our method is evaluated on 9 independently fitted models with identical hyperparameters for each $x$-axis value. 16 DFIV models were fitted in both settings.}
    \label{fig:dsprite_losses}
    \vspace{-0.2cm}
\end{figure}

{\bf Experiment results.}
We compare our method to DFIV \citep{xu2020learning}, which is currently the most competitive method to benchmark against.
In the setting of $h_\text{old}$ we observe an average 20\% improvement from selecting a small positive $\delta$ over using standard SpecIV. The standard benchmark is well-aligned but  there is still some benefit in this setting. By comparison, for $h_\text{new}$, we see that vanilla spectral learning ($\delta=0$) severely underperforms DFIV. This is in line with the fact that the top of the eigenspectrum of $\gT$ is not an optimal set of features for the downstream problem. Increasing $\delta$ allows the model to tune the learned features, increasing the projection of $h_\text{new}$ onto their span, and hence match or slightly exceed the performance of DFIV. 
Since good performance is contingent on the choice of $\delta$, we investigate strategies for $\delta$ selection in \Cref{sec:delta_selection}.

\subsection{Off-Policy Evaluation}

To further illustrate the applicability of our augmented method, we showcase its performance in Off-Policy Evaluation (OPE). OPE is a fundamental problem in reinforcement learning that can be approached via NPIV \citep{chen2022ivForPolicyEval}, but poses a significant challenge for standard SpecIV because it involves a non-compact operator that cannot be learned through the usual contrastive framework. Instead, one has to learn another operator amenable to spectral feature learning while iteratively updating the outcome $Y$, which SpecIV is agnostic to. Following the experimental pipeline of \citet{chen2022ivForPolicyEval}, we showcase scenarios where our proposed modification significantly improves SpecIV and performs comparably well to state-of-the-art methods for NPIV applied to OPE \citep{xu2020learning,chen2022ivForPolicyEval,petrulionyte2024functional}, suggesting that our method broadens the applicability of spectral feature-based NPIV for OPE.\loose

{\bf OPE context.} The goal of OPE is to estimate the value of a \textit{known} target policy, $\pi(a|s)$, where $a$ denotes an action and $s$ denotes a state. The challenge is that we only have access to a fixed, ``offline'' dataset of transitions, $\mathcal{D} = \{ (s_i, a_i, r_i, s'_i) \}_{i}$, where $r$ denotes a reward, and $s'$ a next state transition. The dataset could have been collected by one or a mixture of potentially unknown policies of potentially unknown analytical form, denoted by $\pi_b(a|s)$. This problem is central to offline reinforcement learning, as it allows one to select the best-performing policy among a collection of candidate policies without direct interaction with the environment, which can be costly or unethical, as is often the case, e.g., in healthcare \citep{gottesman2018evaluatingreinforcementlearningalgorithms}. For a detailed overview of OPE, see the work of \citet{levine2020offlinereinforcementlearningtutorial}.\loose

{\bf OPE via NPIV.} The standard objective in OPE is to estimate the Q-function $Q_\pi$ of a policy $\pi$ (uniformly well or in a suitable $L_2$ norm). \citet{xu2020learning} showed that $Q$-function estimation can be framed as a NPIV problem (see \Cref{sec:OPE}). However, their formulation is not amenable to spectral feature learning as the conditional expectation operator they obtain is non-compact \citep[][Eq.~(19)]{chen2022ivForPolicyEval}. To overcome this, we
instead recover $Q_{\pi}$ through a modified NPIV problem where $X = (s',a')$ and $Z = (s,a)$ are such that $a' \sim \pi(\cdot \mid s')$ and $a \sim \pi_b(\cdot \mid s)$. 
Writing $\cT q(Z) \doteq \E[q(X)\mid Z]=\E[q(s',a')\mid s,a]$, $Q_{\pi}$ is then identified as the solution to $\cT Q_{\pi} = \E[Y(Q_\pi) \mid Z]$, where $Y(Q_\pi) = -\gamma^{-1}(R - Q_{\pi}(s,a))$, where $R \mid Z$ is the random reward associated to $Z=(s,a)$, and $\gamma$ is the discount factor.
As the outcome $Y(Q_\pi)$ depends on $Q_{\pi}$, which is the very function we are estimating, we introduce an iterative procedure to estimate $Q_{\pi}$. We start with a random guess $Q_0$ (e.g., $Q_0 = 0$) and build $Y_0 = Y(Q_0)$. We then estimate $Q_1$ with Augmented Spec IV. We repeat this process for $K$ steps. We defer the details of the iterative procedure and the derivation of the new NPIV problem for OPE to \Cref{sec:OPE}. The key point is that the iterative process introduces a potential for dynamic spectral misalignment, as the target $Y_k$ changes at every iteration. If the spectral features required to estimate $Y_k$ are not the same as the dominant spectral features of $\cT$, or if this direction shifts as $Q_k$ converges, an outcome-agnostic method will fail. This is precisely the scenario where Augmented SpecIV can help. 

{\bf Experiment results.} We follow the experimental setting of \citet{chen2022ivForPolicyEval}. In particular, we evaluate the performance of DFIV, SpecIV ($\delta = 0$), and Augmented SpecIV (AugSpecIV; $\delta > 0$) on estimating the value of policies learned by Deep Q-Networks \citep{Mnih2015} across randomized versions of the BSuite Cartpole \citep{barto1983}, Mountain Car \citep{moore1990efficient}, and Catch environments. 
The OPE datasets are the pure offline versions from \citet{chen2022ivForPolicyEval}, containing approximately $n=700\text{k}/150\text{k}/20\text{k}$ (Cartpole/Mountain Car/Catch) transition tuples $(s, a, r, s')$. The results are shown in \Cref{sec:OPE_results}, \Cref{fig:ope_scatter,fig:ope_mae,fig:ope_dist}. Compared with \citet{chen2022ivForPolicyEval}, DFIV performed slightly worse, likely due to randomness in hyperparameter sampling. SpecIV and AugSpecIV both achieved strong performance on Catch but struggled on Mountain Car. Consistent with prior findings \citep{chen2022ivForPolicyEval}, no OPE method performed uniformly well across all tasks. In our experiments, DFIV performed poorly on Cartpole but well on Mountain Car, whereas AugSpecIV showed the opposite trend. SpecIV also underperformed on Cartpole, likely due to spectral misalignment. 
Since $\delta$ was automatically tuned as a hyperparameter, the selected values were $\delta=1$ for Cartpole, $\delta=10^{-3}$ for Mountain Car, and $\delta=10^{-2}$ for Catch. These results suggest that our approach can adapt to the underlying spectral alignment structure.

\subsection{Selecting \texorpdfstring{$\delta$}{}}
\label{sec:delta_selection}

We discuss three approaches to select $\delta$.

{\bf Estimating alignment with $h_0$.} By \Cref{prop:loss_equivalence}, our strategy targets features 
$\phi_{\star}^\upd$ from the SVD of $\cT_{\delta}$ as a basis in which to learn $h_0$ (see \Cref{eq:svd_T_delta}). The next result shows that we can estimate the length of the projection of $h_0$ onto the span of these features.

\begin{prop} \label{prop:alignment_pop}
    For any $i = 1, \ldots, d$, denote $\alpha_i = \E[Y \psi_{\star,i}(Z)]\sigma_{\star,i}^{-1}$. Then, the following holds
    $$
    \|\Pi_{\phi_{\star}^\upd}\! h_0 \|_{L_2(X)}^2 = \alpha^{\transpose} \!\Big(I_d - \omega_{\star}\omega^{\transpose}_{\star}\Big)^{-1}\!\alpha.
    $$
\end{prop}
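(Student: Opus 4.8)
The plan is to reduce everything to finite-dimensional linear algebra built from the first $d$ singular triples of $\cT_\delta$, using the established identity $\|\Pi_{\phi_{\star}^\upd} h_0\|_{L_2(X)}^2 = \langle \Pi_{\phi_{\star}^\upd} h_0, h_0\rangle_{L_2(X)} = (\Phi_\star^{\upd\star} h_0)^\transpose C_{\phi_\star^\upd}^{-1} (\Phi_\star^{\upd\star} h_0)$, valid since $\Pi_{\phi_{\star}^\upd} = \Phi_\star^\upd C_{\phi_\star^\upd}^{-1}\Phi_\star^{\upd\star}$ is an orthogonal projection. So the proof comes down to two computations: (i) identifying the covariance matrix $C_{\phi_\star^\upd}$, and (ii) identifying the coordinate vector $\Phi_\star^{\upd\star} h_0 = (\langle h_0,\varphi_{\star,i}\rangle_{L_2(X)})_{i=1}^d$ with $\alpha$.

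First I would write down the adjoint of the augmented operator: for $g\in L_2(Z)$, from $\cT_\delta(h,a) = \cT h + a\delta r_0$ one reads off $\cT_\delta^\star g = (\cT^\star g,\ \delta\langle r_0, g\rangle_{L_2(Z)}) \in L_2(X)\times\R$. Feeding the singular vectors through $\cT_\delta^\star \psi_{\star,i} = \sigma_{\star,i}(\varphi_{\star,i},\omega_{\star,i})$ componentwise gives the two working identities $\cT^\star \psi_{\star,i} = \sigma_{\star,i}\varphi_{\star,i}$ and $\delta\langle r_0,\psi_{\star,i}\rangle_{L_2(Z)} = \sigma_{\star,i}\omega_{\star,i}$. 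For (i), orthonormality of $\{(\varphi_{\star,i},\omega_{\star,i})\}$ in $L_2(X)\times\R$ gives $\langle\varphi_{\star,i},\varphi_{\star,j}\rangle_{L_2(X)} = \delta_{ij} - \omega_{\star,i}\omega_{\star,j}$, i.e. $C_{\phi_\star^\upd} = \Phi_\star^{\upd\star}\Phi_\star^\upd = I_d - \omega_\star\omega_\star^\transpose$ — precisely the matrix being inverted. For (ii), substitute $\varphi_{\star,i} = \sigma_{\star,i}^{-1}\cT^\star\psi_{\star,i}$, move the adjoint across, use $r_0\in\cR(\cT)$ so that $\cT h_0 = r_0$, and apply the tower rule $\E[r_0(Z)\psi_{\star,i}(Z)] = \E[Y\psi_{\star,i}(Z)]$:
$$
\langle h_0,\varphi_{\star,i}\rangle_{L_2(X)} = \sigma_{\star,i}^{-1}\langle \cT h_0,\psi_{\star,i}\rangle_{L_2(Z)} = \sigma_{\star,i}^{-1}\langle r_0,\psi_{\star,i}\rangle_{L_2(Z)} = \sigma_{\star,i}^{-1}\E[Y\psi_{\star,i}(Z)] = \alpha_i.
$$
Combining (i) and (ii) in the projection identity yields $\|\Pi_{\phi_{\star}^\upd} h_0\|_{L_2(X)}^2 = \alpha^\transpose(I_d - \omega_\star\omega_\star^\transpose)^{-1}\alpha$.

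The step I expect to be the only genuine subtlety is (i)–(ii) bookkeeping around the fact that the $\varphi_{\star,i}$ are \emph{not} orthonormal in $L_2(X)$ (they have lost mass to the $\R$-component), so one cannot read coordinates off naively and must go through $C_{\phi_\star^\upd}^{-1}$; once that is recognised the computation is routine. A minor accompanying point is to justify invertibility of $I_d - \omega_\star\omega_\star^\transpose$, whose nonzero eigenvalues are $1$ (multiplicity $d-1$) and $1-\|\omega_\star\|_{\ell_2}^2$: one notes $\omega_{\star,i} = \langle (0,1),(\varphi_{\star,i},\omega_{\star,i})\rangle$, so Bessel's inequality gives $\|\omega_\star\|_{\ell_2}^2 \le \sum_{i\ge1}\omega_{\star,i}^2 \le 1$, with strictness (hence invertibility) in the nondegenerate case — e.g. whenever $\cN(\cT_\delta)\not\perp(0,1)$, which holds for $\delta\neq 0$ since $(-\delta h_0,1)\in\cN(\cT_\delta)$, and trivially for $\delta=0$ where $\omega_\star=0$. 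I would flag this as a standing assumption consistent with the statement.
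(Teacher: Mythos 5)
Your proof is correct and follows essentially the same route as the paper's: the same covariance identity $\Phi_{\star}^{\upd\star}\Phi_{\star}^{\upd}=I_d-\omega_{\star}\omega_{\star}^{\transpose}$ (which the paper imports from \Cref{eq:cov_formula}, derived exactly as you do from orthonormality of the augmented singular functions $(\varphi_{\star,i},\omega_{\star,i})$), followed by the same chain $\langle h_0,\varphi_{\star,i}\rangle=\sigma_{\star,i}^{-1}\langle \cT h_0,\psi_{\star,i}\rangle=\sigma_{\star,i}^{-1}\E[Y\psi_{\star,i}(Z)]=\alpha_i$ and the projection formula $\Pi_{\phi_{\star}^\upd}=\Phi_{\star}^{\upd}(\Phi_{\star}^{\upd\star}\Phi_{\star}^{\upd})^{-1}\Phi_{\star}^{\upd\star}$. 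Your Bessel-inequality argument for the invertibility of $I_d-\omega_{\star}\omega_{\star}^{\transpose}$ (via $(-\delta h_0,1)\in\cN(\cT_\delta)$) is a small addition the paper leaves implicit, but it does not change the substance of the argument.
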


We can construct an estimator of this projection length that uses the fitted $\widehat \gT_\delta$ rather than the true operator. Having learned the operator, we can approximate its SVD using a $(Z, X)$ dataset. We do so by decomposing the operator into features that are orthonormal with respect to the $L_2(\widehat\mu_X)\times\R$ and $L_2(\widehat\mu_Z)$ inner products, where $\widehat\mu$ denotes empirical distributions based on the samples. This procedure yields $\widehat\cT_{\delta} = \sum_{i=1}^d \widehat\sigma_i \widehat \psi_i \otimes (\widehat \varphi_i, \widehat \omega_i)$ where $(\widehat\psi_{i})_{i=1}^d$ and $(\widehat \phi_{i})_{i=1}^d$ are orthonormal systems with respect to the aforementioned empirical inner products.
For the plug-in estimator $\|\Pi_{\phi_{\star}^\upd} h_0 \|_{L_2(X)}^2\approx
\textstyle{\widehat \alpha^{\transpose} \left(I_d - \widehat \omega \widehat \omega^{\transpose}\right)^{-1}  \widehat \alpha}$,
which shows how well our features approximate the true underlying function and can be used to select $\delta$, in \Cref{fig:loss_comp}, we observe that the estimated spectral alignment increases very rapidly with small $\delta$ and stays roughly constant for all the choices of $\delta$ that lead to good results in IV regression. The exception is $\delta=1.0$ where our IV performance has already somewhat degraded, but the estimated alignment remains high. 

{\color{black} As $\delta$ increases, so does the norm of the optimal $\omega_*$, which may lead to unstable estimation of $(I_d-\omega_*\omega_*^{\transpose})^{-1}$. Therefore, we suggest accepting a larger value of $\delta$ only if it leads to a substantial increase in the estimated alignment between $h_0$ and the learned features. We investigate this further in \Cref{app:alignment_maximisation_delta}.}

{\bf Balancing the terms in the loss.}
Given that our ability to learn the actual truncated SVD of $\gT_\delta$ hinges on the convergence of the feature neural networks to the population-level optimum, the optimal choice of $\delta$ is in part an empirical challenge. In particular, consider what happens if $\delta$ is sufficiently large that $\gR^\upd_\delta$ dominates the joint loss value (\Cref{eq:loss_reg}). The features $\psi_{\theta}$ minimizing $\gR^\upd_\delta$ are non-unique. Any choice such that $r_0$ lies in their span is equally good. Therefore, the rest of the $Z$-features should be used to learn an approximation of the conditional expectation. However, if the gradients with respect to $\gR_\delta^\upd$ are too large, they effectively drown out the signal needed to learn $\gT$ and cause a significant decrease in $\gL^\upd_0$.\\
A useful heuristic for selecting $\delta$ is to treat $\gR^\upd_\delta$ as an additional regularization on the learned features, and tune its strength so that it influences the learned features without leading the models to neglect the original $\gL^\upd_0$ term. 
A heuristic, which we find leads to good results, is to increase $\delta$ as long as doing so leads to substantial decreases in $\gR^\upd_\delta$ with only minor changes in $\gL^\upd_0$.
Small values of $\delta$ are always observed to lead to improved spectral alignment. As long as increasing $\delta$ remains ``free'' in that our ability to approximate $\gT$, as measured by $\gL^\upd_0$, does not change by much, we continue to increase it. 
As seen in \Cref{fig:loss_comp}, the $\gR^\upd_\delta$ term eventually becomes dominant and $\gL^\upd_0$ increases substantially.
In our experiments, those settings correspond to parameters that yield poor results in IV regression. We also note that the method's performance is not sensitive to minor changes in $\delta$. We see in Figures~\ref{fig:partial_synthetic_MSEs} and \ref{fig:dsprite_losses} that all sufficiently small values of $\delta$ lead to improvement.

\begin{figure}[htbp]
    \centering
    \begin{minipage}{0.24\textwidth}
        \centering
        \includegraphics[width=\textwidth]{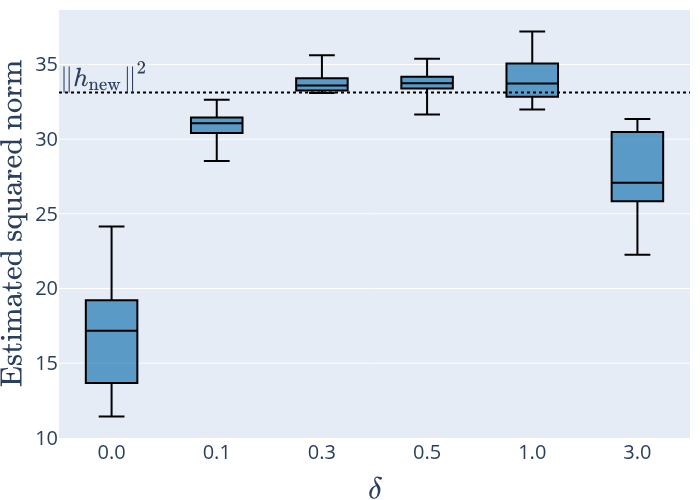}
        \caption*{}
    \end{minipage}\hfill
    \begin{minipage}{0.24\textwidth}
        \centering
        \includegraphics[width=\textwidth]{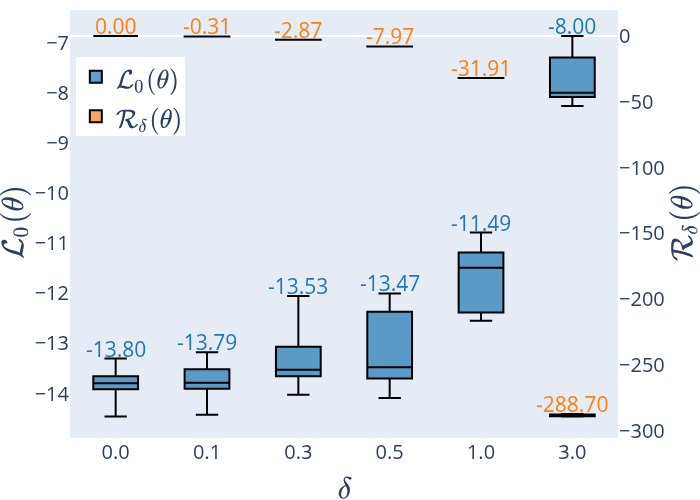}
\caption*{}
    \end{minipage}
    \vspace{-0.1cm}
    \caption{\textbf{Left:} Estimation of $\|\Pi_{\phi_{\star}^\upd}h_\text{new}\|^2$ for a range of $\delta$. \textbf{Right:} Evolution of $\gL^\upd_0$ and $\gR^\upd_\delta$ for models learning $h_\text{new}$. Models with non-zero $\delta$ and a small $\gL^\upd_0$ (close to the value attained at $\delta=0$) demonstrate the best results. Each bar's mean value is noted above it.}
    \label{fig:loss_comp}
    \vspace{-0.6cm}
\end{figure}

{\bf Minimization of second stage loss.} A natural approach to selecting an optimal IV model is to pick one which yields the smallest 2SLS error \citep{xu2020learning,chen2022ivForPolicyEval}. We find that it works relatively well in experiments on dSprites and off-policy evaluation (where it is the default model selection strategy employed in standard IV-based benchmarks). However, as we discuss in \Cref{sec:model_selection_cv}, the theoretical justification of this method's consistency remains elusive.

\section{Conclusion}
\label{sec:conclusion}

We have proposed \emph{Augmented Spectral Feature Learning}, a new framework for outcome-aware feature learning in nonparametric instrumental variable regression. By introducing an augmented operator and a contrastive loss, our method addresses the fundamental limitation of outcome-agnostic spectral features and achieves robustness in regimes with spectral misalignment to the targeted structural function. 
Our current approach relies on a rank-one augmentation; extending the framework to richer, higher-rank perturbations remains a promising direction for future research.

\section*{Impact Statement}

\textcolor{black}{This paper contributes to the methodology of nonparametric instrumental variable regression, with the goal of improving causal effect estimation in settings with hidden confounding.
There are many potential societal consequences of our work, none of which we feel must be specifically highlighted here.}

\section*{Acknowledgments and Disclosure of Funding}

\textcolor{black}{D.M., J.W. and A.G. are supported by the Gatsby Charitable Foundation. V.R.K. and M.P. were supported by NextGenerationEU and MUR PNRR project PE0000013 CUP J53C22003010006 ``Future Artificial Intelligence Research (FAIR)''. A.M. is funded via a European Research Council (ERC) project, under the European Union's Horizon 2020 research and innovation programme (Grant agreement No. 950180). This project received funding from the European Union's Horizon Europe research and innovation program under grant agreement 101120237 (ELIAS) and from ELSA - European Lighthouse on Secure and Safe AI funded by the European Union under Grant Agreement No. 101070617.}

\bibliography{ref.bib}

@inproceedings{kimoptimality,
  title={Optimality and Adaptivity of Deep Neural Features for Instrumental Variable Regression},
  author={Kim, Juno and Meunier, Dimitri and Gretton, Arthur and Suzuki, Taiji and Li, Zhu},
  booktitle={The Thirteenth International Conference on Learning Representations},
year={2025}
}

@article{bennett2025inference,
  title={Inference on strongly identified functionals of weakly identified functions},
  author={Bennett, Andrew and Kallus, Nathan and Mao, Xiaojie and Newey, Whitney K and Syrgkanis, Vasilis and Uehara, Masatoshi},
  journal={Journal of the Royal Statistical Society Series B: Statistical Methodology},
  pages={qkaf075},
  year={2025},
  publisher={Oxford University Press UK}
}

@inproceedings{bennett2023minimax,
	title        = {Minimax Instrumental Variable Regression and $ L\_2 $ Convergence Guarantees without Identification or Closedness},
	author       = {Bennett, Andrew and Kallus, Nathan and Mao, Xiaojie and Newey, Whitney and Syrgkanis, Vasilis and Uehara, Masatoshi},
	year         = {2023},
	booktitle    = {The Thirty Sixth Annual Conference on Learning Theory},
	pages        = {2291--2318},
	organization = {PMLR}
}

@article{blundell2007semi,
	title        = {Semi-nonparametric IV estimation of shape-invariant Engel curves},
	author       = {Blundell, Richard and Chen, Xiaohong and Kristensen, Dennis},
	year         = {2007},
	journal      = {Econometrica},
	publisher    = {Wiley Online Library},
	volume       = {75},
	number       = {6},
	pages        = {1613--1669}
}

@article{caponnetto2007optimal,
	title        = {Optimal rates for the regularized least-squares algorithm},
	author       = {Caponnetto, Andrea and De Vito, Ernesto},
	year         = {2007},
	journal      = {Foundations of Computational Mathematics},
	publisher    = {Springer},
	volume       = {7},
	number       = {3},
	pages        = {331--368}
}

@article{shen2025nonparametric,
  title={Nonparametric Instrumental Variable Regression with Observed Covariates},
  author={Shen, Zikai and Chen, Zonghao and Meunier, Dimitri and Steinwart, Ingo and Gretton, Arthur and Li, Zhu},
  journal={arXiv preprint arXiv:2511.19404},
  year={2025}
}

@article{chen2012estimation,
	title        = {Estimation of nonparametric conditional moment models with possibly nonsmooth generalized residuals},
	author       = {Chen, Xiaohong and Pouzo, Demian},
	year         = {2012},
	journal      = {Econometrica},
	publisher    = {Wiley Online Library},
	volume       = {80},
	number       = {1},
	pages        = {277--321}
}

@article{chen2018optimal,
	title        = {Optimal sup-norm rates and uniform inference on nonlinear functionals of nonparametric IV regression},
	author       = {Chen, Xiaohong and Christensen, Timothy M},
	year         = {2018},
	journal      = {Quantitative Economics},
	publisher    = {Wiley Online Library},
	volume       = {9},
	number       = {1},
	pages        = {39--84}
}

@article{chen2022ivForPolicyEval,
	title        = {On Instrumental Variable Regression for Deep Offline Policy Evaluation},
	author       = {Yutian Chen and Liyuan Xu and Caglar Gulcehre and Tom Le Paine and Arthur Gretton and Nando de Freitas and Arnaud Doucet},
	year         = {2022},
	journal      = {Journal of Machine Learning Research},
	volume       = {23},
	number       = {302},
	pages        = {1--40}
}

@article{darolles2011nonparametric,
	title        = {Nonparametric instrumental regression},
	author       = {Darolles, Serge and Fan, Yanqin and Florens, Jean-Pierre and Renault, Eric},
	year         = {2011},
	journal      = {Econometrica},
	publisher    = {Wiley Online Library},
	volume       = {79},
	number       = {5},
	pages        = {1541--1565}
}

@article{dikkala2020minimax,
	title        = {Minimax estimation of conditional moment models},
	author       = {Dikkala, Nishanth and Lewis, Greg and Mackey, Lester and Syrgkanis, Vasilis},
	year         = {2020},
	journal      = {Advances in Neural Information Processing Systems},
	volume       = {33},
}

@article{florens2011identification,
	title        = {Identification and estimation by penalization in nonparametric instrumental regression},
	author       = {Florens, Jean-Pierre and Johannes, Jan and Van Bellegem, S{\'e}bastien},
	year         = {2011},
	journal      = {Econometric Theory},
	publisher    = {Cambridge University Press},
	volume       = {27},
	number       = {3},
	pages        = {472--496}
}

@article{hall2005nonparametric,
	title        = {Nonparametric methods for inference in the presence of instrumental variables},
	author       = {Hall, Peter and Horowitz, Joel L},
	year         = {2005},
	journal      = {Annals of Statistics},
	publisher    = {Institute of Mathematical Statistics},
	volume       = {33},
	number       = {6},
	pages        = {2904--2929}
}

@inproceedings{hartford2017deep,
	title        = {Deep IV: A flexible approach for counterfactual prediction},
	author       = {Hartford, Jason and Lewis, Greg and Leyton-Brown, Kevin and Taddy, Matt},
	year         = {2017},
	booktitle    = {International Conference on Machine Learning},
	pages        = {1414--1423},
	organization = {PMLR}
}

@article{jiang2025offline,
  title={Offline reinforcement learning in large state spaces: Algorithms and guarantees},
  author={Jiang, Nan and Xie, Tengyang},
  journal={Statistical Science},
  volume={40},
  number={4},
  pages={570--596},
  year={2025},
  publisher={Institute of Mathematical Statistics}
}

@article{lewis2018adversarial,
	title        = {Adversarial generalized method of moments},
	author       = {Lewis, Greg and Syrgkanis, Vasilis},
	year         = {2018},
	journal      = {arXiv preprint arXiv:1803.07164}
}

@article{liao2020provably,
	title        = {Provably efficient neural estimation of structural equation models: An adversarial approach},
	author       = {Liao, Luofeng and Chen, You-Lin and Yang, Zhuoran and Dai, Bo and Kolar, Mladen and Wang, Zhaoran},
	year         = {2020},
	journal      = {Advances in Neural Information Processing Systems},
	volume       = {33},
}

@article{meunier2024nonparametric,
  title={Nonparametric instrumental regression via kernel methods is minimax optimal},
  author={Meunier, Dimitri and Li, Zhu and Christensen, Tim and Gretton, Arthur},
  journal={arXiv preprint arXiv:2411.19653},
  year={2024}
}

@article{newey2003instrumental,
	title        = {Instrumental variable estimation of nonparametric models},
	author       = {Newey, Whitney K and Powell, James L},
	year         = {2003},
	journal      = {Econometrica},
	publisher    = {Wiley Online Library},
	volume       = {71},
	number       = {5},
	pages        = {1565--1578}
}

@article{singh2019kernel,
	title        = {Kernel instrumental variable regression},
	author       = {Singh, Rahul and Sahani, Maneesh and Gretton, Arthur},
	year         = {2019},
	journal      = {Advances in Neural Information Processing Systems},
	volume       = {32}
}

@inproceedings{sun25speciv,
  title={Spectral Representation for Causal Estimation with Hidden Confounders},
  author={Sun, Haotian and Moulin, Antoine and Ren, Tongzheng and Gretton, Arthur and Dai, Bo},
  booktitle={International Conference on Artificial Intelligence and Statistics},
  pages={2719--2727},
  year={2025},
  organization={PMLR}
}

@article{wedin1972perturbation,
  title={Perturbation bounds in connection with singular value decomposition},
  author={Wedin, Per-{\AA}ke},
  journal={BIT Numerical Mathematics},
  volume={12},
  number={1},
  pages={99--111},
  year={1972},
  publisher={Springer}
}

@inproceedings{AhlswedeWinter2002,
  author       = {Rudolf Ahlswede and Andreas Winter},
  title        = {Strong Converse for Identification via Quantum Channels},
  booktitle    = {IEEE Transactions on Information Theory},
  year         = {2002},
  volume       = {48-3},
  pages        = {569--579},
  doi          = {10.1109/TIT.2002.998035},
}

@article{tropp2015introduction,
	title        = {An introduction to matrix concentration inequalities},
	author       = {Tropp, Joel A},
	year         = {2015},
	journal      = {Foundations and Trends{\textregistered} in Machine Learning},
	publisher    = {Now Publishers, Inc.},
	volume       = {8},
	number       = {1-2},
	pages        = {1--230}
}

@article{wang2022fast,
  title={Fast instrument learning with faster rates},
  author={Wang, Ziyu and Zhou, Yuhao and Zhu, Jun},
  journal={Advances in Neural Information Processing Systems},
  volume={35},
  year={2022}
}

@inproceedings{xu2020learning,
	title        = {Learning Deep Features in Instrumental Variable Regression},
	author       = {Xu, Liyuan and Chen, Yutian and Srinivasan, Siddarth and de Freitas, Nando and Doucet, Arnaud and Gretton, Arthur},
	year         = {2021},
	booktitle    = {International Conference on Learning Representations}
}

@article{kostic2024ncp,
  title={Neural Conditional Probability for Uncertainty Quantification},
  author={Kostic, Vladimir and Pacreau, Gr{\'e}goire and Turri, Giacomo and Novelli, Pietro and Lounici, Karim and Pontil, Massimiliano},
  journal={Advances in Neural Information Processing Systems},
  volume={37},
  year={2024}
}

@article{petrulionyte2024functional,
 author = {Petrulionyte, Ieva and Mairal, Julien and Arbel, Michael},
 journal = {Advances in Neural Information Processing Systems},
 title = {Functional Bilevel Optimization for Machine Learning},
 volume = {37},
 year = {2024}
}

@article{ziyin2020neural,
  title={Neural networks fail to learn periodic functions and how to fix it},
  author={Ziyin, Liu and Hartwig, Tilman and Ueda, Masahito},
  journal={Advances in Neural Information Processing Systems},
  volume={33},
  pages={1583--1594},
  year={2020}
}

@article{meunier2025demystifying,
  title={Demystifying spectral feature learning for instrumental variable regression},
  author={Meunier, Dimitri and Moulin, Antoine and Wornbard, Jakub and Kostic, Vladimir and Gretton, Arthur},
  journal={Advances in Neural Information Processing Systems},
  volume={38},
  pages={42666--42702},
  year={2026}
}

@article{shen2026instrumental,
  title={Instrumental Variable Analysis Without Structural Equations},
  author={Shen, Zikai and Meunier, Dimitri and Zenati, Houssam and Gretton, Arthur and Kallus, Nathan and Bibaut, Aur{\'e}lien},
  journal={arXiv preprint arXiv:2604.24660},
  year={2026}
}

@misc{xu2024deepproxycausallearning,
      title={Deep Proxy Causal Learning and its Application to Confounded Bandit Policy Evaluation}, 
      author={Liyuan Xu and Heishiro Kanagawa and Arthur Gretton},
      year={2024},
      eprint={2106.03907},
      archivePrefix={arXiv},
      primaryClass={cs.LG},
      url={https://arxiv.org/abs/2106.03907}, 
}

@misc{dsprites17,
author = {Loic Matthey and Irina Higgins and Demis Hassabis and Alexander Lerchner},
title = {dSprites: Disentanglement testing Sprites dataset},
howpublished= {https://github.com/deepmind/dsprites-dataset/},
year = "2017",
}

@inproceedings{
miyato2018spectral,
title={Spectral Normalization for Generative Adversarial Networks},
author={Takeru Miyato and Toshiki Kataoka and Masanori Koyama and Yuichi Yoshida},
booktitle={International Conference on Learning Representations},
year={2018},
}

@inproceedings{ioffe15batch,
  title={Batch normalization: Accelerating deep network training by reducing internal covariate shift},
  author={Ioffe, Sergey and Szegedy, Christian},
  booktitle={International Conference on Machine Learning},
  pages={448--456},
  year={2015},
  organization={pmlr}
}

@misc{card1993using,
  title={Using geographic variation in college proximity to estimate the return to schooling},
  author={Card, David},
  year={1993},
  journal={National Bureau of Economic Research Cambridge, Mass., USA}
}

@misc{levine2020offlinereinforcementlearningtutorial,
      title={Offline Reinforcement Learning: Tutorial, Review, and Perspectives on Open Problems}, 
      author={Sergey Levine and Aviral Kumar and George Tucker and Justin Fu},
      year={2020},
      eprint={2005.01643},
      archivePrefix={arXiv},
      primaryClass={cs.LG},
      url={https://arxiv.org/abs/2005.01643}, 
}

@article{Bradtke1996,
  title = {Linear Least-Squares algorithms for temporal difference learning},
  volume = {22},
  ISSN = {1573-0565},
  url = {http://dx.doi.org/10.1007/BF00114723},
  DOI = {10.1007/bf00114723},
  number = {1–3},
  journal = {Machine Learning},
  publisher = {Springer Science and Business Media LLC},
  author = {Bradtke,  Steven J. and Barto,  Andrew G.},
  year = {1996},
  pages = {33–57}
}

@misc{gottesman2018evaluatingreinforcementlearningalgorithms,
      title={Evaluating Reinforcement Learning Algorithms in Observational Health Settings}, 
      author={Omer Gottesman and Fredrik Johansson and Joshua Meier and Jack Dent and Donghun Lee and Srivatsan Srinivasan and Linying Zhang and Yi Ding and David Wihl and Xuefeng Peng and Jiayu Yao and Isaac Lage and Christopher Mosch and Li-wei H. Lehman and Matthieu Komorowski and Matthieu Komorowski and Aldo Faisal and Leo Anthony Celi and David Sontag and Finale Doshi-Velez},
      year={2018},
      eprint={1805.12298},
      archivePrefix={arXiv},
      primaryClass={cs.LG},
      url={https://arxiv.org/abs/1805.12298}, 
}

@phdthesis{moore1990efficient,
  author       = {Andrew William Moore},
  title        = {Efficient Memory-Based Learning for Robot Control},
  school       = {University of Cambridge},
  year         = {1990},
  type         = {PhD thesis},
}

@inproceedings{hu2024primaldualspectralrepresentationoffpolicy,
  title={Primal-Dual Spectral Representation for Off-policy Evaluation},
  author={Hu, Yang and Chen, Tianyi and Li, Na and Wang, Kai and Dai, Bo},
  booktitle={International Conference on Artificial Intelligence and Statistics},
  pages={3808--3816},
  year={2025},
  organization={PMLR}
}

@article{barto1983,
  author={Barto, Andrew G. and Sutton, Richard S. and Anderson, Charles W.},
  journal={IEEE Transactions on Systems, Man, and Cybernetics}, 
  title={Neuronlike adaptive elements that can solve difficult learning control problems}, 
  year={1983},
  volume={SMC-13},
  number={5},
  pages={834-846},
  keywords={Adaptive systems;Problem-solving;Training;Pattern recognition;Neurons;Supervised learning;Biological neural networks},
  doi={10.1109/TSMC.1983.6313077}}

@article{Mnih2015,
  title = {Human-level control through deep reinforcement learning},
  volume = {518},
  ISSN = {1476-4687},
  url = {http://dx.doi.org/10.1038/nature14236},
  DOI = {10.1038/nature14236},
  number = {7540},
  journal = {Nature},
  publisher = {Springer Science and Business Media LLC},
  author = {Mnih,  Volodymyr and Kavukcuoglu,  Koray and Silver,  David and Rusu,  Andrei A. and Veness,  Joel and Bellemare,  Marc G. and Graves,  Alex and Riedmiller,  Martin and Fidjeland,  Andreas K. and Ostrovski,  Georg and Petersen,  Stig and Beattie,  Charles and Sadik,  Amir and Antonoglou,  Ioannis and King,  Helen and Kumaran,  Dharshan and Wierstra,  Daan and Legg,  Shane and Hassabis,  Demis},
  year = {2015},
  month = feb,
  pages = {529–533}
}

@article{turri2025self,
  title={Self-Supervised Evolution Operator Learning for High-Dimensional Dynamical Systems},
  author={Turri, Giacomo and Bonati, Luigi and Zhu, Kai and Pontil, Massimiliano and Novelli, Pietro},
  journal={arXiv preprint arXiv:2505.18671},
  year={2025}
}

@article{bruns-smith2024two,
  title={Two-Stage Machine Learning for Nonparametric Instrumental Variable Regression},
  author={Bruns-Smith, David},
  journal={Kilts Center at Chicago Booth Marketing Data Center},
  year={2025},
  url={https://papers.ssrn.com/sol3/papers.cfm?abstract_id=5788782},
  note={Paper Forthcoming}
}
\bibliographystyle{icml2026}

\newpage
\appendix
\onecolumn

\section{Technical Tools}
\label{sec:technical-tools}

\paragraph{Notation.} For a compact operator $A$, $\sigma_d(A)$ denotes the $d$-th largest singular value of $A$.

\begin{prop}[Weyl's inequality] \label{prop:weyl}
    For any two compact operators $A$ and $B$, the singular values are stable under perturbation: $|\sigma_i(A) - \sigma_i(B)| \leq \normop{A - B}$, $i \leq \min\{\operatorname{rank}(A),\operatorname{rank}(B)\}$.  
\end{prop}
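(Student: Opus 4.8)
The plan is to reduce the claim to the Courant--Fischer (min--max) variational characterization of singular values, which is available for compact operators on a Hilbert space. Recall that the singular values of a compact operator $A$ on a Hilbert space $\cH$ are the eigenvalues of the positive compact self-adjoint operator $|A| \doteq (A^\star A)^{1/2}$, arranged in non-increasing order, and that $\|Ax\| = \||A|x\|$ for every $x \in \cH$. Applying the spectral theorem to $|A|$ together with the standard min--max principle for compact positive operators yields, for each index $i$,
\begin{equation*}
\sigma_i(A) \;=\; \min_{\substack{V \subseteq \cH\\ \dim V \le i-1}}\ \max_{\substack{x \perp V\\ \|x\| = 1}} \|Ax\|,
\end{equation*}
where the inner maximum ranges over the unit sphere of $V^\perp$. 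The rank hypothesis $i \le \min\{\operatorname{rank}(A),\operatorname{rank}(B)\}$ guarantees that this quantity is a genuine positive eigenvalue of $|A|$ and of $|B|$, so no degenerate conventions about trailing zeros are needed; in fact the argument below goes through for all $i$ under the usual convention $\sigma_i = 0$ for $i$ exceeding the rank.

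With this in hand, the proof is a one-line perturbation estimate performed inside the variational formula. Fix $i$ and let $V_\star$ be a subspace with $\dim V_\star \le i-1$ attaining the minimum for $B$, so that $\sigma_i(B) = \max_{x \perp V_\star,\ \|x\|=1} \|Bx\|$. For any unit vector $x$, the triangle inequality gives $\|Ax\| \le \|Bx\| + \|(A-B)x\| \le \|Bx\| + \|A-B\|$. Restricting to unit vectors $x \perp V_\star$ and taking the supremum,
\begin{equation*}
\sigma_i(A) \;\le\; \max_{\substack{x \perp V_\star\\ \|x\|=1}} \|Ax\| \;\le\; \max_{\substack{x \perp V_\star\\ \|x\|=1}}\big(\|Bx\| + \|A-B\|\big) \;=\; \sigma_i(B) + \|A-B\|,
\end{equation*}
where the first inequality uses only that $V_\star$ is an admissible (not necessarily optimal) competitor in the min--max for $A$. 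Exchanging the roles of $A$ and $B$ gives $\sigma_i(B) \le \sigma_i(A) + \|A-B\|$, and combining the two bounds yields $|\sigma_i(A) - \sigma_i(B)| \le \|A-B\|$.

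There is essentially no deep obstacle here; the only point that requires care is the justification of the min--max characterization in the possibly infinite-dimensional setting, namely that for a compact operator the relevant extrema are attained and that $|A|$ admits an orthonormal eigenbasis (together with a possible kernel). This is classical and may simply be cited. An alternative route, should one prefer not to invoke the operator min--max directly, is to approximate $A$ and $B$ by their rank-$N$ spectral truncations $A_N, B_N$, apply the finite-dimensional Weyl inequality to each pair, and pass to the limit using $\|A - A_N\| \to 0$ and continuity of the singular values; this is more cumbersome and I would only pursue it if the variational statement were for some reason unavailable.
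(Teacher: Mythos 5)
Your proof is correct. Note, however, that the paper itself offers no proof of this proposition: it is stated in the ``Technical Tools'' appendix as a classical fact (the operator analogue of Weyl's perturbation inequality for singular values) and simply used later, e.g.\ in the proof of the Wedin $\sin\Theta$ bound, so there is no in-paper argument to compare against. Your route via the Courant--Fischer characterization $\sigma_i(A)=\min_{\dim V\le i-1}\max_{x\perp V,\,\|x\|=1}\|Ax\|$ is the standard textbook proof, and the perturbation step (pick the optimal $V_\star$ for $B$, use the triangle inequality, symmetrize) is carried out correctly. Your side remarks are also accurate: compactness guarantees that $|A|=(A^\star A)^{1/2}$ has a discrete spectrum above any positive level so the variational formula holds with the extrema attained (the minimizing $V_\star$ is the span of the top $i-1$ right singular vectors of $B$, and even with suprema the estimate goes through unchanged), and the rank restriction in the statement is not actually needed for the inequality, since with the convention $\sigma_i=0$ beyond the rank the same argument applies; the paper's restriction merely ensures both quantities are genuine positive singular values. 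The only cosmetic caveat is that you should state the variational formula over finite-dimensional subspaces $V$ (or closed subspaces of dimension at most $i-1$, which is the same thing), as you implicitly do; with that reading the argument is complete.
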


\begin{theorem}[Wedin sin-$\Theta$ Theorem] \label{th:wedin}
Let $A$ and $B$ be compact operators. Let $P_A$ and $P_B$ be the orthogonal projections onto the subspaces spanned by the top-$d$ left singular vectors of $A$ and $B$ respectively (the same property holds with the right singular vectors). If $\gamma \doteq \sigma_d(A) - \sigma_{d+1}(B) > 0$, then,
$$
\normop{P_A - P_B} \leq \frac{\normop{A - B}}{\gamma}.
$$
Additionally, if $\gamma_A \doteq \sigma_d(A) - \sigma_{d+1}(A) > 0$ and $\normop{A - B} \leq \gamma_A/2$, then
$$
\normop{P_A - P_B} \leq  2 \cdot \frac{\normop{A - B}}{\gamma_A}.
$$
\end{theorem}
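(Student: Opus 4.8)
The plan is to establish the sharp first inequality directly from a pair of coupled Sylvester-type estimates, and then deduce the second inequality from the first together with Weyl's inequality (\Cref{prop:weyl}). Write the countable SVDs of $A$ and $B$, collecting the top-$d$ left and right singular vectors into isometries $U_A,V_A$ (so that $A V_A = U_A \Sigma_A$ with $\Sigma_A = \mathrm{diag}(\sigma_1(A),\dots,\sigma_d(A))$), and analogously for $B$. Let $P_A = U_A U_A^\star$, $Q_A = V_A V_A^\star$ be the orthogonal projections onto the top-$d$ left and right singular subspaces of $A$ (and $P_B,Q_B$ for $B$), and abbreviate $s = \sigma_d(A)$, $t = \sigma_{d+1}(B)$, $e = \|A-B\|$. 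Using the standard fact that orthogonal projections of equal finite rank satisfy $\|P_A - P_B\| = \|P_B^\perp P_A\|$ (and $\|Q_A - Q_B\| = \|Q_B^\perp Q_A\|$), it suffices to bound $x := \|P_B^\perp P_A\| = \|P_B^\perp U_A\|$ and $y := \|Q_B^\perp Q_A\| = \|Q_B^\perp V_A\|$, since these coincide with the two subspace distances.

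The core of the argument is deriving two coupled inequalities. Since $A V_A = U_A \Sigma_A$ and $\Sigma_A$ is invertible with $\|\Sigma_A^{-1}\| = s^{-1}$ (note $s>0$ because $\gamma = s-t>0$), I would lower-bound $\|P_B^\perp A V_A\| = \|P_B^\perp U_A \Sigma_A\| \ge s\,x$. For the upper bound, split $A = B + (A-B)$: writing $B$'s SVD tail as $\bar U_B \bar\Sigma_B \bar V_B^\star$ gives $P_B^\perp B = \bar U_B \bar\Sigma_B \bar V_B^\star$, hence $\|P_B^\perp B\| = \sigma_{d+1}(B) = t$ and the identity $P_B^\perp B = P_B^\perp B\, Q_B^\perp$. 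Consequently $\|P_B^\perp B V_A\| = \|P_B^\perp B\, Q_B^\perp V_A\| \le t\,\|Q_B^\perp V_A\| = t\,y$, while $\|P_B^\perp (A-B) V_A\| \le e$ since $P_B^\perp$ and $V_A$ are contractions. Combining the two bounds yields $s\,x \le t\,y + e$. Running the identical argument with $A,B$ replaced by $A^\star, B^\star$ (whose left singular subspaces are the right singular subspaces of $A,B$, with the same singular values and $\|A^\star - B^\star\| = e$) gives the symmetric inequality $s\,y \le t\,x + e$.

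It remains to decouple the system. The naive step of adding the two inequalities only produces $x+y \le 2e/\gamma$, losing the sharp constant; instead I would substitute. From $s\,y \le t\,x + e$ we get $y \le (t x + e)/s$, and inserting this into $s\,x \le t\,y + e$ and clearing denominators yields $s^2 x \le t^2 x + (s+t)e$, i.e. $(s^2 - t^2)\,x \le (s+t)\,e$. Because $s^2 - t^2 = (s-t)(s+t) = \gamma(s+t) > 0$, the factor $(s+t)$ cancels and $x \le e/\gamma$; the symmetric substitution gives $y \le e/\gamma$. This simultaneously proves $\|P_A - P_B\| \le \|A-B\|/\gamma$ and the stated analogue for the right singular subspaces. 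For the second claim, Weyl's inequality gives $\sigma_{d+1}(B) \le \sigma_{d+1}(A) + \|A-B\|$, so $\gamma = \sigma_d(A) - \sigma_{d+1}(B) \ge \gamma_A - \|A-B\| \ge \gamma_A/2 > 0$ under the hypothesis $\|A-B\| \le \gamma_A/2$; substituting $\gamma \ge \gamma_A/2$ into the first bound yields $\|P_A - P_B\| \le 2\|A-B\|/\gamma_A$.

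The main obstacle is the intrinsic coupling between the left and right singular subspaces: unlike the self-adjoint Davis--Kahan setting, any one-sided estimate for $\|P_A - P_B\|$ unavoidably involves $\|Q_A - Q_B\|$, so the two distances must be controlled together. The delicate point is that the sharp constant $1$ is recovered only by eliminating one variable through substitution, exploiting the factorization $s^2 - t^2 = (s-t)(s+t)$, rather than by symmetrizing and adding, which would degrade the constant to $2$. A secondary, purely technical matter is the infinite-dimensional bookkeeping for compact operators---existence of the countable SVD, the role of the (co)kernel in the tail, and the equal-rank identity $\|P_A - P_B\| = \|P_B^\perp P_A\|$---all of which are standard.
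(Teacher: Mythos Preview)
Your approach is correct, and for the second inequality it is essentially identical to the paper's proof: both of you invoke Weyl's inequality (\Cref{prop:weyl}) to get $\sigma_{d+1}(B)\le \sigma_{d+1}(A)+\|A-B\|$, deduce $\gamma\ge\gamma_A-\|A-B\|\ge\gamma_A/2$, and plug this into the first bound.

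The difference lies entirely in the first inequality. The paper does not prove it at all; it simply cites \citet{wedin1972perturbation} as the original source and moves on. You instead give a self-contained argument via the coupled estimates $s\,x\le t\,y+e$ and $s\,y\le t\,x+e$, followed by the substitution trick that exploits $s^2-t^2=(s-t)(s+t)$ to recover the sharp constant $1$. This is a standard and valid route to Wedin's theorem, and your observation that merely adding the two inequalities would lose a factor of $2$ is exactly the right point. What you gain is a self-contained exposition; what the paper gains is brevity by deferring to a classical reference. Your brief remarks about the infinite-dimensional bookkeeping (countable SVD, the equal-rank identity $\|P_A-P_B\|=\|P_B^\perp P_A\|$) are appropriate caveats, though you might note that the equal-rank identity tacitly assumes $\sigma_d(B)>0$ so that $P_B$ genuinely has rank $d$.
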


\begin{proof}
The first inequality is the original theorem \citep{wedin1972perturbation}. We prove the second inequality. By Weyl's inequality, \Cref{prop:weyl},
$$|\sigma_{d+1}(A) - \sigma_{d+1}(B)| \leq \normop{A-B}.
$$
Hence,
$$
\gamma \geq \sigma_d(A) - \sigma_{d+1}(A) - \normop{A-B} = \gamma_A - \normop{A-B} \geq \gamma_A/2.
$$
Therefore,
$$
\normop{P_A - P_B} \leq 2 \cdot \frac{\normop{A - B}}{\gamma_A}.
$$
\end{proof}

Let $A: \mathcal{H}_1 \rightarrow \mathcal{H}_2$ be a compact operator between Hilbert spaces with Singular Value Decomposition
$$
A=\sum_{i\geq 1}\sigma_i u_i \otimes v_i,
$$
where $\left\{v_i\right\} \subset \mathcal{H}_1$ and $\left\{u_i\right\} \subset \mathcal{H}_2$ are orthonormal families of singular vectors, and $(\sigma_i)_i$ are the singular values of $A$, which satisfy $\sigma_1 \geq \sigma_2 \geq \cdots > 0$. For all $d \geq 1$, $A_d \doteq \sum_{i=1}^d \sigma_i u_i \otimes v_i$ denotes its rank-$d$ truncated SVD. The following theorem states that $A_d$ is a best rank-$d$ approximation of $A$, and is the unique best approximation if $\sigma_d > \sigma_{d+1}$.

\begin{theorem}[Eckart-Young-Mirsky Theorem] \label{th:EYM} 
$$
\left\|A-A_d\right\|=\min _{B:\operatorname{rank}(B) \leq d}\|A-B\|.
$$
The result holds for any unitarily invariant norm, in particular for the operator and Hilbert-Schmidt norms. Moreover, if $\sigma_d > \sigma_{d+1}$, $A_d$ is the unique minimizer in the Hilbert-Schmidt norm.
\end{theorem}

{\color{black}
The next proposition allows us to control the difference between the rank-$d$ truncated SVD and any other rank-$d$ operator, in terms of the excess Hilbert-Schmidt loss. The result is of independent interest.

\begin{prop}[Excess-loss to operator-gap bound] \label{prop:excess_loss_to_operator_gap}
Let $A$ be such that
\(
\sigma_d>\sigma_{d+1}.
\)
Define
\(
\rho \doteq \sigma_{d+1}/\sigma_d\in[0,1).
\)
Let \(B\) be any operator of rank at most \(d\), and define the excess Hilbert-Schmidt loss of \(B\) as
\[
\Delta(B) \doteq \normHS{A-B}^2-\normHS{A-A_d}^2 = \normHS{A_d-B}^2
-2\langle B,A-A_d\rangle_{\scriptscriptstyle \mathrm{HS}}.
\]
Then
\[
\normop{A_d-B}
\le
\normHS{A_d-B}
\le
\frac{1}{\sqrt{1-\rho}}\,
\sqrt{\Delta(B)}.
\]
This result applied to \(A = \cT_{\delta}\), if $\sigma_{*,d}>\sigma_{*,d+1}$ implies
\[
{\cal E}_d(\theta,\omega,\delta)
\le
(1-\rho_{\delta,d})^{-1/2}
\sqrt{\Delta_\delta(\theta,\omega)},
\]
with $\rho_{\delta,d} \doteq \sigma_{*,d+1}/\sigma_{*,d}$.
\end{prop}

\begin{proof}
Let
\(
R \doteq A-A_d=\sum_{j>d}\sigma_j u_j\otimes v_j.
\)
If \(\rho=0\), then \(\sigma_{d+1}=0\), hence \(R=0\) and \(A=A_d\). Therefore
\[
\Delta(B)=\normHS{A_d-B}^2,
\]
and the claim follows from \(\normop{A_d-B}\leq \normHS{A_d-B}\).
Assume now that \(\rho>0\), and define the rescaled operator
\[
\widetilde A \doteq A_d+\rho^{-1}R.
\]
The singular values of \(\widetilde A\) are
\[
\sigma_1,\ldots,\sigma_d,\rho^{-1}\sigma_{d+1},\rho^{-1}\sigma_{d+2},\ldots .
\]
For every \(j>d\), \(\rho^{-1}\sigma_j\leq \rho^{-1}\sigma_{d+1}=\sigma_d\). Thus \(A_d\) is a best rank-\(d\) approximation of \(\widetilde A\) in Hilbert--Schmidt norm by \Cref{th:EYM}. Since \(\operatorname{rank}(B)\leq d\),
\[
0\leq
\normHS{\widetilde A-B}^2-\normHS{\widetilde A-A_d}^2
=
\normHS{A_d-B+\rho^{-1}R}^2-\normHS{\rho^{-1}R}^2 .
\]
Expanding the square gives
\[
0\leq
\normHS{A_d-B}^2
+2\rho^{-1}\langle A_d-B,R\rangle_{\scriptscriptstyle \mathrm{HS}} = \normHS{A_d-B}^2
-2\rho^{-1}\langle B,R\rangle_{\scriptscriptstyle \mathrm{HS}}.
\]
Hence
\[
2\langle B,R\rangle_{\scriptscriptstyle \mathrm{HS}}
\leq
\rho\,\normHS{A_d-B}^2.
\]
On the other hand,
\[
\begin{aligned}
\Delta(B)
&=
\normHS{A-B}^2-\normHS{A-A_d}^2  \\
&=
\normHS{A_d-B+R}^2-\normHS{R}^2 \\
&=
\normHS{A_d-B}^2
+2\langle A_d-B,R\rangle_{\scriptscriptstyle \mathrm{HS}} \\
&=
\normHS{A_d-B}^2
-2\langle B,R\rangle_{\scriptscriptstyle \mathrm{HS}}.
\end{aligned}
\]
Combining the last two displays yields
\[
\Delta(B)\geq (1-\rho)\normHS{A_d-B}^2.
\]
Therefore
\[
\normop{A_d-B}\leq \normHS{A_d-B}
\leq
\frac{1}{\sqrt{1-\rho}}\sqrt{\Delta(B)}.
\]
For the final claim, take \(A=\cT_\delta\), \(A_d=\cT_\delta^\upd\), and \(B=\Psi^\upd_{\theta}[\Phi^{\upd *}_{\theta}\,\vert\,\omega]\). Then \(\Delta(B)=\Delta_\delta(\theta,\omega)\) and the displayed bound gives the stated inequality for \({\cal E}_d(\theta,\omega,\delta)\).
\end{proof}
}

\section{Well-definedness of \Cref{eq:vanilla_loss} and role of compactness}
\label{app:loss_welldefined}

We review results from \citet{kostic2024ncp,meunier2025demystifying}. The conditional expectation operator $\cT$ is a bounded linear operator satisfying $\normop{\cT} \le 1$. Parametrized feature maps produce a finite-rank operator $\cT_\theta^\upd:L_2(X) \to L_2(Z)$ of rank at most $d$:
\[
\cT_\theta^\upd \;\doteq\; \sum_{i=1}^d \psi_{\theta,i}^\upd \otimes \varphi_{\theta,i}^\upd,
\]
where $\varphi_{\theta}^\upd:\cX\to \R^d$, $\psi_\theta^\upd:\cZ\to \R^d$, and for $u\in L_2(Z)$, $v\in L_2(X)$ we use the rank-one notation
\[
(u\otimes v)h \;\doteq\; u \,\langle v,h\rangle_{L_2(X)} = u \times \E[v(X) h(X)].
\]
By construction, $\cT_\theta^\upd$ is Hilbert--Schmidt (indeed finite-rank), hence $\normHS{\cT_\theta^\upd}<\infty$. It is shown in \citet{kostic2024ncp,meunier2025demystifying} that 
$$
\cL_0^\upd(\theta) = \normHS{\cT_\theta^\upd}^2 - 2 \operatorname{Tr}((\cT_\theta^\upd)^* \cT).
$$
As $\operatorname{Tr}(A^*B)$ is well-defined whenever $A$ is finite-rank and $B$ is bounded, the loss is always well-defined. Compactness is invoked in our analysis only to guarantee the existence of a singular system
$\{(\lambda_j,u_j,v_j)\}_{j\ge 1}$ and thus a well-defined truncated SVD
\[
\cT_d \doteq \sum_{j=1}^d \lambda_j\, u_j\otimes v_j.
\]

\section{Omitted Proofs}
\label{sec:omitted-proofs}

We provide the proofs omitted from the main text below.

\subsection{Proof of Proposition \ref{prop:loss_equivalence}}

Given $d \geq 1$, $\delta \geq 0$, parameter $\theta$ and $\omega \in \Rd$, define the operator 
$$
\cT_{\theta,\omega} = \Psi^\upd_{\theta} [\Phi^{\upd\star}_{\theta} \mid \omega] = \sum_{i=1}^{d}  \psi_{\theta,i}^\upd \otimes (\varphi_{\theta,i}^\upd, \omega_{i}): L_2(X) \times \R \to L_2(Z).
$$
We show that 
$$
\cL_{\delta}(\theta, \omega) = \normHS{\cT_\delta - \cT_{\theta,\omega}}^2 - \normHS{\cT_{\delta}}^2.
$$
To simplify the notations in this proof, we drop the $(d)$ subscript on the features. Let us define $\tilde{\Phi}_{\theta,\omega}: \Rd \to L_2(X) \times \R$ the operator whose adjoint is $\tilde{\Phi}_{\theta,\omega}^* = [\Phi_{\theta}^* \mid \omega]: L_2(X) \times \R \to \Rd$ such that we have $\cT_{\theta,\omega} = \Psi_{\theta} \tilde{\Phi}_{\theta,\omega}^*$. $\tilde{\Phi}_{\theta,\omega}$ is such that
$$
\tilde{\Phi}_{\theta,\omega}\beta = \sum_{i=1}^d \beta_i  (\varphi_{\theta,i}, \omega_{i}), \qquad \tilde{\Phi}_{\theta,\omega}^*(h,a) = \Phi_{\theta}^*h + a\cdot \omega.  
$$
Therefore,
\begin{equation} \label{eq:ext_cov}
    \tilde{\Phi}_{\theta,\omega}^\star \tilde{\Phi}_{\theta,\omega} =  \E[\phi_{\theta}(X)\varphi_{\theta}(X)\transpose] +  \omega \omega^{\transpose} = C_{\phi_{\theta}} + \omega \omega^{\transpose}
\end{equation}
Using the definition of the Hilbert-Schmidt norm and the cyclic property of the trace, we have
$$
\normHS{\cT_\delta - \cT_{\theta,\omega}}^2 - \normHS{\cT_{\delta}}^2 = \normHS{\cT_{\theta,\omega}}^2 -2 \operatorname{Tr}\left(\cT_{\theta,\omega}^{\star}\cT_\delta \right) = \operatorname{Tr}\left(\Psi^*_\theta \Psi_\theta \tilde{\Phi}^*_{\theta,\omega}\tilde{\Phi}_{\theta,\omega} \right) -2 \operatorname{Tr}\left(\Psi^*_{\theta} \cT_\delta \tilde{\Phi}_{\theta,\omega} \right).
$$
For the first term, exploiting \Cref{eq:ext_cov} and the linearity of the trace, we have
$$
\begin{aligned}
  \operatorname{Tr}\left(\Psi^*_\theta \Psi_\theta \tilde{\Phi}^*_{\theta,\omega} \tilde{\Phi}_{\theta,\omega}\right)  &= \operatorname{Tr}\left(C_{\psi_{\theta}} C_{\varphi_{\theta}} +  C_{\psi_{\theta}} \omega \omega\transpose \right) \\ &= \operatorname{Tr}(C_{\psi_{\theta}} C_{\varphi_{\theta}}) + \omega\transpose C_{\psi_{\theta}} \omega \\ &= \E_X \E_Z[(\varphi_{\theta}(X)\transpose \psi_{\theta}(Z))^2] +  \omega\transpose C_{\psi_{\theta}} \omega,
\end{aligned}
$$
where $\E_X \E_Z$ denotes the expectation where $X$ and $Z$ are treated as independent random variables drawn from their respective marginal distributions.

For the second term, recall that $\cT_{\delta} = [\cT \mid \delta r_0]$. For all $\beta \in \Rd$, we have
$$
\begin{aligned}
    [\Psi^*_\theta \cT_\delta \tilde{\Phi}_{\theta,\omega} \beta]_j &= \sum_{i=1}^d \beta_i [\Psi^*_\theta \cT _\delta (\varphi_{\theta,i}, \omega_{i})]_j \\ 
    &= \sum_{i=1}^d \beta_i [\Psi^*_{\theta} \left( \omega_{i} \cdot \delta \cdot r_0 + \cT \varphi_{\theta,i} \right)]_j \\
    &= \sum_{i=1}^d \beta_i  \left(\omega_{i} \cdot \delta \cdot \E[Y \psi_{\theta,j}(Z)] + \E[\varphi_{\theta,i}(X)\psi_{\theta,j}(Z)] \right). 
\end{aligned}
$$
Therefore $\Psi^*_\theta \cT_\delta \tilde{\Phi}_{\theta,\omega} \beta = \delta \cdot \E[Y \psi_{\theta}(Z)] \omega\transpose  + \E[\psi_{\theta}(Z)\varphi_{\theta}(X)\transpose]$, and
$$
\operatorname{Tr}\left(\Psi^*_{\theta} \cT_\delta \tilde{\Phi}_{\theta,\omega} \right) = \E[\varphi_{\theta}(X)\transpose \psi_{\theta}(Z)] + \delta \cdot \omega\transpose \E[Y \psi_{\theta}(Z)].
$$
Putting it together, we obtain  
$$
\begin{aligned}
\normHS{\cT_\delta - \cT_{\theta,\omega}}^2 - \normHS{\cT_{\delta}}^2 &= \E_X \E_Z[(\varphi_{\theta}(X)\transpose \psi_{\theta}(Z))^2] +  \omega\transpose C_{\psi_{\theta}} \omega - 2\E[\varphi_{\theta}(X)\transpose \psi_{\theta}(Z)] - 2 \delta \cdot \omega\transpose \E[Y \psi_{\theta}(Z)] \\ &= \cL_{0}(\theta) +  \omega\transpose C_{\psi_{\theta}} \omega - 2 \delta \cdot \omega\transpose \E[Y \psi_{\theta}(Z)] \\ &= \cL_{\delta}(\theta, \omega).
\end{aligned}
$$
We conclude the proof with \Cref{th:EYM}:
$$
\normHS{\cT_\delta - \cT_{\theta,\omega}}^2 - \normHS{\cT_{\delta}}^2 \ge \normHS{\cT_\delta - \cT_{\delta}^\upd}^2 - \normHS{\cT_{\delta}}^2 = - \normHS{\cT_{\delta}^\upd}^2.
$$

\subsection{Proof of Proposition \ref{prop:alignment_pop}}

Recall the SVD of $\cT_{\delta}$ (\Cref{eq:svd_T_delta})
$$
\cT_{\delta} = \sum_{i \geq 1} \sigma_{\star,i} \psi_{\star,i} \otimes (\varphi_{\star,i}, \omega_{\star,i}).
$$
The key relationships it satisfies are $\sigma_{\star,i} \varphi_{\star,i} = \cT^* \psi_{\star,i}$, and $\sigma_{\star,i} \omega_{\star,i} = \delta \langle r_0, \psi_{\star,i} \rangle = \delta \cdot \sigma_{\star,i} \langle h_0, \varphi_{\star,i} \rangle  $ (see \Cref{eq:svd_relationships_delta}). 
In \Cref{eq:cov_formula}, we show that $\Phi_{\star}^{\upd \star}\Phi_{\star}^{\upd}=I_d-\omega_{\star}\omega_{\star}^{\transpose}$. Thus \[
\Pi_{\phi^\upd_{\star}}=\Phi_{\star}^{\upd}(\Phi_{\star}^{\upd\star}\Phi_{\star}^{\upd})^{-1}\Phi_{\star}^{\upd\star}=\Phi_{\star}^{\upd}(I_d-\omega_{\star}\omega^{\transpose}_{\star})^{-1}\Phi_{\star}^{\upd\star}.
\]
Next, observe that 
$$
\langle \phi_{\star,i} ,h_0\rangle=\sigma_{\star,i}^{-1}\langle\cT^{*} \psi_{\star,i},h_0\rangle=\sigma_{\star,i}^{-1}\langle\psi_{\star,i},\cT h_0\rangle=\sigma_{\star,i}^{-1}\E[Y\psi_{\star,i}(Z)] = \alpha_i.
$$
Therefore, $\alpha$ is such that $\alpha = \Phi_{\star}^{\upd\star} h_0$. Alignment of $h_0$ to the true spectral features (of $\cT_{\delta}$) then satisfies:
$$
\| \Pi_{\phi^\upd_{\star}} h_0 \|^2 = \langle h_0, \Pi_{\phi^\upd_{\star}} h_0 \rangle = \left( \Phi_{\star}^{\upd\star} h_0 \right)^{\transpose} \left(I_d - \omega_{\star} \omega_{\star}^{\transpose}\right)^{-1} \Phi_{\star}^{\upd\star} h_0  =  \alpha^{\transpose} \left(I_d - \omega_{\star} \omega^{\transpose}_{\star}\right)^{-1}  \alpha.
$$

\subsection{Approximation Error Analysis}\label{app:approx_error}

Recall that for $d\geq 1$, we fixed a partition $\overline{N} \,\dot{\cup} \,\underline{N}=\N$, such that $|\overline{N}|{=}d$, $\cT {=} \overline{U}_d \overline{\Lambda}_d \overline{V}_d^* {+} \underline{U}_d \underline{\Lambda}_d \underline{V}_d^*$. We introduce the projection operators
$$
\Pi_{\overline{U}_d} = \overline{U}_d\overline{U}_d^* \qquad \Pi_{\overline{V}_d} = \overline{V}_d\overline{V}_d^* \qquad \Pi_{\underline{U}_d} = \underline{U}_d\underline{U}_d^* = I - \Pi_{\overline{U}_d} \qquad \Pi_{\underline{V}_d} = \underline{V}_d\underline{V}_d^* = I - \Pi_{\overline{V}_d}.
$$
We then decompose $h_0$ as
$$
h_0 = \overline{h}_0 + \underline{h}_0  = \Pi_{\overline{V}_d} h_0 + \Pi_{\underline{V}_d}h_0  =  \sum_{i\in\overline{N}} \overline{\alpha}_i v_i + \sum_{i\in\underline{N}}\underline{\alpha}_i v_i.
$$
Let us define $\overline{\lambda}_{\min}$ as the smallest positive entries of $\overline{\Lambda}_d$. 

\begin{prop} \label{prop:ill_posed_meas}
    $\overline{\lambda}_{\min}$ is such that
\begin{equation*} 
    \sup_{h \ne 0} \frac{\| \Pi_{\overline{V}_d} h \|_{L_2(X)}}{\| \cT \, \Pi_{\overline{V}_d} h \|_{L_2(Z)}} = \overline{\lambda}_{\min}^{-1}.
\end{equation*}
\end{prop}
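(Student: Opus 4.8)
The plan is to reduce the supremum to a finite-dimensional Rayleigh-quotient computation using the SVD of $\cT$. For arbitrary $h\in L_2(X)$, set $c_i\doteq\langle h,v_i\rangle_{L_2(X)}$ and expand $\Pi_{\overline{V}_d}h=\sum_{i\in\overline{N}}c_i v_i$, a finite sum since $|\overline{N}|=d$. Because $\{v_i\}$ is orthonormal, $\|\Pi_{\overline{V}_d}h\|_{L_2(X)}^2=\sum_{i\in\overline{N}}c_i^2$. Applying $\cT=\sum_{j\geq1}\lambda_j u_j\otimes v_j$ term by term gives $\cT\,\Pi_{\overline{V}_d}h=\sum_{i\in\overline{N}}\lambda_i c_i u_i$, and since $\{u_i\}$ is orthonormal, $\|\cT\,\Pi_{\overline{V}_d}h\|_{L_2(Z)}^2=\sum_{i\in\overline{N}}\lambda_i^2 c_i^2$.

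Next I would handle the degenerate directions. If $\Pi_{\overline{V}_d}h=0$ then both numerator and denominator vanish, so such $h$ do not contribute; conversely, since each $\lambda_i>0$ and $\overline{N}$ is finite, the restriction of $\cT$ to $\mathrm{span}\{v_i:i\in\overline{N}\}$ is injective, so $\|\cT\,\Pi_{\overline{V}_d}h\|_{L_2(Z)}=0$ only when $\Pi_{\overline{V}_d}h=0$. Hence the supremum is effectively taken over the nonempty set of $h$ with $\Pi_{\overline{V}_d}h\neq0$, where the ratio is well defined. For such $h$, writing $\overline{\lambda}_{\min}=\min_{i\in\overline{N}}\lambda_i$ (attained, as $\overline{N}$ is finite) and using $\lambda_i^2\geq\overline{\lambda}_{\min}^2$ for all $i\in\overline{N}$,
$$
\frac{\|\Pi_{\overline{V}_d}h\|_{L_2(X)}^2}{\|\cT\,\Pi_{\overline{V}_d}h\|_{L_2(Z)}^2}=\frac{\sum_{i\in\overline{N}}c_i^2}{\sum_{i\in\overline{N}}\lambda_i^2 c_i^2}\leq\frac{\sum_{i\in\overline{N}}c_i^2}{\overline{\lambda}_{\min}^2\sum_{i\in\overline{N}}c_i^2}=\overline{\lambda}_{\min}^{-2}.
$$

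Finally I would establish that the bound is tight by exhibiting a maximizer: pick $i^\star\in\overline{N}$ with $\lambda_{i^\star}=\overline{\lambda}_{\min}$ and take $h=v_{i^\star}$, so that $\Pi_{\overline{V}_d}h=v_{i^\star}$ and the ratio equals exactly $\lambda_{i^\star}^{-1}=\overline{\lambda}_{\min}^{-1}$. Taking square roots in the displayed inequality and combining with this attainment yields $\sup_{h\neq0}\|\Pi_{\overline{V}_d}h\|_{L_2(X)}/\|\cT\,\Pi_{\overline{V}_d}h\|_{L_2(Z)}=\overline{\lambda}_{\min}^{-1}$. There is no substantial obstacle here; the only point requiring care is the bookkeeping of the degenerate case $\Pi_{\overline{V}_d}h=0$ and the (routine) observation that $\cT$ is bounded below on the finite-dimensional signal subspace $\mathcal{R}(\overline{V}_d)$, which guarantees the denominator does not vanish on the relevant directions.
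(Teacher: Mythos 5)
Your proof is correct and takes essentially the same route as the paper's: expand $\Pi_{\overline{V}_d}h$ in the singular basis of $\cT$, reduce the supremum to a finite-dimensional Rayleigh quotient bounded by $\overline{\lambda}_{\min}^{-1}$, and attain equality at the singular vector $v_{i^\star}$ with $\lambda_{i^\star}=\overline{\lambda}_{\min}$. The only difference is cosmetic: the paper disposes of degenerate directions by normalizing $\|\Pi_{\overline{V}_d}h\|_{L_2(X)}=1$ via homogeneity, whereas you exclude the case $\Pi_{\overline{V}_d}h=0$ explicitly, which is a minor additional bookkeeping step.
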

\begin{proof}
Since the quantity is homogeneous in $h$, we may restrict the supremum to
$$\| \Pi_{\overline{V}_d} h\|_{L_2(X)}=1.$$
Write $\Pi_{\overline{V}_d} h=\sum_{i\in\overline N}\alpha_i v_i$ so that $\sum_{i\in\overline N}|\alpha_i|^2=1$. Using $\cT v_i=\lambda_i u_i$ for $i\in\overline N$ we get
\[
\| \cT \,\Pi_{\overline{V}_d} h\|_{L_2(Z)}^2=\sum_{i\in\overline N}\lambda_i^2|\alpha_i|^2.
\]
Hence, for such $h$,
\[
\frac{\|\Pi_{\overline{V}_d} h\|_{L_2(X)}}{\|\cT\,\Pi_{\overline{V}_d} h\|_{L_2(Z)}}
=\frac{1}{\big(\sum_{i\in\overline N}\lambda_i^2|\alpha_i|^2\big)^{1/2}}
\le \frac{1}{\overline\lambda_{\min}}.
\]
Equality is attained by taking $h$ proportional to the singular vector $v_{i^*}$ with $\lambda_{i^*}=\overline\lambda_{\min}$, which concludes the proof.
\end{proof}

The following proposition states a general approximation error bound when the span of the learned features is close to the singular spaces of $\cT$ associated to $\overline{N}$.


\begin{prop}\label{prop:AB}
    Let $A,B$ be constants such that $\normop{\Pi_{\overline{U}_d} - \Pi_{\psi_{\theta}}} \leq A$, $\normop{\Pi_{\overline{V}_d} - \Pi_{\phi_{\theta}}} \leq B$, and 
    $$
    1-B-\frac{A}{\overline{\lambda}_{\min}} > 0.
    $$
    Then,
    $$
    \| h_{\theta} - h_0 \|_{L_2(X)} \leq \left(1-B-\frac{A}{\overline{\lambda}_{\min}}\right)^{-1}\left(B\|h_0\|_{L_2(X)}  + \|\Pi_{\underline{V}_d}h_0  \|_{L_2(X)}\right).
    $$
\end{prop}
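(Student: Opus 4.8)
The plan is to write $e \doteq h_\theta - h_0$ and control it through its ``signal'' component $\Pi_{\overline{V}_d}e$ and its ``noise'' component $(I-\Pi_{\overline{V}_d})e$, producing a self-referential inequality of the form $\|e\| \le (B + A/\overline{\lambda}_{\min})\|e\| + B\|h_0\| + \|\Pi_{\underline{V}_d}h_0\|$ that I rearrange using the hypothesis $1 - B - A/\overline{\lambda}_{\min} > 0$. Two facts feed the argument. First, the instrumental moment condition $C_{ZX,\theta}\beta_\theta = \E[Y\psi_\theta(Z)] = \E[h_0(X)\psi_\theta(Z)]$ means $\E[(h_\theta - h_0)(X)\psi_{\theta,j}(Z)] = 0$ for every $j$, i.e. $\langle \cT e, \psi_{\theta,j}\rangle_{L_2(Z)} = 0$, so $\Pi_{\psi_\theta}\cT e = 0$. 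Second, $h_\theta = \varphi_\theta(\cdot)\transpose\beta_\theta$ lies in the span of the learned $X$-features, so $\Pi_{\phi_\theta}h_\theta = h_\theta$.

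For the signal component I will use the SVD intertwining identity $\cT\,\Pi_{\overline{V}_d} = \Pi_{\overline{U}_d}\,\cT$ (both equal $\overline{U}_d\overline{\Lambda}_d\overline{V}_d^\star$, from $\cT = \overline{U}_d\overline{\Lambda}_d\overline{V}_d^\star + \underline{U}_d\underline{\Lambda}_d\underline{V}_d^\star$ and orthonormality of the two singular systems). Then $\cT\,\Pi_{\overline{V}_d} e = \Pi_{\overline{U}_d}\cT e = (\Pi_{\overline{U}_d} - \Pi_{\psi_\theta})\cT e$, subtracting the vanishing term $\Pi_{\psi_\theta}\cT e$. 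Hence $\|\cT\,\Pi_{\overline{V}_d} e\| \le A\,\|\cT e\| \le A\,\|e\|$, the last inequality because the conditional-expectation operator is a contraction, $\|\cT\| \le 1$ (Jensen). Applying \Cref{prop:ill_posed_meas} to $e$ gives $\|\Pi_{\overline{V}_d}e\|_{L_2(X)} \le \overline{\lambda}_{\min}^{-1}\,\|\cT\,\Pi_{\overline{V}_d}e\|_{L_2(Z)} \le (A/\overline{\lambda}_{\min})\,\|e\|_{L_2(X)}$.

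For the noise component, write $(I-\Pi_{\overline{V}_d})e = (I-\Pi_{\overline{V}_d})h_\theta - (I-\Pi_{\overline{V}_d})h_0$. Using $h_\theta = \Pi_{\phi_\theta}h_\theta$ and the identity $(I-\Pi_{\overline{V}_d})\Pi_{\phi_\theta} = (I-\Pi_{\overline{V}_d})(\Pi_{\phi_\theta} - \Pi_{\overline{V}_d})$, which holds since $(I-\Pi_{\overline{V}_d})\Pi_{\overline{V}_d}=0$, the first term obeys $\|(I-\Pi_{\overline{V}_d})h_\theta\| \le \|\Pi_{\phi_\theta}-\Pi_{\overline{V}_d}\|\,\|h_\theta\| \le B\|h_\theta\|$; and $(I-\Pi_{\overline{V}_d})h_0 = \Pi_{\underline{V}_d}h_0$ because $h_0 \in \cN(\cT)^\perp$, the closed span of the $v_i$. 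Thus $\|(I-\Pi_{\overline{V}_d})e\|_{L_2(X)} \le B\|h_\theta\|_{L_2(X)} + \|\Pi_{\underline{V}_d}h_0\|_{L_2(X)}$. Combining the two components through the triangle inequality yields $\|e\| \le (A/\overline{\lambda}_{\min})\|e\| + B\|h_\theta\| + \|\Pi_{\underline{V}_d}h_0\|$; bounding $\|h_\theta\| \le \|h_0\| + \|e\|$ and collecting the $\|e\|$ terms gives $(1 - B - A/\overline{\lambda}_{\min})\|e\| \le B\|h_0\| + \|\Pi_{\underline{V}_d}h_0\|$, which is the claim.

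I do not expect a genuine obstacle: the argument is essentially a chain of triangle inequalities plus \Cref{prop:ill_posed_meas}. The only points that need care are invoking the intertwining identity $\cT\Pi_{\overline{V}_d} = \Pi_{\overline{U}_d}\cT$ correctly and remembering the contraction bound $\|\cT\|\le 1$ (automatic for a conditional-expectation operator), without which a spurious factor $\lambda_1$ would appear in front of $A$. A secondary subtlety is keeping track of the difference between $I-\Pi_{\overline{V}_d}$ and $\Pi_{\underline{V}_d}$, which coincide on $\cN(\cT)^\perp$ and hence on $h_0$, but not on $e$ or $h_\theta$ in general.
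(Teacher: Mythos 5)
Your proof is correct and follows essentially the same route as the paper's: the moment condition giving $\Pi_{\psi_\theta}\cT(h_\theta-h_0)=0$, the decomposition of the error along $\Pi_{\overline{V}_d}$ and its complement, the intertwining identity $\cT\,\Pi_{\overline{V}_d}=\Pi_{\overline{U}_d}\cT$ with $\|\cT\|\le 1$ and \Cref{prop:ill_posed_meas} for the signal part, the bound $\|\Pi_{\phi_\theta}-\Pi_{\overline{V}_d}\|\,\|h_\theta\|$ for the noise part, and the final rearrangement of the self-referential inequality. The only cosmetic difference is how the noise term is algebraically rewritten (your $(I-\Pi_{\overline{V}_d})(\Pi_{\phi_\theta}-\Pi_{\overline{V}_d})$ versus the paper's $\Pi_{\underline{V}_d}-\Pi_{\phi_\theta}^{\perp}=\Pi_{\phi_\theta}-\Pi_{\overline{V}_d}$), which is immaterial.
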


\begin{proof} Recall that under \Cref{asst:eigenvalue}, $h_\theta(x) = \varphi_\theta(x)\transpose \beta_\theta$ satisfies $C_{ZX,\theta} \beta_\theta  = \mathbb{E}[r_0(Z)\psi_\theta(Z)]$. Observing that $\mathbb{E}[r_0(Z)\psi_\theta(Z)] = \Psi_{\theta}^* r_0 = \Psi_{\theta}^* \cT h_0 $ and using that $C_{ZX,\theta} = \Psi_{\theta}^*\cT\Phi_{\theta}$, we have $C_{ZX,\theta} \beta_\theta  = \Psi_{\theta}^*\cT\ h_{\theta}$. Therefore $0 = \Psi_{\theta}^* \cT (h_{\theta} - h_0)$, which implies
\begin{equation}\label{eq:moment_cond_approx}
    \Pi_{\psi_{\theta}}\cT \left( h_{\theta} - h_0  \right) = 0
\end{equation}
We then have the following chain of inequalities,
$$
\begin{aligned}
    \| h_{\theta} - h_0 \|_{L_2(X)} &\leq \| \Pi_{\overline{V}_d} \left(h_{\theta} - h_0 \right)\|_{L_2(X)} + \| \Pi_{\underline{V}_d} \left(h_{\theta} - h_0 \right)\|_{L_2(X)}  \\
    &\leq \overline{\lambda}_{\min}^{-1} \| \cT~\Pi_{\overline{V}_d} \left(h_{\theta} - h_0 \right)\|_{L_2(Z)} +  \| \Pi_{\underline{V}_d} h_{\theta} \|_{L_2(X)} +  \| \Pi_{\underline{V}_d}  h_0 \|_{L_2(X)} \\
    &= \overline{\lambda}_{\min}^{-1} \| \Pi_{\overline{U}_d} \cT \left(h_{\theta} - h_0 \right)\|_{L_2(Z)} +  \| \left(\Pi_{\underline{V}_d} - \Pi_{\phi_{\theta}}^{\perp}\right) h_{\theta} \|_{L_2(X)} +  \|\Pi_{\underline{V}_d}h_0  \|_{L_2(X)} \\
    &\leq \overline{\lambda}_{\min}^{-1} \normop{\Pi_{\overline{U}_d} - \Pi_{\psi_{\theta}}} \normop{\cT } \|h_{\theta} - h_0\|_{L_2(X)} + \normop{\Pi_{\overline{V}_d} - \Pi_{\phi_{\theta}}}\| h_{\theta} \|_{L_2(X)} +  \|\Pi_{\underline{V}_d}h_0  \|_{L_2(X)} \\
    &\leq \frac{A}{\overline{\lambda}_{\min}} \|h_{\theta} - h_0\|_{L_2(X)} + B\| h_{\theta} - h_0 + h_0 \|_{L_2(X)} + \|\Pi_{\underline{V}_d}h_0  \|_{L_2(X)} \\ 
    &\leq \left(\frac{A}{\overline{\lambda}_{\min}}+B\right) \|h_{\theta} - h_0\|_{L_2(X)} + B\|h_0\|_{L_2(X)}  + \|\Pi_{\underline{V}_d}h_0  \|_{L_2(X)},
\end{aligned}
$$
where in the second inequality we used \Cref{prop:ill_posed_meas} and the triangular inequality, in the equality we used $\cT~\Pi_{\overline{V}_d} =  \Pi_{\overline{U}_d} \cT$ and $\Pi_{\phi_{\theta}}^{\perp} h_{\theta} = 0$, in the third inequality we used \Cref{eq:moment_cond_approx} and in the fourth inequality we used $\normop{\cT} \leq 1$. Re-arranging, we obtain 
$$
   \left(1-\frac{A}{\overline{\lambda}_{\min}}-B\right)  \| h_{\theta} - h_0 \|_{L_2(X)} 
    \leq  B\|h_0\|_{L_2(X)}  + \|\Pi_{\underline{V}_d}h_0  \|_{L_2(X)},
$$
and the result follows.
\end{proof}

The next step is to obtain $A$ and $B$. Recall the SVD of $\cT_\delta$ in \Cref{eq:svd_T_delta},
$$
\cT_\delta = \sum_{i \geq 1} \sigma_{*,i} \psi_{*,i} \otimes (\varphi_{*,i}, \omega_{*,i}), \quad \psi_{*,i} \in L_2(Z), \quad (\varphi_{*,i}, \omega_{*,i}) \in L_2(X) \times \R, 
$$
We denote by $\Pi_{\psi_*^\upd}$ and $\Pi_{\phi_*^\upd}$ the orthogonal projections onto the span of $(\psi_{*,i})_{i=1}^d$ and  $(\phi_{*,i})_{i=1}^d$ respectively. It is important to note that $\phi_{*,i}$ is not a singular function of $\cT_{\delta}$ but the first component of the singular function $(\phi_{*,i},\omega_{*,i})$. Therefore the family $\{\phi_{*,i}\}_i$ is not guaranteed to be orthonormal. From the SVD, we deduce the following relationships for all $i \geq 1$
\begin{equation} \label{eq:svd_relationships_delta}
\begin{cases}
\sigma_{\star,i} \psi_{\star,i} = \cT_\delta(\varphi_{\star,i},\omega_{\star,i}),\\[6pt]
\sigma_{\star,i} \varphi_{\star,i} = \mathcal{T}^* \psi_{\star,i},\\[6pt]
\sigma_{\star,i} \omega_{\star,i} = \delta\,\langle \psi_{\star,i}, r_0\rangle_{L_2(Z)},
\end{cases}
\end{equation}
where we use the fact that
\[
\cT_\delta^\star r
=
\big(\cT^\star r,\; \delta\,\langle r,r_0\rangle_{L_2(Z)}\big)
\in L_2(X)\times\R,
\qquad r\in L_2(Z).
\]

By the triangular inequality,
$$
\begin{aligned}  
    &\normop{\Pi_{\overline{U}_d} - \Pi_{\psi_{\theta}}} \leq \normop{\Pi_{\overline{U}_d} - \Pi_{\psi_*^\upd}} + \normop{\Pi_{\psi_*^\upd} - \Pi_{\psi_{\theta}}} \qquad (Z-\text{features}) \\ 
    &\normop{\Pi_{\overline{V}_d} - \Pi_{\phi_{\theta}}} \leq \normop{\Pi_{\overline{V}_d} - \Pi_{\phi_*^\upd}} + \normop{\Pi_{\phi_*^\upd} - \Pi_{\phi_{\theta}}} \qquad (X-\text{features})
\end{aligned}
$$

\begin{prop}[Control of the $X$-features]\label{prop:V}
    \begin{equation} \label{eq:control_X_features_2}
    \normop{\Pi_{\phi_*^\upd} - \Pi_{\overline{V}_d}} \leq \frac{\normop{\Pi_{\psi_*^\upd} - \Pi_{\overline{U}_d}}}{\overline{\lambda}_{\min} (1- \normop{\Pi_{\psi_*^\upd} - \Pi_{\overline{U}_d}}) }.
    \end{equation}
    Assume in addition that $\overline{\lambda}_{\min} - \normop{\Pi_{\psi_*^\upd} - \Pi_{\overline{U}_d}} > 0$. Then,
    \begin{equation} \label{eq:control_X_features_1}
        \normop{\Pi_{\phi_*^\upd} - \Pi_{\phi_{\theta}}} \leq \frac{
        \normop{\cT^\upd_\delta - \cT_{\theta,\omega}}}{\overline{\lambda}_{\min} - \normop{\Pi_{\psi_*^\upd} - \Pi_{\overline{U}_d}}}.
    \end{equation}
\end{prop}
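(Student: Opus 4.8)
Write $\varepsilon \doteq \|\Pi_{\psi_*^\upd} - \Pi_{\overline{U}_d}\|$ and let $\iota\colon L_2(X)\to L_2(X)\times\R$, $h\mapsto(h,0)$, be the canonical isometric embedding. The plan is to derive both inequalities from the Wedin theorem (\Cref{th:wedin}), after recasting the three projections $\Pi_{\phi_*^\upd}$, $\Pi_{\overline{V}_d}$ and $\Pi_{\phi_\theta}$ as \emph{right} singular projections of rank-$d$ operators $L_2(X)\to L_2(Z)$, all built around the single anchor operator $A\doteq\Pi_{\psi_*^\upd}\cT$.

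The first and only non-mechanical step is this reformulation. Using the SVD relations \Cref{eq:svd_relationships_delta} ($\sigma_{\star,i}\varphi_{\star,i}=\cT^{\star}\psi_{\star,i}$) and orthonormality of $\{\psi_{\star,i}\}_{i\le d}$, a one-line computation gives the operator identity $\cT_\delta^\upd\iota = \Pi_{\psi_*^\upd}\cT = A$, since $\cT_\delta^\upd(h,0)=\sum_{i\le d}\sigma_{\star,i}\langle h,\varphi_{\star,i}\rangle\psi_{\star,i}=\sum_{i\le d}\langle \cT h,\psi_{\star,i}\rangle\psi_{\star,i}$. As $r_0=\cT h_0\in\cR(\cT)$ we have $\cR(\cT_\delta)\subseteq\cR(\cT)$, so $\cR(\Psi_*^\upd)\subseteq\overline{\cR(\cT)}=\cN(\cT^{\star})^{\perp}$, on which $\cT^{\star}$ is injective; hence the columns $\varphi_{\star,i}=\sigma_{\star,i}^{-1}\cT^{\star}\psi_{\star,i}$ are linearly independent, $A$ has rank exactly $d$, and $\cR(A^{\star})=\cT^{\star}\cR(\Psi_*^\upd)=\cR(\Phi_*^\upd)$. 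Thus $\Pi_{\phi_*^\upd}$ is the orthogonal projection onto the top-$d$ right singular subspace of $A$. The same computation with $\Psi_*^\upd$ replaced by $\overline{U}_d$ identifies $\Pi_{\overline{V}_d}$ with the top-$d$ right singular projection of $C\doteq\Pi_{\overline{U}_d}\cT=\overline{U}_d\overline{\Lambda}_d\overline{V}_d^{\star}$, and $\cT_{\theta,\omega}\iota=\Psi_{\theta}^{\upd}\Phi_{\theta}^{\upd\star}$ identifies $\Pi_{\phi_\theta}$ with the top-$d$ right singular projection of $B\doteq\cT_{\theta,\omega}\iota$ (under the standing hypothesis, needed for the 2SLS estimator to exist, that the learned features are linearly independent, so that $B$ has rank $d$).

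Next I would lower-bound $\sigma_d(A)$ by Courant--Fischer: $\sigma_d(A)=\max_{\dim W=d}\min_{0\ne h\in W}\|Ah\|/\|h\|\ge\min_{0\ne h\in\cR(\overline{V}_d)}\|Ah\|/\|h\|$. For $h\in\cR(\overline{V}_d)$ one has $\cT h\in\cR(\overline{U}_d)$ with $\|\cT h\|\ge\overline{\lambda}_{\min}\|h\|$ (as in \Cref{prop:ill_posed_meas}), and $\|(I-\Pi_{\psi_*^\upd})\cT h\|=\|(I-\Pi_{\psi_*^\upd})\Pi_{\overline{U}_d}\cT h\|\le\varepsilon\|\cT h\|$ because $\|(I-\Pi_{\psi_*^\upd})\Pi_{\overline{U}_d}\|\le\|\Pi_{\psi_*^\upd}-\Pi_{\overline{U}_d}\|=\varepsilon$; hence $\|Ah\|\ge(1-\varepsilon)\|\cT h\|\ge(1-\varepsilon)\overline{\lambda}_{\min}\|h\|$, so $\sigma_d(A)\ge(1-\varepsilon)\overline{\lambda}_{\min}\ge\overline{\lambda}_{\min}-\varepsilon$, using $\overline{\lambda}_{\min}\le\|\cT\|\le1$. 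Now Wedin is applied twice, each time comparing $A$ with a rank-$d$ operator, so the $(d{+}1)$-th singular value is $0$ and the relevant gap equals $\sigma_d(A)$. Against $C$: $\|A-C\|=\|(\Pi_{\psi_*^\upd}-\Pi_{\overline{U}_d})\cT\|\le\varepsilon\|\cT\|\le\varepsilon$ and the gap is $\sigma_d(A)\ge(1-\varepsilon)\overline{\lambda}_{\min}$, which gives \Cref{eq:control_X_features_2} (meaningful when $\varepsilon<1$). Against $B$: $\|A-B\|=\|(\cT_\delta^\upd-\cT_{\theta,\omega})\iota\|\le\|\cT_\delta^\upd-\cT_{\theta,\omega}\|$ since $\iota$ is isometric, and the gap is $\sigma_d(A)\ge\overline{\lambda}_{\min}-\varepsilon>0$ by hypothesis, which gives \Cref{eq:control_X_features_1}.

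The main obstacle lies entirely in the reformulation step: because the family $\{\varphi_{\star,i}\}$ is not orthonormal, $\Pi_{\phi_*^\upd}$ is not a coordinate projection, and the whole argument rests on recognizing it --- and likewise $\Pi_{\phi_\theta}$ --- as the right singular projection of an operator of the form $\Pi_{\psi}\cT$ or $(\text{learned operator})\cdot\iota$, together with the rank bookkeeping that makes the Wedin gap condition collapse to $\sigma_d(A)>0$. Once that is set up, the remaining work is the short Courant--Fischer estimate above and routine perturbation-inequality manipulation.
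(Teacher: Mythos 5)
Your argument is correct, and at the top level it follows the same strategy as the paper: recognize the three projections as top-$d$ singular-subspace projections of rank-$d$ operators anchored at $\Pi_{\psi_*^\upd}\cT$, and apply Wedin (\Cref{th:wedin}) with the gap collapsing to a lower bound on $\sigma_d$ because the comparison operators have rank $d$ (so $\sigma_{d+1}=0$). The differences are in execution, and they are worth noting. First, your identity $\cT_\delta^\upd\iota=\Pi_{\psi_*^\upd}\cT$ (your $A$ is exactly the adjoint of the paper's $A=\Phi_{\star}^{\upd}\Sigma_{\star}^{\upd}\Psi_{\star}^{\upd\star}=\pi(\cT_\delta^\upd)^*$) gets you to the anchor in one line from $\sigma_{\star,i}\varphi_{\star,i}=\cT^*\psi_{\star,i}$, whereas the paper reaches $\sigma_d(A)=\sigma_d(\Pi_{\psi_*^\upd}\cT)$ through the Gram identity $\Phi_{\star}^{\upd\star}\Phi_{\star}^{\upd}=I_d-\omega_\star\omega_\star^\transpose$ and $\cT_\delta\cT_\delta^*=\cT\cT^*+\delta^2 r_0\otimes r_0$. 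Second, you lower-bound $\sigma_d(A)$ by Courant--Fischer with test space $\cR(\overline{V}_d)$, getting $(1-\varepsilon)\overline{\lambda}_{\min}$, while the paper uses Weyl (\Cref{prop:weyl}) against $\sigma_d(\Pi_{\overline{U}_d}\cT)=\overline{\lambda}_{\min}$ to get $\overline{\lambda}_{\min}-\varepsilon$; your single bound serves both inequalities after the relaxation $(1-\varepsilon)\overline{\lambda}_{\min}\ge\overline{\lambda}_{\min}-\varepsilon$. Third, for \Cref{eq:control_X_features_2} you compare $\Pi_{\psi_*^\upd}\cT$ with $\Pi_{\overline{U}_d}\cT$, whereas the paper compares $\cT^*\Psi_\star^\upd$ with $\overline{V}_d\overline{\Lambda}_d\overline{U}_d^*\Psi_\star^\upd$ (operators $\R^d\to L_2(X)$); both pairings give the same bound. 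Fourth, for \Cref{eq:control_X_features_1} your estimate $\|A-B\|\le\|\cT_\delta^\upd-\cT_{\theta,\omega}\|$ via the isometry $\iota$ is cleaner and matches the stated numerator exactly; the paper's written proof instead decomposes $A-B$ in a way that carries an extra term $\sigma_1(\cT_\delta^\upd)\|\Pi_{\psi_*^\upd}-\Pi_{\psi_\theta}\|$ not reflected in the statement, so your route is arguably the more faithful derivation of the claim as stated. Finally, you make explicit the rank bookkeeping (linear independence of $\{\varphi_{\star,i}\}$ via injectivity of $\cT^*$ on $\overline{\cR(\cT)}$, and rank $d$ of $\Psi_\theta\Phi_\theta^{\upd\star}$ from the standing linear-independence assumption on the learned features), which the paper leaves implicit; both proofs implicitly require $\varepsilon<1$ for \Cref{eq:control_X_features_2} to be meaningful.
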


\begin{proof}
We start with \Cref{eq:control_X_features_1}. Let $\pi: L_2(X) \times \R \to L_2(X)$ be the canonical projection such that $\pi(h,a) = h$. Recall that
$$
\Phi_{\star}^{\upd} \beta = \sum_{i=1}^d \beta_i \phi_{*,i}.
$$
Let us define 
$$
\tilde{\Phi}_{\star}^{\upd} \beta = \sum_{i=1}^d \beta_i (\phi_{*,i}, \omega_{*,i}),
$$
that is such that $\pi \tilde{\Phi}_{\star}^{\upd} = \Phi_{\star}^{\upd}$. We then have the following decomposition
$$
\begin{aligned}
\Phi_{\star}^{\upd} \Sigma_{\star}^{\upd} \Psi_{\star}^{\upd\star} &= \pi( \tilde{\Phi}_{\star}^{\upd} \Sigma_{\star}^{\upd} \Psi_{\star}^{\upd\star}) 
= \pi (\cT^\upd_\delta)^* 
= 
\pi (\cT^\upd_\delta - \cT_{\theta,\omega})^*
+ \pi \cT_{\theta,\omega}^*
%
\\ & = 
\pi (\cT^\upd_\delta - \cT_{\theta,\omega})^*
+ \Phi_{\theta}\Psi_{\theta}^*
\end{aligned}
$$
We will apply Wedin Sin-$\Theta$ Theorem, \Cref{th:wedin}, to $A = \Phi_{\star}^{\upd} \Sigma_{\star}^{\upd} \Psi_{\star}^{\upd\star}$ and $B = \Phi_{\theta}\Psi_{\theta}^*$. Note that
$$
\begin{aligned}
\normop{A - B} &= \normop{\Phi_{\star}^{\upd} \Sigma_{\star}^{\upd} \Psi_{\star}^{\upd\star} - \Phi_{\theta}\Psi_{\theta}^* }\\
&= \normop{ \pi (\cT^\upd_\delta)^*(\Pi_{\psi_*^\upd} -  \Pi_{\psi_{\theta}}) + \pi (\cT^\upd_\delta - \cT_{\theta,\omega})^*\Pi_{\psi_{\theta}} }\\ 
&\leq \sigma_1(\cT^\upd_\delta) \normop{ \Pi_{\psi_*^\upd} -  \Pi_{\psi_{\theta}} } + \normop{\cT^\upd_\delta - \cT_{\theta,\omega}}.               
\end{aligned}
$$
As $(\phi_{*,i},\omega_{*,i})_{i=1}^d$ forms an orthonormal family, we have $\tilde{\Phi}_{\star}^{\upd\star }\tilde{\Phi}_{\star}^{\upd} = I_d$. On the other hand, similarly to \Cref{eq:ext_cov}, we have 
\begin{equation}\label{eq:cov_formula}
I_d =  \tilde{\Phi}_{\star}^{\upd\star }\tilde{\Phi}_{\star}^{\upd}=  \Phi_{\star}^{\upd\star }\Phi_{\star}^{\upd} + \omega_{*}\omega_{*}^{\transpose}
\end{equation}
Therefore,
$$
\begin{aligned}
    \sigma_{d}(A)^2 = \sigma_{d}(\Phi_{\star}^{\upd} \Sigma_{\star}^{\upd} \Psi_{\star}^{\upd\star})^2 &= \sigma_{d}(\Psi_{\star}^\upd \Sigma_{\star}^{\upd} \Phi_{\star}^{\upd\star }\Phi_{\star}^{\upd} \Sigma_{\star}^{\upd} \Psi_{\star}^{\upd\star}) \\ &= \sigma_{d}(\Psi_{\star}^\upd \Sigma_{\star}^{\upd} (I_d - \omega_{\star} \omega_{\star}^{\transpose}) \Sigma_{\star}^{\upd} \Psi_{\star}^{\upd\star}) \\ &= \sigma_{d}(\cT^\upd_\delta (\cT^\upd_\delta)^* - \delta^2 (\Psi_{\star}^\upd \Psi_{\star}^{\upd\star} r_0) \otimes (\Psi_{\star}^\upd \Psi_{\star}^{\upd\star} r_0)) \\ &= \sigma_{d}(\Pi_{\psi_*^\upd}(\cT_\delta (\cT_\delta)^* - \delta^2  r_0 \otimes  r_0) \Pi_{\psi_*^\upd}) \\ &= \sigma_{d}(\Pi_{\psi_*^\upd}\cT \cT^* \Pi_{\psi_*^\upd}) \\ &= \sigma_{d}(\Pi_{\psi_*^\upd}\cT )^2,
\end{aligned}
$$
where in the third equality, we used that fact that  $\omega_{\star} = \delta (\Sigma_{\star}^{\upd})^{-1}\Psi_{\star}^{\upd\star} r_0$ by \Cref{eq:svd_relationships_delta} and in the fifth equality we used $\cT_\delta (\cT_\delta)^* = \cT\cT^* + \delta^2 r_0 \otimes r_0$. 
Next, by Weyl's inequality, \Cref{prop:weyl}, 
$$
|\sigma_{d}(\Pi_{\psi_*^\upd}\cT) - \overline{\lambda}_{\min}| = |\sigma_{d}(\Pi_{\psi_*^\upd}\cT) - \sigma_{d}(\Pi_{\overline{U}_d}\cT)| \leq \normop{(\Pi_{\psi_*^\upd} - \Pi_{\overline{U}_d})\cT} \leq \normop{\Pi_{\psi_*^\upd} - \Pi_{\overline{U}_d}}.
$$
Hence $\sigma_{d}(A) \geq \overline{\lambda}_{\min} - \normop{\Pi_{\psi_*^\upd} - \Pi_{\overline{U}_d}} $ and  $\sigma_{d+1}(B) =  0$. We obtain \Cref{eq:control_X_features_1} applying \Cref{th:wedin}.

We now prove \Cref{eq:control_X_features_2}. We use 
\[
\Phi_{\star}^{\upd} \Sigma_{\star}^{\upd} = \cT^* \Psi_{\star}^\upd   
= \overline{V}_d \, \overline{\Lambda}_d \, \overline{U}_d^* \Psi_{\star}^\upd + \underline{V}_d \, \underline{\Lambda}_d \, \underline{U}_d^* \Psi_{\star}^\upd,
\]
We will apply Wedin Sin-$\Theta$ Theorem, \Cref{th:wedin}, to $A' = \overline{V}_d \, \overline{\Lambda}_d \, \overline{U}_d^* \Psi_{\star}^\upd$ and $B' = \Phi_{\star}^{\upd} \Sigma_{\star}^{\upd}$. Note that
\[
\normop{A' - B'} = \normop{ \overline{V}_d \, \overline{\Lambda}_d \, \overline{U}_d^* \Psi_{\star}^\upd - \Phi_{\star}^{\upd} \Sigma_{\star}^{\upd} } = \normop{ \underline{V}_d \, \underline{\Lambda}_d \, \underline{U}_d^* \Psi_{\star}^\upd }
\leq \normop{ \Pi_{\overline{U}_d} - \Pi_{\psi_*^\upd} },
\]
where for the last inequality, we use
\[
\underline{U}_d^* \Psi_{\star}^\upd = \underline{U}_d^* \Pi_{\underline{U}_d} \Psi_{\star}^\upd 
= \underline{U}_d^* \Pi_{\overline{U}_d}^{\perp} \Psi_{\star}^\upd
= \underline{U}_d^* [\Pi_{\overline{U}_d}^{\perp} - \Pi_{\psi_*^\upd}^{\perp}] \Psi_{\star}^\upd
= \underline{U}_d^* [\Pi_{\overline{U}_d} - \Pi_{\psi_*^\upd}] \Psi_{\star}^\upd.
\]
By the inequality for singular values of product of matrices, we have 
$$
\begin{aligned}
\sigma_{d}(A') = \sigma_{d}(\overline{V}_d \, \overline{\Lambda}_d \, \overline{U}_d^* \Psi_{\star}^\upd) &\geq \sigma_{d}(\overline{V}_d \, \overline{\Lambda}_d)\,\sigma_{d}(\overline{U}_d^* \Psi_{\star}^\upd)\\
&= \sigma_{d}(\overline{\Lambda}_d)\,\sigma_{d}(\overline{U}_d^* \Psi_{\star}^\upd)\\
&= \overline{\lambda}_{\min} \,\sigma_{d}(\overline{U}_d^* \Psi_{\star}^\upd)\\
&\geq \overline{\lambda}_{\min} (1- \normop{\Pi_{\psi_*^\upd} - \Pi_{\overline{U}_d}}).
\end{aligned}
$$
We conclude with \Cref{th:wedin}, using that $\sigma_{d+1}(B') =  0$. 
\end{proof}

\begin{prop}[Control of the $Z$-features]\label{prop:U}
Assume that ${\cal E}_d(\theta, \omega,\delta) \leq \sigma_d(\cT_{\delta})/2$. Then,
$$
\normop{ \Pi_{\psi_*^\upd} - \Pi_{\psi_{\theta}} } \leq 2\cdot\frac{{\cal E}_d(\theta,\omega,\delta)}{\sigma_d(\cT_{\delta})}  
$$
Assume that $\gamma_d(\delta) \doteq \normop{[\overline{\Lambda}_d(I + \delta^2 \overline{\alpha} \overline{\alpha}^{\transpose} )^{1/2}]^{-1}}^{-1} - \normop{\underline{\Lambda}_d} >0$ and $\normop{\underline{\Lambda}_d} \|\underline{\alpha}_0\|_{\ell_2} \leq \gamma_d(\delta)/2$. Then,
$$
\normop{ \Pi_{\overline{U}_d} - \Pi_{\psi_*^\upd} } \leq  2\cdot \frac{\delta \normop{\underline{\Lambda}_d} \|\underline{\alpha}_0\|_{\ell_2}}{\gamma_d(\delta)}.
$$

\end{prop}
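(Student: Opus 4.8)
Both inequalities follow from the two-sided form of Wedin's $\sin\Theta$ theorem (\Cref{th:wedin}), applied to two different pairs of compact operators. For the first bound I will compare the truncated SVD $\cT_\delta^\upd$ with the learned operator $\cT_{\theta,\omega}\doteq\Psi^\upd_{\theta}[\,\Phi^{\upd *}_{\theta}\mid\omega\,]$, whose operator-norm distance is by definition the optimality gap $\varepsilon_d(\theta,\omega,\delta)$. For the second bound I will compare the ``noiseless'' operator $\cT_\delta^{\mathrm{nl}}\doteq\overline{U}_d\overline{\Lambda}_d[\,\overline{V}_d^*\mid\delta\overline{\alpha}\,]+\underline{U}_d\underline{\Lambda}_d[\,\underline{V}_d^*\mid 0\,]$ (the first two terms of the block decomposition of $\cT_\delta$ from \Cref{sec:method}, with $\overline{\alpha}=\overline{V}_d^* h_0$ and $\underline{\alpha}_0=\underline{V}_d^* h_0$) with $\cT_\delta$ itself; their difference is the rank-one operator $E_\delta\doteq[\,0\mid\delta\,\underline{U}_d\underline{\Lambda}_d\underline{\alpha}_0\,]$.

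\textbf{First bound.} Take $A=\cT_\delta^\upd$ and $B=\cT_{\theta,\omega}$. Since the components of $\varphi_\theta^\upd$ are linearly independent, the map $c\mapsto(\Phi^\upd_\theta c,\langle\omega,c\rangle)$ is injective, so $B$ has rank exactly $d$ and its top-$d$ left singular subspace is $\cR(B)=\operatorname{span}\{\psi_{\theta,i}\}_{i=1}^d$, i.e.\ the range of $\Pi_{\psi_{\theta}}$; likewise the top-$d$ left singular subspace of $A=\sum_{i=1}^d\sigma_{*,i}\,\psi_{*,i}\otimes(\varphi_{*,i},\omega_{*,i})$ is $\operatorname{span}\{\psi_{*,i}\}_{i=1}^d$, the range of $\Pi_{\psi_*^\upd}$. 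By definition $\|A-B\|=\varepsilon_d(\theta,\omega,\delta)$, while $\sigma_d(A)=\sigma_d(\cT_\delta)$ and $\sigma_{d+1}(A)=0$ since $A$ has rank $d$, so the intrinsic gap of $A$ is $\sigma_d(\cT_\delta)$, which is positive because $\cT_\delta$ restricts to $\cT$ on $L_2(X)\times\{0\}$ and hence $\sigma_i(\cT_\delta)\geq\sigma_i(\cT)>0$ for all $i$. The hypothesis $\varepsilon_d(\theta,\omega,\delta)\leq\sigma_d(\cT_\delta)/2$ is exactly the perturbation-size condition of the second inequality of \Cref{th:wedin}, which yields $\|\Pi_{\psi_*^\upd}-\Pi_{\psi_{\theta}}\|\leq 2\|A-B\|/\sigma_d(\cT_\delta)=2\,\varepsilon_d(\theta,\omega,\delta)/\sigma_d(\cT_\delta)$.

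\textbf{Second bound.} Write $\cT_\delta^{\mathrm{nl}}=P_1+P_2$ with $P_1\doteq\overline{U}_d\overline{\Lambda}_d[\,\overline{V}_d^*\mid\delta\overline{\alpha}\,]$ and $P_2\doteq\underline{U}_d\underline{\Lambda}_d[\,\underline{V}_d^*\mid 0\,]$. The orthogonality relations $\overline{U}_d^*\underline{U}_d=0$ and $\overline{V}_d^*\underline{V}_d=0$ give $P_1P_2^*=0$ and $P_1^*P_2=0$, so $\cT_\delta^{\mathrm{nl}}(\cT_\delta^{\mathrm{nl}})^*=P_1P_1^*+P_2P_2^*$ is a sum of two positive operators with mutually orthogonal ranges; therefore the singular values of $\cT_\delta^{\mathrm{nl}}$ are the merged, sorted list of those of $P_1$ and of $P_2$. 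From $[\,\overline{V}_d^*\mid\delta\overline{\alpha}\,][\,\overline{V}_d^*\mid\delta\overline{\alpha}\,]^*=I_d+\delta^2\overline{\alpha}\,\overline{\alpha}^{\transpose}$ and $\overline{U}_d^*\overline{U}_d=I_d$ one gets $P_1P_1^*=\overline{U}_d\big[\overline{\Lambda}_d(I_d+\delta^2\overline{\alpha}\,\overline{\alpha}^{\transpose})\overline{\Lambda}_d\big]\overline{U}_d^*$, so $P_1$ has exactly $d$ positive singular values, the smallest being $\sigma_{\min}\big(\overline{\Lambda}_d(I_d+\delta^2\overline{\alpha}\,\overline{\alpha}^{\transpose})^{1/2}\big)=\big\|[\overline{\Lambda}_d(I_d+\delta^2\overline{\alpha}\,\overline{\alpha}^{\transpose})^{1/2}]^{-1}\big\|^{-1}$; and $P_2P_2^*=\underline{U}_d\underline{\Lambda}_d^2\,\underline{U}_d^*$, so the singular values of $P_2$ are $\{\lambda_i\}_{i\in\underline{N}}$, the largest being $\|\underline{\Lambda}_d\|$. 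Hence, when $\gamma_d(\delta)>0$ the $P_1$-block and the $P_2$-block are strictly separated: the $d$ largest singular values of $\cT_\delta^{\mathrm{nl}}$ are exactly those of $P_1$, its top-$d$ left singular subspace is $\cR(P_1)=\cR(\overline{U}_d)$, and its intrinsic gap is $\sigma_d(\cT_\delta^{\mathrm{nl}})-\sigma_{d+1}(\cT_\delta^{\mathrm{nl}})=\gamma_d(\delta)$. Finally $\|\cT_\delta^{\mathrm{nl}}-\cT_\delta\|=\|E_\delta\|=\delta\|\underline{U}_d\underline{\Lambda}_d\underline{\alpha}_0\|_{L_2(Z)}=\delta\|\underline{\Lambda}_d\underline{\alpha}_0\|_{\ell_2}\leq\delta\|\underline{\Lambda}_d\|\,\|\underline{\alpha}_0\|_{\ell_2}\leq\gamma_d(\delta)/2$ by hypothesis, and since the top-$d$ left singular subspace of $\cT_\delta$ is the range of $\Pi_{\psi_*^\upd}$, the second inequality of \Cref{th:wedin} applied to $\cT_\delta^{\mathrm{nl}}$ and $\cT_\delta$ gives $\|\Pi_{\overline{U}_d}-\Pi_{\psi_*^\upd}\|\leq 2\|E_\delta\|/\gamma_d(\delta)\leq 2\,\delta\|\underline{\Lambda}_d\|\,\|\underline{\alpha}_0\|_{\ell_2}/\gamma_d(\delta)$.

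\textbf{Main obstacle.} The mechanical steps — linear independence of the $\varphi$'s, the adjoint identities, the isometry properties of $\overline{U}_d,\underline{U}_d,\overline{V}_d,\underline{V}_d$ — are routine; the real content is the spectral bookkeeping for $\cT_\delta^{\mathrm{nl}}$: verifying the two cross-orthogonality identities that make it block diagonal in the sense above, identifying $\sigma_{\min}(P_1)$ with the quantity $\|[\overline{\Lambda}_d(I_d+\delta^2\overline{\alpha}\,\overline{\alpha}^{\transpose})^{1/2}]^{-1}\|^{-1}$ that enters the definition of $\gamma_d(\delta)$, and checking that $\gamma_d(\delta)>0$ together with $\delta\|\underline{\Lambda}_d\|\|\underline{\alpha}_0\|_{\ell_2}\leq\gamma_d(\delta)/2$ are precisely the eigen-gap and perturbation-magnitude hypotheses needed to invoke the two-sided Wedin bound for this pair. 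Once these are in place the conclusions are immediate.
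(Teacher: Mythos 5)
Your proof is correct and follows essentially the same route as the paper: the first bound applies the two-sided Wedin theorem to $\cT_\delta^\upd$ versus the learned rank-$d$ operator $\Psi^\upd_{\theta}[\Phi^{\upd\star}_{\theta}\mid\omega]$ with intrinsic gap $\sigma_d(\cT_\delta)$, and the second applies it to the "noiseless" operator (your $\cT_\delta^{\mathrm{nl}}$ is exactly the paper's $\tilde{Q}=[\cT\mid\delta\,\overline{U}_d\overline{\Lambda}_d\overline{V}_d^*h_0]$) versus $\cT_\delta$, using the same block-diagonalization of $\tilde{Q}\tilde{Q}^*$ to identify the gap $\gamma_d(\delta)$ and the rank-one remainder bounded by $\delta\|\underline{\Lambda}_d\|\|\underline{\alpha}_0\|_{\ell_2}$. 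Your explicit verification that the perturbation-size condition $\delta\|\underline{\Lambda}_d\|\|\underline{\alpha}_0\|_{\ell_2}\leq\gamma_d(\delta)/2$ (with the factor $\delta$, which the stated hypothesis omits) is what the factor-2 form of Wedin requires, and your identification of the top-$d$ left singular subspaces with the ranges of $\Pi_{\psi_*^\upd}$ and $\Pi_{\psi_\theta}$, are slightly more careful renderings of steps the paper leaves implicit.
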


\begin{proof}
The first inequality follows from \Cref{th:wedin} and the definition of ${\cal E}_d(\theta, \omega,\delta)$.

For the second inequality, using $\cT = \overline{U}_d \, \overline{\Lambda}_d \, \overline{V}_d^* + \underline{U}_d \, \underline{\Lambda}_d \, \underline{V}_d^*$, we get the following decomposition
$$
\cT_{\delta} =  [\cT \mid \delta r_0 ] = [\cT \mid \delta \cT h_0 ] = \underbrace{[\cT \mid \delta  \overline{U}_d \, \overline{\Lambda}_d \, \overline{V}_d^*  h_0 ]}_{\tilde{Q}} + [0 \mid \delta  \underline{U}_d \, \underline{\Lambda}_d \, \underline{V}_d^* h_0 ]
$$
Note that
$$
\normop{ \cT_{\delta} - \tilde{Q} } = \|\delta  \underline{U}_d \, \underline{\Lambda}_d \,  \underline{\alpha}_0\|_{L_2(Z)} \leq \delta \normop{\underline{\Lambda}_d} \|\underline{\alpha}_0\|_{L_2(X)}
$$
To analyze the spectrum of $\tilde{Q}$, we look at
\[
\tilde{Q} \tilde{Q}^* = \cT \cT^* +  \delta^2 (\overline{U}_d \, \overline{\Lambda}_d \, \overline{V}_d^*  h_0) \otimes (\overline{U}_d \, \overline{\Lambda}_d \, \overline{V}_d^*  h_0) = 
\left[ \overline{U}_d \,|\, \underline{U}_d \right]
\begin{bmatrix}
\overline{M} & 0 \\
0 & \underline{\Lambda}_d^2
\end{bmatrix}
\left[ \overline{U}_d \,|\, \underline{U}_d \right]^*,
\]
where $\overline{M} = \overline{\Lambda}_d^2 + \delta^2 \overline{\Lambda}_d \overline{V}_d^*(h_0 \otimes h_0) \overline{V}_d \overline{\Lambda}_d$. When
$\lambda_{\min}(\overline{M}) > \normop{\underline{\Lambda}_d^2}$ then $\Pi_{\overline{U}_d}$ is associated to the top spectrum of $\tilde{Q}$ and we can apply Wedin Sin-$\Theta$ Theorem, \Cref{th:wedin}, to $A=\tilde{Q}$ and $B = \cT_{\delta}$. As
$$
\overline{M} = \overline{\Lambda}_d(I + \delta^2 \overline{\alpha} \overline{\alpha}^{\transpose} ) \overline{\Lambda}_d,
$$
we have
$$
\lambda_{\min}(\overline{M}) = \sigma_{\min}^2(\overline{\Lambda}_d(I + \delta^2 \overline{\alpha} \overline{\alpha}^{\transpose} )^{1/2}) = \normop{[\overline{\Lambda}_d(I + \delta^2 \overline{\alpha} \overline{\alpha}^{\transpose} )^{1/2}]^{-1}}^{-2}.
$$
Therefore $\lambda_{\min}(\overline{M}) >\normop{\underline{\Lambda}_d^2}$ is equivalent to
$$
\sigma_d(A) - \sigma_{d+1}(B) = \normop{[\overline{\Lambda}_d(I + \delta^2 \overline{\alpha} \overline{\alpha}^{\transpose} )^{1/2}]^{-1}}^{-1} - \normop{\underline{\Lambda}_d} = \gamma_d(\delta)  >0.
$$
\end{proof}

\begin{prop}[Control of ill-posedness]\label{prop:ill_possedness_approx}
Under the assumptions of \Cref{prop:AB}, it holds that 
$$
c_{\varphi_\theta,\psi_\theta}\geq \overline{\lambda}_{\min}\,\Bigg(1-B-\frac{A}{\overline{\lambda}_{\min}}\Bigg).
$$
\end{prop}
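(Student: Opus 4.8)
The plan is to combine the identity $c_{\varphi_\theta^\upd,\psi_\theta^\upd} = \sigma_d(\Pi_{\psi_\theta}\cT\Pi_{\phi_\theta})$ recorded just after \Cref{asst:eigenvalue} with the variational (Courant--Fischer) characterization of singular values and the same perturbation bookkeeping already used in the proof of \Cref{prop:AB}. Since $\varphi_\theta^\upd$ has linearly independent components, $V \doteq \mathcal{R}(\Phi_\theta^\upd)$ is a $d$-dimensional subspace of $L_2(X)$, so using $V$ as a test subspace in Courant--Fischer gives
\[
c_{\varphi_\theta^\upd,\psi_\theta^\upd} = \sigma_d\!\big(\Pi_{\psi_\theta}\cT\Pi_{\phi_\theta}\big) \ \geq\ \inf_{h \in V,\ \|h\|_{L_2(X)}=1} \big\|\Pi_{\psi_\theta}\cT h\big\|_{L_2(Z)}.
\]
It therefore suffices to show that for every unit $h \in \mathcal{R}(\Phi_\theta^\upd)$ one has $\|\Pi_{\psi_\theta}\cT h\|_{L_2(Z)} \geq \overline{\lambda}_{\min}\,(1 - B - A/\overline{\lambda}_{\min})$.

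First I would fix such an $h$ and note that $\Pi_{\phi_\theta}^\perp h = 0$; since $\Pi_{\underline{V}_d} = I - \Pi_{\overline{V}_d}$ and $\Pi_{\phi_\theta}^\perp = I - \Pi_{\phi_\theta}$, this gives $\Pi_{\underline{V}_d}h = (\Pi_{\phi_\theta} - \Pi_{\overline{V}_d})h$, hence $\|\Pi_{\underline{V}_d}h\|_{L_2(X)} \leq B$ and $\|\Pi_{\overline{V}_d}h\|_{L_2(X)} \geq 1 - B$. Next, using $\Pi_{\overline{U}_d}\cT = \cT\Pi_{\overline{V}_d}$ together with \Cref{prop:ill_posed_meas},
\[
\|\Pi_{\overline{U}_d}\cT h\|_{L_2(Z)} = \|\cT\Pi_{\overline{V}_d}h\|_{L_2(Z)} \geq \overline{\lambda}_{\min}\,\|\Pi_{\overline{V}_d}h\|_{L_2(X)} \geq \overline{\lambda}_{\min}(1-B).
\]
Finally, a triangle inequality to swap $\Pi_{\overline{U}_d}$ for $\Pi_{\psi_\theta}$, using $\|\Pi_{\overline{U}_d} - \Pi_{\psi_\theta}\| \leq A$ and the contraction bound $\|\cT\| \leq 1$ (as in the proof of \Cref{prop:AB}), yields
\[
\|\Pi_{\psi_\theta}\cT h\|_{L_2(Z)} \geq \|\Pi_{\overline{U}_d}\cT h\|_{L_2(Z)} - \|(\Pi_{\psi_\theta} - \Pi_{\overline{U}_d})\cT\|\,\|h\|_{L_2(X)} \geq \overline{\lambda}_{\min}(1-B) - A,
\]
and rewriting the right-hand side as $\overline{\lambda}_{\min}(1 - B - A/\overline{\lambda}_{\min})$ and taking the infimum over $h$ completes the proof.

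The argument is essentially a repackaging of the chain of estimates inside the proof of \Cref{prop:AB}, so I do not anticipate a genuine difficulty; the one point meriting care is the first step. One must recognize that, because $\Pi_{\psi_\theta}\cT\Pi_{\phi_\theta}$ has rank at most $d$, its $d$-th singular value is bounded below by testing against the distinguished $d$-dimensional subspace $\mathcal{R}(\Phi_\theta^\upd)$ (only the ``$\geq$'' half of Courant--Fischer is needed, so no injectivity of $C_{ZX,\theta}$ has to be invoked), and then to arrange the bookkeeping so that the $X$-feature error $B$ is measured on the domain side of $\cT$ — hence attenuated by $\overline{\lambda}_{\min}$ after passing through $\cT$ — while the $Z$-feature error $A$ is measured on the range side and so appears unweighted. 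Consistency with \Cref{asst:eigenvalue} ($c_{\varphi_\theta^\upd,\psi_\theta^\upd}>0$) is then immediate from the standing hypothesis $1 - B - A/\overline{\lambda}_{\min}>0$ of \Cref{prop:AB}.
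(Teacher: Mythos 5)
Your proof is correct, and it reaches the paper's bound by a somewhat different mechanism. The paper works at the operator level: it decomposes $\Pi_{\Psi_\theta}\cT\Pi_{\phi_\theta}$ as $\Pi_{\overline{U}_d}\cT(\Pi_{\phi_\theta}+\Pi_{\underline{V}_d}) + (\Pi_{\Psi_\theta}-\Pi_{\overline{U}_d})\cT\Pi_{\phi_\theta}$ (using $\Pi_{\overline{U}_d}\cT\Pi_{\underline{V}_d}=0$), then lower-bounds $\sigma_d$ of the sum by Weyl's inequality together with the product estimate $\sigma_d(\Pi_{\overline{U}_d}\cT(\Pi_{\phi_\theta}+\Pi_{\underline{V}_d}))\geq\sigma_d(\Pi_{\overline{U}_d}\cT)\,\sigma_{\min}(\Pi_{\phi_\theta}+\Pi_{\underline{V}_d})$, where $\sigma_{\min}(\Pi_{\phi_\theta}+\Pi_{\underline{V}_d})\geq 1-\|\Pi_{\overline{V}_d}-\Pi_{\phi_\theta}\|$ is obtained via a Neumann-series-type identity. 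You instead invoke the Courant--Fischer max--min characterization of $\sigma_d$ with the distinguished test subspace $\cR(\Phi_\theta^\upd)$ and then run pointwise triangle-inequality estimates on each unit $h$ in that subspace, using $\Pi_{\overline{U}_d}\cT=\cT\Pi_{\overline{V}_d}$, \Cref{prop:ill_posed_meas}, $\|\Pi_{\psi_\theta}-\Pi_{\overline{U}_d}\|\leq A$, $\|\Pi_{\phi_\theta}-\Pi_{\overline{V}_d}\|\leq B$ and $\|\cT\|\leq 1$ --- exactly the same ingredients and the same final constant $\overline{\lambda}_{\min}(1-B-A/\overline{\lambda}_{\min})$. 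What your route buys is that it is more elementary and self-contained: it sidesteps the singular-value inequalities for sums and products of operators and the $\sigma_{\min}(\Pi_{\phi_\theta}+\Pi_{\underline{V}_d})$ computation (which, as printed in the paper, carries a $\underline{U}_d$/$\underline{V}_d$ typo), replacing them with a single test-subspace argument where the bookkeeping --- $B$ attenuated by $\overline{\lambda}_{\min}$ on the domain side, $A$ unweighted on the range side --- is transparent; the paper's version is more compact once those operator-algebraic facts are taken for granted.
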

\begin{proof}
Start by decomposing projections as
$$
\Pi_{\Psi_\theta}\cT\Pi_{\phi_{\theta}} = \Pi_{\overline{U}_d}\cT(\Pi_{\phi_{\theta}}+\Pi_{\underline{V}_d}) + (\Pi_{\Psi_\theta}-\Pi_{\overline{U}_d})\cT\Pi_{\phi_{\theta}}.
$$
Now, using inequalities of sums and products of singular values we have that
$$
\sigma_d(\Pi_{\Psi_\theta}\cT\Pi_{\phi_{\theta}}) \geq \sigma_d(\Pi_{\overline{U}_d}\cT)\sigma_{\min}(\Pi_{\phi_{\theta}}+\Pi_{\underline{V}_d}) - \sigma_1((\Pi_{\Psi_\theta}-\Pi_{\overline{U}_d})\cT\Pi_{\phi_{\theta}}).
$$
But, since 
$$
\sigma_{\min}(\Pi_{\phi_{\theta}}+\Pi_{\underline{V}_d}) = \normop{[I - (\Pi_{\overline{V}_d}-\Pi_{\phi_{\theta}})]^{-1}}^{-1}\geq 1-\normop{\Pi_{\overline{V}_d}-\Pi_{\phi_{\theta}}},
$$
whenever $\normop{\Pi_{\overline{V}_d}-\Pi_{\phi_{\theta}}}<1$, the proof is completed.
\end{proof}

We combine the previous proposition in the following general approximation error bound.

\bigskip

\begin{theorem}
    \label{thm:main_approx_result}
Given any  $\delta{\geq}0$ and partition $\overline{N} \dot{\cup}\underline{N}\,{=}\N$, denoting $\displaystyle{\overline{\lambda}_{\min}{=}\min_{i\in[\overline{N}]}\lambda_i}$ and $\displaystyle{\underline{\lambda}_{\max}{=}\max_{i\in[\underline{N}]}\lambda_i}$, if 
\begin{equation}\label{eq:approx_err_cond}
\frac{6\,\delta\,\underline{\lambda}_{\max}}{\overline{\lambda}_{\min}\,\gamma_d(\delta)} \|q_d\|_{L_2(X)}
+ 
\frac{2\lambda_{d}+\overline{\lambda}_{\min}}{\lambda_{d}\overline{\lambda}_{\min}} {\cal E}_d(\theta,\delta) \,\leq \kappa\leq1/2   
\end{equation}
then $c_{\varphi_\theta,\psi_\theta}\geq (1-\kappa)\overline{\lambda}_{\min}$, and 
\begin{align}
    \| h_0 {-} h_{\theta}\|_{L_2(X)} & {\leq} \Bigg(\!1+\frac{4\delta\, \underline{\lambda}_{\max}\,\|h_0\|_{L_2(X)}}{\overline{\lambda}_{\min}\,\gamma_d(\delta)}\!\Bigg)\! \frac{\|q_d\|_{L_2(X)}}{1{-}\kappa} {+}  
\frac{(2\lambda_d{+}\overline{\lambda}_{\min})\,\|h_0\|_{L_2(X)}}{\overline{\lambda}_{\min}\lambda_d}\, \frac{{\cal E}_d(\theta,\delta)}{1{-}\kappa}.\label{eq:approx_error}
\end{align}
\end{theorem}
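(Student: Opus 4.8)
The plan is to obtain Theorem~\ref{thm:main_approx_result} by assembling the component bounds already established in Propositions~\ref{prop:AB}, \ref{prop:V}, \ref{prop:U} and~\ref{prop:ill_possedness_approx}; no new idea is needed beyond a careful bookkeeping of the thresholds. First I would fix notation: write $a_1 := \|\Pi_{\overline{U}_d} - \Pi_{\psi_*^\upd}\|$, $a_2 := \|\Pi_{\psi_*^\upd} - \Pi_{\psi_{\theta}}\|$, $b_1 := \|\Pi_{\overline{V}_d} - \Pi_{\phi_*^\upd}\|$, $b_2 := \|\Pi_{\phi_*^\upd} - \Pi_{\phi_{\theta}}\|$, and by the triangle inequality set $A := a_1+a_2 \ge \|\Pi_{\overline{U}_d} - \Pi_{\psi_{\theta}}\|$, $B := b_1+b_2 \ge \|\Pi_{\overline{V}_d} - \Pi_{\phi_{\theta}}\|$, so that $a_1,b_1$ measure the ``population'' gap between the signal subspaces $\mathcal R(\overline U_d),\mathcal R(\overline V_d)$ of $\cT$ and the leading subspaces of $\cT_\delta$, while $a_2,b_2$ are the ``learning'' errors between the subspaces of $\cT_\delta$ and the learned ones; recall ${\cal E}_d(\theta,\delta) = \|\cT_\delta^\upd - \cT_{\theta,\omega}\|$ by \Cref{eq:opt_gap}. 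Two elementary facts are used throughout: $\sigma_d(\cT_\delta) \ge \sigma_d(\cT) = \lambda_d \ge \overline{\lambda}_{\min}$, which follows from $\cT_\delta \cT_\delta^* = \cT\cT^* + \delta^2 r_0 \otimes r_0 \succeq \cT\cT^*$ together with Weyl's inequality (\Cref{prop:weyl}); and $\overline{\lambda}_{\min} \le \lambda_1 = \|\cT\| \le 1$ from the standing normalization. We also identify $\|\underline{\Lambda}_d\| = \underline{\lambda}_{\max}$ and $\|\underline{\alpha}_0\|_{\ell_2} = \|\Pi_{\underline{V}_d} h_0\|_{L^2(X)} = \|q_d\|_{L^2(X)}$.

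Next I would bound the four pieces. Instrument side, population: \Cref{prop:U} (second inequality), with the noiseless spectral gap $\gamma_d(\delta)$ of \Cref{eq:gap}, gives $a_1 \le 2\delta\,\underline{\lambda}_{\max}\|q_d\|_{L^2(X)}/\gamma_d(\delta)$ as soon as $\gamma_d(\delta) > 0$ and $\delta\,\underline{\lambda}_{\max}\|q_d\|_{L^2(X)} \le \gamma_d(\delta)/2$. Treatment side, population: feeding this $a_1$ into \Cref{prop:V}, \Cref{eq:control_X_features_2}, gives $b_1 \le a_1/(\overline{\lambda}_{\min}(1-a_1))$. Learning errors: \Cref{prop:U} (first inequality) gives $a_2 \le 2{\cal E}_d/\sigma_d(\cT_\delta) \le 2{\cal E}_d/\lambda_d$, and \Cref{prop:V}, \Cref{eq:control_X_features_1}, gives $b_2 \le {\cal E}_d/(\overline{\lambda}_{\min}-a_1)$, where one keeps in mind that the $\|\Pi_{\psi_*^\upd}-\Pi_{\psi_\theta}\|$ term appearing inside the proof of that proposition is itself $O({\cal E}_d/\sigma_d(\cT_\delta))$, hence of the same order.

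The heart of the argument is verifying that hypothesis~\Cref{eq:approx_err_cond} discharges every side condition and delivers the stated constants. Since its left-hand side dominates $3a_1/\overline{\lambda}_{\min}$, it forces $a_1 \le \kappa\overline{\lambda}_{\min}/3 \le \overline{\lambda}_{\min}/6 \le 1/6$ (using $\overline{\lambda}_{\min} \le 1$), whence $1-a_1 \ge 5/6$, $\overline{\lambda}_{\min}-a_1 \ge 5\overline{\lambda}_{\min}/6 > 0$, and also $\delta\,\underline{\lambda}_{\max}\|q_d\|_{L^2(X)} \le \gamma_d(\delta)/2$ and ${\cal E}_d \le \sigma_d(\cT_\delta)/2$, so Propositions~\ref{prop:U} and~\ref{prop:V} are applicable. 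Substituting the tightened denominators into the four bounds and summing, $A/\overline{\lambda}_{\min}+B$ is bounded by the left-hand side of \Cref{eq:approx_err_cond}, hence $\le \kappa \le 1/2$; in particular $1-B-A/\overline{\lambda}_{\min} \ge 1-\kappa > 0$, so the hypotheses of \Cref{prop:AB} and \Cref{prop:ill_possedness_approx} are met. The latter gives $c_{\varphi_\theta,\psi_\theta} \ge \overline{\lambda}_{\min}(1-B-A/\overline{\lambda}_{\min}) \ge (1-\kappa)\overline{\lambda}_{\min}$, the first conclusion. The former gives $\|h_0 - h_\theta\|_{L^2(X)} \le (1-B-A/\overline{\lambda}_{\min})^{-1}(B\|h_0\|_{L^2(X)} + \|q_d\|_{L^2(X)}) \le (1-\kappa)^{-1}(B\|h_0\|_{L^2(X)} + \|q_d\|_{L^2(X)})$; substituting $b_1 \le 4\delta\,\underline{\lambda}_{\max}\|q_d\|_{L^2(X)}/(\overline{\lambda}_{\min}\gamma_d(\delta))$ and $b_2 \le (2\lambda_d+\overline{\lambda}_{\min}){\cal E}_d/(\overline{\lambda}_{\min}\lambda_d)$ (both from $a_1 \le \overline{\lambda}_{\min}/6$) yields \Cref{eq:approx_error}.

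The main obstacle is this constant accounting rather than anything conceptual: one must check that the single clean inequality~\Cref{eq:approx_err_cond} simultaneously dominates all four scattered threshold hypotheses of Propositions~\ref{prop:U},~\ref{prop:V},~\ref{prop:AB}, and that, after using $\kappa \le 1/2$ and $\|\cT\| \le 1$ to lower-bound every denominator of the form $1-a_1$, $\overline{\lambda}_{\min}-a_1$, and $\sigma_d(\cT_\delta)$, the sum $A/\overline{\lambda}_{\min}+B$ really does fall below the claimed expression with the numerical constants as stated. The asymmetry that the instrument-side error $A$ enters divided by $\overline{\lambda}_{\min}$ (through \Cref{prop:AB}) while the treatment-side error $B$ does not, and the requirement that $a_1$ be made small enough to trigger \emph{both} \Cref{prop:U} and \Cref{prop:V}, are the two places where care is required; once those are pinned down, the remaining algebra is routine.
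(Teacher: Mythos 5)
Your proposal is correct and follows essentially the same route as the paper's proof: bound the instrument-side projection error via Proposition~\ref{prop:U}, feed it into Proposition~\ref{prop:V} for the treatment side, check that condition~\Cref{eq:approx_err_cond} discharges all intermediate thresholds, and conclude with Propositions~\ref{prop:AB} and~\ref{prop:ill_possedness_approx}. The only differences are bookkeeping ones (e.g.\ you carry the factor $2$ from the stated form of Proposition~\ref{prop:U} where the paper uses the sharper Wedin bound with $\sigma_{d+1}(\cT_{\theta,\omega})=0$), which affect constants but not the argument.
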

\begin{proof}
First, observe that $\sigma_d(\cT_\delta)\geq \lambda_{d}$ and $\sigma_1(\cT_\delta)\leq (1+\delta \|r_0\|_{L_2(Z)})$. Now, recalling \Cref{prop:AB}, due to \Cref{prop:U} we can take
$$
A = \frac{2\delta\, \underline{\lambda}_{\max}\|q_d\|_{L_2(X)}}{\gamma_d(\delta)} + \frac{{\cal E}_d(\theta,\delta)}{\lambda_{d}}
$$
But, since \Cref{eq:approx_err_cond} ensures that $A/\overline{\lambda}_{\min}<1/2$, from  \Cref{prop:V} we can set
$$
B = \frac{4\delta\, \underline{\lambda}_{\max}\|q_d\|_{L_2(X)}}{\overline{\lambda}_{\min}\gamma_d(\delta)} + {\cal E}_d(\theta,\delta)\Bigg(\frac{1}{\lambda_{d}}+\frac{2}{\overline{\lambda}_{\min}}\Bigg)
$$
and obtain that $A/\overline{\lambda}_{\min}+B\leq \kappa<1/2$. To complete the proof we apply \Cref{prop:ill_possedness_approx}.
\end{proof}

\paragraph{Good scenario.} Setting $\overline{N}=\{1,\ldots,d\}$ and $\delta=0$ we obtain the control of the approximation error for the SpecIV learning method. That is, 
\eqref{eq:approx_error} becomes
\begin{equation*}
    \| h_0 {-} h_{\theta}\|_{L_2(X)} \leq \frac{1}{1-\kappa} \Bigg(\|q_d\|_{L_2(X)} +  
\frac{3\,\|h_0\|_{L_2(X)}}{\lambda_d}\,{\cal E}_d(\theta,\delta)\Bigg),
\end{equation*}
whenever ${\cal E}_d(\theta,\delta)\leq \kappa \lambda_d /3$. Here we see that the representation learning error needs to scale as ${\cal E}_d(\theta,\delta)\asymp \lambda_d\,\|q_d\|_{L_2(X)}/\|h_0\|_{L_2(X)}$, as $d\to\infty$.

\paragraph{Bad scenario.} Let $\overline{N}=\{k\}$ and assume that $\Vert s_1\Vert_{L_2(X)} {=} |\langle h_0,v_k\rangle| > \Vert{h_0 {-} \alpha_k v_k}\Vert_{L_2(X)} {=} \Vert q_1\Vert_{L_2(X)}$. From \Cref{eq:gap}, we have that for large enough $\delta>0$ the gap  $\gamma_1(\delta) {=} \lambda_k\sqrt{1+\delta^2  \Vert s_1\Vert_{L_2(X)}^2 }{-}1$ is positive. So, taking $\delta = 7(1-\lambda_k)/(\lambda_k\|s_1\|_{L_2(X)})$, we have that $\gamma_1(\delta) > \lambda_k \delta  \Vert s_1\Vert_{L_2(X)}{-}(1{-}\lambda_k){=}6(1{-}\lambda_k)$, and, hence applying Theorem \ref{thm:main_approx_result} of Appendix \ref{app:approx_error}), we conclude that 
\begin{equation*}
    \| h_0 {-} h_{\theta}\|_{L_2(X)} {\leq} \frac{1}{1{-}\kappa} \Bigg( \frac{5\|h_0\|_{L_2(X)}+\lambda_k^2\,\|s_1\|_{L_2(X)}}{\lambda_k^2 \|s_1\|_{L_2(X)}}\|q_1\|_{L_2(X)} {+}  
\frac{2\,\|h_0\|_{L_2(X)}}{\lambda_k}\,{\cal E}_1(\theta,\delta)\Bigg),
\end{equation*}
whenever $7 \Vert  q_1 \Vert_{L_2(X)} / \Vert  s_1 \Vert_{L_2(X)}{+}3\,{\cal E}_1(\theta,\omega,\delta)\,\lambda_k\leq\kappa \lambda_k^2$. Therefore, whenever $\|s_1\|_{L_2(X)}\gg \|q_1\|_{L_2(X)}$, we are able to learn the most dominant part of the structural function with just one feature.

\section{Statistical Analysis}
\label{sec:stats}

We recall our estimation procedure.

\textbf{Estimator:} Given i.i.d. data $(Y_i, X_i, Z_i)_{i=1}^n$, we estimate $h_0$ via the two-stage procedure:
\begin{align*} 
\hat{C}_{ZX,\theta} &= \frac{1}{n}\sum_{i=1}^n \psi_\theta(Z_i) \varphi_\theta(X_i)\transpose \in \R^{d \times d},\quad 
\hat{g} = \frac{1}{n}\sum_{i=1}^n Y_i \psi_\theta(Z_i) \in \R^d.\\
\hat{\beta}_\theta&= \hat{C}_{ZX,\theta}^{-1} \hat{g},\quad \hat{h}_\theta(x) = \varphi_\theta(x)\transpose \hat{\beta}_\theta.
\end{align*}

\subsection{Proof of Theorem \ref{thm:main_stat_result}}

We decompose the excess risk as
$$
\| \hat h_{\theta} - h_0 \|_{L_2(X)} \leq \| \hat h_{\theta} - h_{\theta} \|_{L_2(X)} + \| h_{\theta} - h_0 \|_{L_2(X)}.
$$
Proposition \ref{thm:subgaussian} provides the control on the estimation error, that is, w.p.a.l. $1-\tau$
\begin{equation*}
\|\hat{h}_\theta - h_\theta\|_{L_2(X)} \leq  \frac{c}{c_{\varphi_\theta^\upd,\psi_\theta^\upd}} \sqrt{\frac{d}{n}} \sqrt{  \sigma_{\text{eff}}^2 + \frac{\rho^2}{n} }\, \log\frac{4}{\tau},
\end{equation*}
where $\sigma_{\text{eff}}^2 \doteq  \|h_0 - h_\theta\|_{L_2(X)}^2 + \sigma^2_{U}$ and $c>0$ is an absolute constant.

{\color{black}
Let $a \doteq \|h_0-h_\theta\|_{L_2(X)}$ and
\[
m_n \doteq
\frac{1}{c_{\varphi_\theta^\upd,\psi_\theta^\upd}}
\sqrt{\frac{d}{n}}\log\frac{4}{\tau}.
\]
Since
\[
\sqrt{\sigma_{\text{eff}}^2+\rho^2/n}
=
\sqrt{a^2+\sigma_U^2+\rho^2/n}
\leq
a+\sqrt{\sigma_U^2+\rho^2/n},
\]
we obtain
\[
\|\hat h_\theta-h_0\|_{L_2(X)}
\leq
(1+c m_n)a
+
c m_n\sqrt{\sigma_U^2+\rho^2/n}.
\]
Moreover, the sample-size condition of Theorem~\ref{thm:main_stat_result}
implies $m_n\leq 1/4$, using $\rho\geq 1$ and
$\log(4/\tau)\leq \log(4d/\tau)$. Absorbing constants gives the stated bound.
}

\subsection{Proof of auxiliary results}
\label{sec:stat_aux_result}

We also recall the definition of sub-Gaussian random variables.

\begin{definition}[Sub-Gaussian random vector]
\label{def:subgaussian}
We define the proxy variance $\overline{\sigma}_{\bf x}$ of a real-valued random variable ${\bf x}$, which controls the tail behavior of ${\bf x}$ via $\P(|{\bf x} - \mathbb{E}[{\bf x}]| > t) \leq 2\exp\big(-t^2/\overline{\sigma}_{\bf x}^2\big)$. A random vector $X\in \mathbb{R}^{d}$ will be called sub-Gaussian iff, there exists an absolute constant such that, for all $u\in \mathbb{R}^{d}$,
$
\overline{\sigma}_{\langle X, u \rangle} \leq c \|\langle X,u \rangle - \E\langle X,u \rangle \|_{L_2(\mathbb{P})}.
$
\end{definition}

\medskip

\begin{prop}[Concentration for Sub-Gaussian Random Variables] \label{thm:subgaussian}
    Let Assumptions \ref{asst:rep_bounds}-\ref{asst:subgaussian} be satisfied. Given $\tau {\in} (0,1)$, let $n {\geq} 16 \, d\, \rho^2\,\log(4d/\tau)c_{\varphi_\theta^\upd,\psi_\theta^\upd}^{-2}$. Then there exists an absolute constant $c>0$ such that, with probability at least $1-\tau$,
\begin{align*}
\|\hat{h}_\theta - h_\theta\|_{L_2(X)} &\leq \frac{c}{c_{\varphi_\theta^\upd,\psi_\theta^\upd}} \sqrt{\frac{d}{n}} \sqrt{  \sigma_{\text{eff}}^2 + \frac{\rho^2}{n} }\, \log\frac{4}{\tau},
\end{align*}
where $\sigma_{\text{eff}}^2 \doteq  \|h_0 - h_\theta\|_{L_2(X)}^2 + \sigma^2_{U}$.
\end{prop}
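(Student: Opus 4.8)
The plan is to reduce the estimation error to a ratio of the form (centered score vector) over ($d$-th singular value of an empirical cross-covariance), then concentrate numerator and denominator separately: Weyl's inequality plus a matrix Bernstein bound for the denominator, and a Bernstein-type bound for the numerator that exploits sub-Gaussianity of the residual.

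\textbf{Reduction by whitening.} Write $\tilde\varphi_\theta(x)=C_{X,\theta}^{-1/2}\varphi_\theta^\upd(x)$ and $\tilde\psi_\theta(z)=C_{Z,\theta}^{-1/2}\psi_\theta^\upd(z)$ (both with identity covariance), $\tilde C=C_{Z,\theta}^{-1/2}C_{ZX,\theta}C_{X,\theta}^{-1/2}$ so that $\sigma_d(\tilde C)=c_{\varphi_\theta^\upd,\psi_\theta^\upd}$ by \Cref{asst:eigenvalue}, and $\hat{\tilde C}=C_{Z,\theta}^{-1/2}\hat C_{ZX,\theta}C_{X,\theta}^{-1/2}$. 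Setting $\gamma=C_{X,\theta}^{1/2}\beta_\theta$ and $\hat\gamma=C_{X,\theta}^{1/2}\hat\beta_\theta$, one has $\|\hat h_\theta-h_\theta\|_{L_2(X)}=\|\hat\gamma-\gamma\|_{\ell_2}$, and from $\hat C_{ZX,\theta}\hat\beta_\theta=\hat g$ and $C_{ZX,\theta}\beta_\theta=\E[Y\psi_\theta^\upd(Z)]$ one obtains $\hat\gamma-\gamma=\hat{\tilde C}^{-1}\hat\epsilon$ with $\hat\epsilon=\tfrac1n\sum_{i=1}^n\tilde\psi_\theta(Z_i)\,\xi_i$ and $\xi_i=Y_i-h_\theta(X_i)=U_i+(h_0-h_\theta)(X_i)$. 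The moment condition defining $\beta_\theta$ (the one yielding \Cref{eq:moment_cond_approx}) gives $\E[\tilde\psi_\theta(Z)\xi]=0$, so $\hat\epsilon$ is a centered empirical mean and $\|\hat\gamma-\gamma\|_{\ell_2}\le\sigma_d(\hat{\tilde C})^{-1}\|\hat\epsilon\|_{\ell_2}$.

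\textbf{Denominator.} By Weyl's inequality (\Cref{prop:weyl}), $\sigma_d(\hat{\tilde C})\ge c_{\varphi_\theta^\upd,\psi_\theta^\upd}-\|\hat{\tilde C}-\tilde C\|$. Apply a matrix Bernstein inequality to the i.i.d. mean-zero sum $\hat{\tilde C}-\tilde C=\tfrac1n\sum_i(\tilde\psi_\theta(Z_i)\tilde\varphi_\theta(X_i)\transpose-\tilde C)$: \Cref{asst:rep_bounds} gives $\|\tilde\psi_\theta(z)\tilde\varphi_\theta(x)\transpose\|\le d\rho^2$ and a matrix-variance proxy $\lesssim d\rho^2$ (since $\E[\|\tilde\varphi_\theta(X)\|_{\ell_2}^2\,\tilde\psi_\theta(Z)\tilde\psi_\theta(Z)\transpose]\preceq d\rho^2 I_d$), so with probability at least $1-\tau/2$, $\|\hat{\tilde C}-\tilde C\|\lesssim\sqrt{d\rho^2\log(d/\tau)/n}+d\rho^2\log(d/\tau)/n$. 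Since $\tilde C$ corresponds to $\Pi_{\Psi_\theta^\upd}\cT\Pi_{\Phi_\theta^\upd}$ and $\|\cT\|\le1$, we have $c_{\varphi_\theta^\upd,\psi_\theta^\upd}\le1$, and the hypothesis $n\ge16\,d\rho^2\log(4d/\tau)c_{\varphi_\theta^\upd,\psi_\theta^\upd}^{-2}$ (with a suitable absolute constant) forces $\|\hat{\tilde C}-\tilde C\|\le c_{\varphi_\theta^\upd,\psi_\theta^\upd}/2$, hence $\sigma_d(\hat{\tilde C})\ge c_{\varphi_\theta^\upd,\psi_\theta^\upd}/2$ on that event.

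\textbf{Numerator and conclusion.} By \Cref{asst:subgaussian} the residual $\xi=U+(h_0-h_\theta)(X)$ is sub-Gaussian, and by \Cref{def:subgaussian} (using $\E[U]=0$ and Cauchy--Schwarz on the cross-term) its proxy variance is $\lesssim\|\xi\|_{L_2(X)}^2\lesssim\sigma_U^2+\|h_0-h_\theta\|_{L_2(X)}^2=\sigma_{\text{eff}}^2$; each whitened instrument coordinate obeys $|\tilde\psi_{\theta,j}(Z)|\le\rho$ by \Cref{asst:rep_bounds}, so every $\tilde\psi_{\theta,j}(Z_i)\xi_i$ is sub-exponential. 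Splitting $\xi_i=U_i+(h_0-h_\theta)(X_i)$ and using $\E[U\mid Z]=0$ on the first piece, the per-coordinate variances sum to $\E\|\hat\epsilon\|_{\ell_2}^2\lesssim d\,\sigma_{\text{eff}}^2/n$, while the pointwise bound $\|\tilde\psi_\theta(Z)\|_{\ell_2}\le\rho\sqrt d$ furnishes the sub-exponential range; a vector Bernstein inequality then yields, with probability at least $1-\tau/2$, $\|\hat\epsilon\|_{\ell_2}\lesssim\sqrt{d/n}\,\sigma_{\text{eff}}\log(1/\tau)+\sqrt d\,\rho\log(1/\tau)/n=\sqrt{d/n}\,\sqrt{\sigma_{\text{eff}}^2+\rho^2/n}\,\log(4/\tau)$. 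Intersecting the two events and dividing by the denominator bound proves the claim. The main obstacle is exactly this last step: obtaining a high-probability (not in-expectation, as in \cite{chen2018optimal}) control of $\|\hat\epsilon\|_{\ell_2}$ with the precise $\sqrt{d/n}$, $\sigma_{\text{eff}}$, $\rho^2/n$, $\log(4/\tau)$ scaling, the difficulty being that $\xi_i$ is only orthogonal to $\psi_\theta^\upd(Z)$ in expectation and is not conditionally mean-zero given $X$, so one cannot condition naively on the instruments — the noise $U$ (conditionally mean-zero) and the approximation error $(h_0-h_\theta)(X)$ must be treated separately and recombined through a vector-valued Bernstein bound that keeps the $d$-dependence linear rather than invoking a crude coordinatewise union bound (which would degrade $\log(4/\tau)$ to $\log(4d/\tau)$).
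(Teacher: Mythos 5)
Your proposal is correct and follows essentially the same route as the paper: your whitened identity $\hat\gamma-\gamma=\hat{\tilde C}^{-1}\hat\epsilon$ is exactly the paper's factorization into $A_1=\Vert C_{X,\theta}^{1/2}\hat C_{ZX,\theta}^{-1}C_{Z,\theta}^{1/2}\Vert$ and $A_2=\Vert C_{Z,\theta}^{-1/2}(\hat g-\hat C_{ZX,\theta}\beta_\theta)\Vert_{\ell_2}$, the denominator is handled identically (matrix Bernstein on the whitened cross-covariance, Weyl, and the sample-size condition forcing $\sigma_d(\hat{\tilde C})\geq c_{\varphi_\theta^\upd,\psi_\theta^\upd}/2$), and your ``vector Bernstein for the sub-exponential score'' step is precisely the paper's application of the Pinelis--Sakhanenko Hilbert-space Bernstein inequality with the moment conditions verified from sub-Gaussianity of $Y-h_\theta(X)$ and the bound $\Vert C_{Z,\theta}^{-1/2}\psi_\theta^\upd(Z)\Vert_{\ell_2}\leq\rho\sqrt{d}$.
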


\begin{proof}[Proof of Proposition \ref{thm:subgaussian}]
Let us denote
\begin{equation*}
    \hat g \doteq \hat{\E}_n[Y \psi^{\theta}(Z)]\qquad \text{and} \qquad g \doteq {\E}[Y \psi^{\theta}(Z)].
\end{equation*}

We have
\begin{equation*}
\|\hat{h}_\theta - h_\theta\|_{L_2(X)} = \left\|C_{X,\theta}^{1/2}\left(\hat \beta_{\theta} - \beta_{\theta} \right)\right\|_{\ell_2} \leq \underbrace{\normop{C_{X,\theta}^{1/2} \hat{C}_{ZX,\theta}^{-1} C_{Z,\theta}^{1/2}}}_{A_1}\underbrace{\left\|C_{Z,\theta}^{-1/2}\left(\hat g - \hat{C}_{ZX,\theta} \beta_{\theta} \right)\right\|_{\ell_2}}_{A_2}. 
\end{equation*}

Lemma \ref{lem:matrix_perturbation} below guarantees that $\P(A_1 \leq 2/c_{\varphi_\theta^\upd,\psi_\theta^\upd})\geq 1- \tau/2$. Lemma \ref{lem:vector_concentration} below guarantees with probability at least $1-\tau/2$ that
$$
A_2 \leq \frac{c}{\sqrt{n}} \sqrt{d\,  \sigma_{\text{eff}}^2 + \frac{d\, \rho^2}{n} }\, \log\frac{4}{\tau}.
$$
where $\sigma_{\text{eff}}^2 \doteq  \|h_0 - h_\theta\|_{L_2(X)}^2 + \sigma^2_{U}$.

A union bound gives the result with an absolute constant $c>0$ possibly different from the previous.

\end{proof}

\begin{lemma}[Matrix Perturbation Control] \label{lem:matrix_perturbation}
Let \Cref{asst:rep_bounds,asst:eigenvalue} be satisfied. Assume in addition that
\begin{equation}
\label{eq:samplecomplexity}
n \geq 16 \frac{ d \rho^2 \log(4d/\tau)}{c_{\varphi_\theta^\upd,\psi_\theta^\upd}^2}.
\end{equation}
Then with probability at least $1-\tau/2$,
$$\normop{C_{X,\theta}^{1/2} \widehat{C}_{ZX,\theta}^{-1} C_{Z,\theta}^{1/2}} \leq \frac{2}{c_{\varphi_\theta^\upd,\psi_\theta^\upd}}.$$
\end{lemma}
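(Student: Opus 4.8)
The plan is to reduce the claim to a one‑sided operator‑norm perturbation bound for the \emph{whitened} cross‑covariance. Set $M_\theta \doteq C_{Z,\theta}^{-1/2} C_{ZX,\theta} C_{X,\theta}^{-1/2}$ and $\widehat M_\theta \doteq C_{Z,\theta}^{-1/2} \widehat C_{ZX,\theta} C_{X,\theta}^{-1/2}$, so that whenever $\widehat C_{ZX,\theta}$ is invertible one has $C_{X,\theta}^{1/2}\widehat C_{ZX,\theta}^{-1}C_{Z,\theta}^{1/2} = \widehat M_\theta^{-1}$ and $\|\widehat M_\theta^{-1}\| = \sigma_d(\widehat M_\theta)^{-1}$. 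By \Cref{asst:eigenvalue}, $\sigma_d(M_\theta) = c_{\varphi_\theta^\upd,\psi_\theta^\upd}$, and a Cauchy--Schwarz argument on the whitened features (which have identity covariance) gives $\|M_\theta\|\le 1$, hence $c_{\varphi_\theta^\upd,\psi_\theta^\upd}\le 1$. It therefore suffices to show $\|\widehat M_\theta - M_\theta\| \le c_{\varphi_\theta^\upd,\psi_\theta^\upd}/2$ with probability at least $1-\tau/2$: on that event Weyl's inequality (\Cref{prop:weyl}) gives $\sigma_d(\widehat M_\theta) \ge c_{\varphi_\theta^\upd,\psi_\theta^\upd} - c_{\varphi_\theta^\upd,\psi_\theta^\upd}/2 = c_{\varphi_\theta^\upd,\psi_\theta^\upd}/2 > 0$, so $\widehat M_\theta$ (hence $\widehat C_{ZX,\theta}$) is invertible and the operator norm in question is at most $2/c_{\varphi_\theta^\upd,\psi_\theta^\upd}$.

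To control $\|\widehat M_\theta - M_\theta\|$ I would write it as an average of i.i.d.\ mean‑zero matrices. Put $\xi_i \doteq C_{Z,\theta}^{-1/2}\psi_\theta^\upd(Z_i)$ and $\eta_i \doteq C_{X,\theta}^{-1/2}\varphi_\theta^\upd(X_i)$, so $\widehat M_\theta - M_\theta = \tfrac1n \sum_{i=1}^n (\xi_i\eta_i^\top - M_\theta)$. \Cref{asst:rep_bounds} bounds the whitened coordinates by $\rho$, so $\|\xi_i\|_{\ell_2} \vee \|\eta_i\|_{\ell_2} \le \rho\sqrt d$ almost surely; together with $\|M_\theta\|\le 1 \le \rho^2 d$ this yields the per‑summand bound $\|\tfrac1n(\xi_i\eta_i^\top - M_\theta)\| \le 2\rho^2 d/n$. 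For the matrix variance parameter, using $\|\eta_i\|_{\ell_2}^2 \le \rho^2 d$ a.s.\ gives $\E[\|\eta\|^2 \xi\xi^\top] \preceq \rho^2 d\, \E[\xi\xi^\top] = \rho^2 d\, I_d$, so both $\bigl\|\tfrac1{n^2}\sum_i \E[(\xi_i\eta_i^\top - M_\theta)(\xi_i\eta_i^\top - M_\theta)^\top]\bigr\|$ and its transposed counterpart are at most $\rho^2 d/n$.

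Then I would invoke the (rectangular / non‑self‑adjoint) matrix Bernstein inequality, which gives $\P(\|\widehat M_\theta - M_\theta\| \ge t) \le 2d\exp\!\big(-\tfrac{t^2/2}{\rho^2 d/n + 2\rho^2 d t/(3n)}\big)$. Taking $t = c_{\varphi_\theta^\upd,\psi_\theta^\upd}/2$ and using $c_{\varphi_\theta^\upd,\psi_\theta^\upd}\le 1$, the denominator is at most $\tfrac43\rho^2 d/n$, and the sample‑size hypothesis $n \ge 16\,d\rho^2\log(4d/\tau)/c_{\varphi_\theta^\upd,\psi_\theta^\upd}^2$ together with $\log(4d/\tau)\ge 1$ makes this at most $c_{\varphi_\theta^\upd,\psi_\theta^\upd}^2/(12\log(4d/\tau))$; hence the exponent is at least $\tfrac32\log(4d/\tau)$ and the tail probability is at most $2d(4d/\tau)^{-3/2} = \tau^{3/2}/(4\sqrt d) \le \tau/2$ for $\tau\in(0,1)$, $d\ge 1$. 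This closes the argument.

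I do not expect a genuine obstacle here: the work is bookkeeping of constants — in particular verifying that the chosen $n$ simultaneously makes the Bernstein linear term comparable to the variance term and drives the total probability below $\tau/2$ — and correctly applying a matrix Bernstein bound to non‑symmetric summands (equivalently, dilating each $\xi_i\eta_i^\top - M_\theta$ to a self‑adjoint $2d\times 2d$ block matrix and applying the self‑adjoint version). The event that $\widehat C_{ZX,\theta}$ is singular is contained in the complement of the good event, so it needs no separate treatment.
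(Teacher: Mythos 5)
Your proposal is correct and follows essentially the same route as the paper: whiten the cross-covariance, apply the rectangular matrix Bernstein inequality with per-summand bound and variance of order $d\rho^2$, then combine Weyl's inequality with the sample-size condition to get $\sigma_{\min}(\widehat M_\theta)\geq c_{\varphi_\theta^\upd,\psi_\theta^\upd}/2$ and invert. The only cosmetic difference is that you fix $t=c_{\varphi_\theta^\upd,\psi_\theta^\upd}/2$ and verify the tail probability is at most $\tau/2$, whereas the paper fixes the confidence level and checks that the resulting deviation $\Delta_n(\tau)$ is at most $c_{\varphi_\theta^\upd,\psi_\theta^\upd}/2$ — the same computation read in opposite directions.
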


\begin{proof}[Proof of Lemma \ref{lem:matrix_perturbation}]
Define the centered random matrix 
$$
\eta_i \doteq C_{Z,\theta}^{-1/2}(\psi_\theta(Z_i)\varphi_\theta(X_i)\transpose - C_{ZX,\theta})C_{X,\theta}^{-1/2}.
$$
We have
$$
C_{Z,\theta}^{-1/2}(  \hat{C}_{ZX,\theta} - {C}_{ZX,\theta}){C}_{X,\theta}^{-1/2} = \frac{1}{n}
\sum_{i=1}^n \eta_i.
$$

\textbf{Operator norm bound:} By Assumption \ref{asst:rep_bounds}, we have $\normop{\eta_i} \leq d\rho^2+1$ almost surely. Indeed
\begin{align*}
\normop{\eta_i} &= \normop{C_{Z,\theta}^{-1/2}\psi_\theta(Z_i)\varphi_\theta(X_i)\transpose C_{X,\theta}^{-1/2} - C_{Z,\theta}^{-1/2}C_{ZX,\theta}C_{X,\theta}^{-1/2}} \\
&\leq \|C_{Z,\theta}^{-1/2}\psi_\theta(Z_i)\|_{\ell_2} \|C_{X,\theta}^{-1/2}\varphi_\theta(X_i)\|_{\ell_2} + \normop{C_{Z,\theta}^{-1/2}C_{ZX,\theta}C_{X,\theta}^{-1/2}} \\
&\leq \sqrt{d}\rho \cdot \sqrt{d}\rho + 1 \leq d\rho^2 + 1 \qquad a.s.
\end{align*}

\textbf{Covariance bounds:} For the left covariance,

\begin{align*}
\E[\eta_i \eta_i^*] &= C_{Z,\theta}^{-1/2} \E[(\psi_\theta(Z)\varphi_\theta(X)\transpose - C_{ZX,\theta})C_{X,\theta}^{-1}(\psi_\theta(Z)\varphi_\theta(X)\transpose - C_{ZX,\theta})^*] C_{Z,\theta}^{-1/2} \\
&= C_{Z,\theta}^{-1/2} \E[\psi_\theta(Z)\varphi_\theta(X)\transpose C_{X,\theta}^{-1} \varphi_\theta(X)\psi_\theta(Z)\transpose - C_{ZX,\theta}C_{X,\theta}^{-1}C_{ZX,\theta}^*] C_{Z,\theta}^{-1/2} \\
&\preceq C_{Z,\theta}^{-1/2} \E[\psi_\theta(Z)\varphi_\theta(X)\transpose C_{X,\theta}^{-1} \varphi_\theta(X)\psi_\theta(Z)\transpose] C_{Z,\theta}^{-1/2} \\
&= C_{Z,\theta}^{-1/2} \E[\psi_\theta(Z)\psi_\theta(Z)\transpose \varphi_\theta(X)\transpose C_{X,\theta}^{-1} \varphi_\theta(X)] C_{Z,\theta}^{-1/2} \\
&\preceq d\rho^2 \cdot C_{Z,\theta}^{-1/2} C_{Z,\theta} C_{Z,\theta}^{-1/2} = d\rho^2 I_d.
\end{align*}
Similarly, $\E[\eta_i^* \eta_i] \preceq d\rho^2 I_d$.

Thus $\sigma_L^2 = \sigma_R^2 \leq nd\rho^2$. Applying Theorem \ref{thm:matrix_bernstein} gives w.p.a.l. $1-\tau/2$
\begin{align*}
    \normop{C_{Z,\theta}^{-1/2}(  \widehat{C}_{ZX,\theta} - {C}_{ZX,\theta}) C_{X,\theta}^{-1/2}} \leq \sqrt{\frac{2 d \rho^2 \log (4 d \tau^{-1})}{n}} + \frac{1+d\rho^2}{3} \frac{\log (4 d \tau^{-1})}{n}=: \Delta_n(\tau).
\end{align*}
Using Weyl's inequality (\Cref{prop:weyl}) and \Cref{asst:eigenvalue}, we deduce that the smallest singular of $C_{Z,\theta}^{-1/2}  \widehat{C}_{ZX,\theta} C_{X,\theta}^{-1/2}$ satisfies
\begin{align*}
    \sigma_{\min}(C_{Z,\theta}^{-1/2}  \widehat{C}_{ZX,\theta} C_{X,\theta}^{-1/2})
    \geq c_{\varphi_\theta^\upd,\psi_\theta^\upd} - \Delta_n(\tau) \geq c_{\varphi_\theta^\upd,\psi_\theta^\upd}/2>0,
\end{align*}
where the last inequality follows from sample complexity condition \Cref{eq:samplecomplexity}. Since $\normop{A}=\sigma_{\min}^{-1}(A^{-1})$, we get the result.
\end{proof}

\begin{lemma}[Vector Concentration with Sub-Gaussian variables] \label{lem:vector_concentration}
Let the assumptions of Theorem \ref{thm:main_stat_result} be satisfied. Define 
$$
\xi \doteq (Y - h_\theta(X)) C_{Z,\theta}^{-1/2} \psi_\theta(Z).
$$
Then there exists an absolute constant $c>0$ such that, with probability at least $1-\tau/2$,
$$
\left\|\frac{1}{n}\sum_{i=1}^n \xi_i  - \E[\xi] \right\|_{\ell_2} \leq c\,  \sqrt{\frac{d}{n}} \,\sqrt{  \sigma_{\text{eff}}^2 + \frac{\rho^2}{n} }\, \log\frac{4}{\tau},
$$
where $\sigma_{\text{eff}}^2 \doteq  \|h_0 - h_\theta\|_{L_2(X)}^2 + \sigma^2_{U}$.
\end{lemma}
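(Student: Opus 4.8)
The plan is to view $\bigl\|\tfrac{1}{n}\sum_i\xi_i-\E[\xi]\bigr\|_{\ell_2}$ as a concentration statement for a centered average of i.i.d.\ random vectors, after unpacking the structure of $\xi$. Set $v(Z)\doteq C_{Z,\theta}^{-1/2}\psi_\theta(Z)$, so that $\E[v(Z)v(Z)\transpose]=I_d$ and, by \Cref{asst:rep_bounds}, $\|v(Z)\|_{\ell_2}\le\sqrt d\,\rho$ almost surely (each coordinate is bounded by $\rho$). Decompose the scalar multiplier $W\doteq Y-h_\theta(X)=U+(h_0-h_\theta)(X)$: by \Cref{asst:subgaussian} both summands are sub-Gaussian, hence so is $W$, and since $\E[W^2]\le 2\E[U^2]+2\E[(h_0-h_\theta)^2(X)]=2\sigma_{\text{eff}}^2$ (the cross term is handled by AM--GM, as only $\E[U\mid Z]=0$ is available, not $\E[U\mid X]=0$), both its $L_2(\P)$-norm and, via \Cref{def:subgaussian}, its proxy variance are $\lesssim\sigma_{\text{eff}}$. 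Thus $\xi=W\,v(Z)$ is a sub-Gaussian scalar times an almost surely bounded vector.

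Since $W$ is unbounded, I would truncate it. Fix $T\asymp\sigma_{\text{eff}}\sqrt{\log(8n/\tau)}$, write $\bar W_i=W_i\mathbbm{1}\{|W_i|\le T\}$ and $\bar\xi_i=\bar W_i v(Z_i)$. A sub-Gaussian tail bound and a union bound over the $n$ samples give $\P(\exists i:|W_i|>T)\le\tau/4$; on the complement the ``tail part'' $\tfrac{1}{n}\sum_i(W_i-\bar W_i)v(Z_i)$ is identically $0$, while its mean satisfies $\|\E[(W-\bar W)v(Z)]\|_{\ell_2}\le\sqrt d\,\rho\,\E[|W|\mathbbm{1}\{|W|>T\}]$, which by sub-Gaussian tail integration and the choice of $T$ is negligible relative to the target rate. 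So it remains to control $\tfrac{1}{n}\sum_i(\bar\xi_i-\E\bar\xi_i)$.

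For the truncated part, regard each $\bar\xi_i-\E\bar\xi_i\in\R^d$ as a $d\times 1$ matrix, so its $\ell_2$-norm is the operator norm, and apply the matrix Bernstein inequality (\Cref{thm:matrix_bernstein}). The almost-sure bound is $\|\bar\xi_i-\E\bar\xi_i\|_{\ell_2}\le 2T\sqrt d\,\rho$; the matrix-variance statistic reduces to $n\,\E\|\bar\xi\|_{\ell_2}^2\le n\,\E[W^2\|v(Z)\|_{\ell_2}^2]$, which one bounds using $\|v(Z)\|_{\ell_2}^2\le d\rho^2$ almost surely together with $\E[W^2]\le 2\sigma_{\text{eff}}^2$ — or, to keep the $\rho$-dependence sharp, by Cauchy--Schwarz using the sub-Gaussian fourth moment of $W$ and $\E\|v(Z)\|_{\ell_2}^4\le d^2\rho^2$. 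This gives the variance scale $\asymp n\,d\,\sigma_{\text{eff}}^2$ up to the boundedness factor $\rho$, which is exactly what is absorbed into the $\rho^2/n$ summand of the claim. Substituting the chosen $T$, dividing by $n$, and taking a union bound with the truncation event yields, with probability at least $1-\tau/2$,
$$\Bigl\|\tfrac{1}{n}\textstyle\sum_i\xi_i-\E[\xi]\Bigr\|_{\ell_2}\;\lesssim\;\sqrt{\tfrac{d}{n}}\,\sigma_{\text{eff}}\,\log\tfrac{4}{\tau}\;+\;\tfrac{\sqrt d\,\rho}{n}\,\log\tfrac{4}{\tau}\;\asymp\;\sqrt{\tfrac{d}{n}}\,\sqrt{\sigma_{\text{eff}}^2+\tfrac{\rho^2}{n}}\,\log\tfrac{4}{\tau}.$$

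The main obstacle is the variance bookkeeping in the last step. Because $W$ and $v(Z)$ are dependent — only $\E[U\mid Z]=0$, not $\E[U\mid X]=0$ — the matrix variance $\E[W^2\|v(Z)\|_{\ell_2}^2]$ does not factor, so one must bound $\E[W^2\mid Z]$ by Cauchy--Schwarz before integrating against $\|v(Z)\|_{\ell_2}^2$. One also has to route the dimension dependence through the variance (producing the $\sqrt d$) rather than through the ambient-dimension logarithm of matrix Bernstein — which requires either treating $d$ as fixed or using an intrinsic-dimension refinement — so that the final logarithmic factor is the single $\log(4/\tau)$ of the statement rather than, say, $\log(d/\tau)\sqrt{\log(n/\tau)}$; likewise the truncation level must be tuned so the tail-bias and large-deviation logs collapse accordingly.
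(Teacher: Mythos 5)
Your overall decomposition (whitened feature $v(Z)=C_{Z,\theta}^{-1/2}\psi_\theta(Z)$ bounded by $\rho\sqrt d$, sub-Gaussian scalar $W=Y-h_\theta(X)$ with second moment of order $\sigma_{\text{eff}}^2$) matches the paper's setup, but the concentration step has a genuine gap that you yourself flag and do not resolve. Matrix Bernstein (Theorem~\ref{thm:matrix_bernstein}) applied to the $d\times 1$ ``matrices'' $\bar\xi_i-\E\bar\xi_i$ carries the ambient-dimension factor $d_1+d_2=d+1$ inside the logarithm, so your leading term comes out as $\sqrt{d/n}\,\sigma_{\text{eff}}\sqrt{\log((d+1)/\tau)}$ and your second-order term as $T\sqrt d\,\rho\,n^{-1}\log((d+1)/\tau)$ with $T\asymp\sigma_{\text{eff}}\sqrt{\log(n/\tau)}$ from the truncation. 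Neither factor can be absorbed into the absolute constant $c$ of the lemma, whose bound contains only $\log(4/\tau)$: for large $d$, $\sqrt{\log(d/\tau)}$ exceeds any constant multiple of $\log(4/\tau)$. The escape routes you mention do not work here: ``treating $d$ as fixed'' contradicts the requirement that $c$ be absolute (the whole point of the bound is its explicit $d$-dependence), and the intrinsic-dimension refinement of matrix Bernstein does not help because $\E[\xi\xi^{\transpose}]$ is essentially isotropic in this problem (indeed $\E[v(Z)v(Z)^{\transpose}]=I_d$), so the intrinsic dimension is still of order $d$. The truncation step itself (tail event, bias control) is fine but unnecessary baggage.

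The paper avoids all of this by using a dimension-free Bernstein-type inequality for random elements of a Hilbert space, the Pinelis--Sakhanenko bound (Proposition~\ref{prop:con_ineq_ps}), applied directly to $\xi$ without truncation. The price is verifying the moment-growth condition $\E\|\xi\|^m\le\tfrac12 m!\,\Lambda^{m-2}\sigma^2$, which the paper does by combining the almost-sure bound $\|v(Z)\|_{\ell_2}\le\rho\sqrt d$ with sub-Gaussian moment control of $W$ (Lemma~\ref{lem:subgaussian_moment}), yielding $\sigma^2\asymp d\,\sigma_{\text{eff}}^2$ and $\Lambda\asymp\rho\sqrt d\,\overline\sigma_{W}$, hence the clean $\sqrt{d/n}\sqrt{\sigma_{\text{eff}}^2+\rho^2/n}\,\log(4/\tau)$ rate with no $\log d$ and no $\log n$. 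If you want to keep your architecture, replace matrix Bernstein by such a dimension-free vector-valued Bernstein/Pinelis inequality and drop the truncation; as written, your argument proves a weaker statement than the lemma.
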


\begin{proof}

We need to check the moment condition of Proposition \ref{prop:con_ineq_ps} with $A = \xi$. This condition is obviously satisfied for $p=2$ with $\sigma^2 = \E\|A\|_{\ell_{2}}^2$. Next for any $p\geq 3$, the Cauchy-Schwarz inequality and the equivalence of moment property give
\begin{align*}
\E \| \xi \|_{\ell_2}^m&\leq \left(\E \| \xi \|_{\ell_2}^{4} \right)^{1/2} \left(\E \| \xi \|_{\ell_2}^{2(m-2)} \right)^{1/2} \lesssim  \E \big[ \| \xi \|_{\ell_2}^{2} \big] \left(\E \| \xi \|_{\ell_2}^{2(m-2)} \right)^{1/2}.
\end{align*}

For the second order moment, using Cauchy-Schwarz and the equivalence of moments for sub-Gaussian random variables:
\begin{align*}
\E[\| \xi \|_{\ell_2}^2] &= \E[(h_0(X) - h_\theta(X) + U)^2 \|C_{Z,\theta}^{-1/2} \psi_\theta(Z)\|_{\ell_2}^2] \\
&\leq \E^{1/2}\left[ (h_0(X) - h_\theta(X) + U)^4 \right] \E^{1/2}\left[\|C_{Z,\theta}^{-1/2} \psi_\theta(Z)\|_{\ell_2}^4 \right]\\
&\leq 32\, \E_Z\left[ \|C_{Z,\theta}^{-1/2} \psi_\theta(Z)\|_{\ell_2}^2\right]\, \sigma_{\text{eff}}^2  = 32 d\, \sigma_{\text{eff}}^2.
\end{align*}

For higher order moments, we apply Lemma \ref{lem:subgaussian_moment} with $v=Y - h_\theta(X)$ and $V=C_{Z,\theta}^{-1/2} \psi_\theta(Z)$. Hence we get, for any $p\geq 3$,
\begin{equation*}
\E^{1/2}[\|\xi\|_{\ell_2}^{2(p-2)}] \leq \sqrt{2} \, \overline{\sigma}_{Y - h_\theta(X)}^{p-2}\,  (\rho\sqrt{d})^{p-2} \sqrt{(p-2)!}
\end{equation*}

By Definition \ref{def:subgaussian} of sub-Gaussian distributions, there exists an absolute constant $c>0$ such that 
\begin{align*}
\overline{\sigma}_{Y - h_\theta(X)}^2 &\leq 2 \, c \, \big( \mathrm{Var}\big(h_0(X) - h_\theta(X)\big) + \mathrm{Var}(U) \big) \leq 2 \, c \, \sigma_{\text{eff}}^2 .
\end{align*}

We apply Proposition \ref{prop:con_ineq_ps} to get w.p.a.l. $1-\tau/2$
\begin{equation*}
\left\|\frac{1}{n}\sum_{i\in[n]}\xi_i - \E[\xi] \right\|_{\ell_2}\leq c\, \sqrt{\frac{d}{n}} \sqrt{ \sigma_{\text{eff}}^2 + \frac{\rho^2}{n} }\, \log\frac{4}{\tau} .
\end{equation*}
where $c>0$ is absolute constant possibly different from the previous display.

\end{proof}

\begin{lemma}[Moment Bound for Sub-Gaussian Product]
\label{lem:subgaussian_moment}
Let $Z = vV$ where $v$ is a real-valued sub-Gaussian random variable with proxy variance $ \overline{\sigma}_v$, and $V$ is a $d$-dimensional random vector with $\|V\|_{\ell_2} \leq \rho\sqrt{d}$ almost surely. Then for any integer $p \geq 3$, we have
\begin{equation*}
\E^{1/2}[|v|^{2(p-2)} \|V\|_{\ell_2}^{2(p-2)}] \leq \sqrt{2} \, \overline{\sigma}_v^{p-2}\,  (\rho\sqrt{d})^{p-2} \sqrt{(p-2)!}
\end{equation*}
\end{lemma}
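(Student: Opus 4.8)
The plan is to reduce everything to a one-dimensional moment bound for the sub-Gaussian scalar $v$, since the vector $V$ enters only through its almost-sure norm bound. Write $m := p-2 \ge 1$. Because $\|V\|_{\ell_2}\le\rho\sqrt d$ holds almost surely, the deterministic factor pulls straight out of the expectation:
\[
\E\big[|v|^{2m}\,\|V\|_{\ell_2}^{2m}\big]\;\le\;(\rho\sqrt d)^{2m}\,\E\big[|v|^{2m}\big],
\]
so it suffices to establish $\E[|v|^{2m}]\le 2\,\overline\sigma_v^{2m}\,m!$ and then take square roots.

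For the scalar moment I would use the layer-cake identity together with the Gaussian tail from Definition \ref{def:subgaussian}: for any integer $k\ge 1$,
\[
\E[|v|^{k}] \;=\; k\!\int_0^\infty t^{k-1}\,\P(|v|>t)\,dt \;\le\; 2k\!\int_0^\infty t^{k-1}e^{-t^2/\overline\sigma_v^2}\,dt ,
\]
and the substitution $u=t^2/\overline\sigma_v^2$ turns the integral into a Gamma function, giving $\E[|v|^k]\le k\,\overline\sigma_v^{k}\,\Gamma(k/2)$. Specializing to $k=2m$ and using $2m\,\Gamma(m)=2\,\Gamma(m+1)=2\,m!$ yields exactly $\E[|v|^{2m}]\le 2\,\overline\sigma_v^{2m}\,m!$. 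Substituting this into the display above and extracting the square root produces $\sqrt2\,\overline\sigma_v^{m}(\rho\sqrt d)^{m}\sqrt{m!}$, which is the claim after replacing $m=p-2$.

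I do not expect a serious obstacle: this is essentially the classical ``sub-Gaussian $\Rightarrow$ factorial moment growth'' fact combined with the a.s.\ bound on $V$. The two points that need a little care are: (i) Definition \ref{def:subgaussian} states the tail bound for the centered variable $v-\E v$, whereas I need it for $|v|$ — this is reconciled either by noting that in the intended application $v=Y-h_\theta(X)$ is mean-zero up to the approximation residual, or simply by a triangle inequality absorbing $|\E v|$ into a constant multiple of $\overline\sigma_v$ (consistent with the way $\overline\sigma_{Y-h_\theta(X)}$ is controlled in the proof of Lemma \ref{lem:vector_concentration}); and (ii) one must keep the sharp $\Gamma(k/2)$ bookkeeping rather than a cruder $(ck)^{k/2}$ estimate, since only the former delivers the precise $\sqrt{m!}$ in the statement, which is in turn what makes the series in Proposition \ref{prop:con_ineq_ps} summable downstream.
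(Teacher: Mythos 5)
Your proposal is correct and follows essentially the same route as the paper's proof: pull the almost-sure bound on $\|V\|_{\ell_2}$ out of the expectation, bound $\E[|v|^{2(p-2)}]$ via the tail-integral (layer-cake) representation and the sub-Gaussian tail, evaluate the resulting integral as a Gamma function, and identify $\Gamma(p-1)$-type factors to get the $\sqrt{(p-2)!}$ growth. Your remark (i) about the tail bound formally applying to the centered variable is a fair point the paper glosses over, but it does not change the argument.
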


\begin{proof}

Since $\|V\|_{\ell_2} \leq \rho\sqrt{d}$ almost surely, we have
\begin{align*}
\E[|v|^{2(p-2)} \|V\|_{\ell_2}^{2(p-2)}] &\leq (\rho\sqrt{d})^{2(p-2)} \E[|v|^{2(p-2)}].
\end{align*}

We will use the tail characterization of sub-Gaussian random variables and the integral representation of moments to bound $\E[|v|^{2(p-2)}]$.

Since $v$ is sub-Gaussian with proxy variance $ \overline{\sigma}_v$, there exists an absolute constant $c > 0$ such that for all $t \geq 0$:
\begin{equation*}
\P(|v| \geq t) \leq 2\exp\left(-\frac{t^2}{\overline{\sigma}_v^2}\right).
\end{equation*}

Set $m=2(p-2)$. Using the integral representation of moments:
\begin{align*}
\E[|v|^{2(p-2)}] &= \int_0^\infty  m t^{m-1} 
  \P(|v| \geq t) \, dt \leq  2\int_0^\infty m\, t^{m-1} \, \exp\left(-\frac{t^2}{\overline{\sigma}_v^2}\right) \, dt.
\end{align*}

Integration by parts gives
$$
\int_0^\infty m\, t^{m-1} \, \exp\left(-\frac{t^2}{\overline{\sigma}_v^2}\right) \, dt
= \frac{m}{2} \, \overline{\sigma}_v^m \, \Gamma\left( \frac{m}{2} \right)
$$
where $\Gamma$ is the Gamma function. 

Since $m=2(p-2)$, we have $\Gamma\left( \frac{m}{2} \right) = (p-3)!$ and consequently
$$
\int_0^\infty m\, t^{m-1} \, \exp\left(-\frac{t^2}{\overline{\sigma}_v^2}\right) \, dt
= (p-2)! 
$$
Thus we get:
\begin{equation*}
\E^{1/2}[|v|^{2(p-2)} \|V\|_{\ell_2}^{2(p-2)}] \leq \sqrt{2}(\overline{\sigma}_v \rho \sqrt{d})^{(p-2)} \sqrt{(p-2)!}
\end{equation*}
\end{proof}

\subsection{Concentration inequalities}

We present here some well-known concentration inequalities for operators that we use in our analysis.

We recall first a version of Bernstein inequality, due to Pinelis and Sakhanenko, for random variables in a separable Hilbert space, see \citet[][Proposition 2]{caponnetto2007optimal}.

\begin{prop}
\label{prop:con_ineq_ps}
Let $A_i$, $i\in[n]$ be i.i.d copies of a random variable $A$ in a separable Hilbert space with norm $\|\cdot\|$. If there exist constants $\Lambda>0$ and $\sigma>0$ such that for every $m\geq2$, $\E\|A\|^m\leq \frac{1}{2} m! \Lambda^{m-2} \sigma^2$, 
then with probability at least $1-\delta$, 
\begin{equation*}
\left\|\frac{1}{n}\sum_{i\in[n]}A_i - \E A \right\|\leq \frac{4\sqrt{2}}{\sqrt{n}} \log\frac{2}{\delta} \sqrt{ \sigma^2 + \frac{\Lambda^2}{n} }.
\end{equation*}
\end{prop}


The following result is called the noncommutative Bernstein inequality. It was first derived by \citet{AhlswedeWinter2002}. The following version can be found in \citet{tropp2015introduction}.

\begin{theorem}[Matrix Bernstein Inequality, \citet{tropp2015introduction}] \label{thm:matrix_bernstein}
Let $\{A_k\}_{k=1}^n$ be independent random matrices of size $d_1 \times d_2$ with $\E[A_k] = 0$. Assume that $\normop{A_k} \leq R$ almost surely for all $k$. Define the matrix variance parameters
\begin{align*}
\sigma_L^2 &\doteq \normop{\sum_{k=1}^n \E[A_k A_k^*]},\qquad 
\sigma_R^2 \doteq \normop{\sum_{k=1}^n \E[A_k^* A_k]}.
\end{align*}
Then for any $t \geq 0$,
\begin{equation}
\label{eq:bernstein_noncommutative_bound}
\P\left\{\normop{\sum_{k=1}^n A_k} \geq t\right\} \leq (d_1 + d_2) \exp\left(\frac{-t^2/2}{\max\{\sigma_L^2, \sigma_R^2\} + Rt/3}\right).
\end{equation}
\end{theorem}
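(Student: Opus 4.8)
The plan is to prove this by the matrix Laplace-transform (Chernoff) method of Ahlswede--Winter, in the sharpened form due to Tropp; indeed, this is a recalled result and a full proof would simply cite \cite{tropp2015introduction}, but here is how I would carry it out. First I would reduce the rectangular case to a Hermitian one by Hermitian dilation: for $A\in\R^{d_1\times d_2}$ let $\widetilde{A}$ be the $(d_1{+}d_2)\times(d_1{+}d_2)$ Hermitian matrix with zero diagonal blocks, $(1,2)$-block $A$, and $(2,1)$-block $A^*$; then $\lambda_{\max}(\widetilde{A})=\|A\|$ and $\widetilde{A}^{\,2}$ is block-diagonal with blocks $AA^*$ and $A^*A$. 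Setting $H_k:=\widetilde{A_k}$, the $H_k$ are independent, centered, Hermitian, $\|H_k\|\le R$ a.s., $\bigl\|\sum_k H_k\bigr\|=\lambda_{\max}(\sum_k H_k)$, and $V:=\sum_k\E[H_k^2]$ is block-diagonal with blocks $\sum_k\E[A_kA_k^*]$ and $\sum_k\E[A_k^*A_k]$, so $\|V\|=\max\{\sigma_L^2,\sigma_R^2\}=:v$. It therefore suffices to show, for Hermitian $H_k$ of dimension $D:=d_1{+}d_2$, that $\P\{\lambda_{\max}(\sum_k H_k)\ge t\}\le D\exp\!\bigl(-\tfrac{t^2/2}{\,v+Rt/3\,}\bigr)$.

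For the Hermitian statement I would run three steps. (i) Chernoff: for $\theta>0$, $\P\{\lambda_{\max}(\sum_k H_k)\ge t\}\le e^{-\theta t}\,\E\,\Tr\exp(\theta\sum_k H_k)$, using $e^{\theta\lambda_{\max}(S)}=\lambda_{\max}(e^{\theta S})\le\Tr e^{\theta S}$ together with Markov. (ii) Subadditivity of the matrix cumulant generating function: by Lieb's concavity theorem and Jensen, $\E\,\Tr\exp(\sum_k\theta H_k)\le\Tr\exp\!\bigl(\sum_k\log\E\,e^{\theta H_k}\bigr)$. (iii) A semidefinite bound on each matrix mgf: the pointwise bound $e^{\theta y}\le 1+\theta y+g(\theta)\,y^2$ for $|y|\le R$, with $g(\theta)=(e^{\theta R}-\theta R-1)/R^2$, lifts via spectral calculus to $e^{\theta H_k}\preceq I+\theta H_k+g(\theta)H_k^2$; taking expectations and using $\E H_k=0$ gives $\E\,e^{\theta H_k}\preceq I+g(\theta)\E[H_k^2]\preceq\exp\!\bigl(g(\theta)\E[H_k^2]\bigr)$, hence $\log\E\,e^{\theta H_k}\preceq g(\theta)\E[H_k^2]$ by operator monotonicity of $\log$.

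Combining (i)--(iii) with $\sum_k\E[H_k^2]=V\preceq vI$ and monotonicity of $\Tr\exp$ under the PSD order yields $\E\,\Tr\exp(\theta\sum_k H_k)\le\Tr\exp(g(\theta)vI)=D\,e^{g(\theta)v}$, so $\P\{\lambda_{\max}(\sum_k H_k)\ge t\}\le D\exp(-\theta t+g(\theta)v)$ for every $\theta\in(0,3/R)$. The last step is the scalar optimization: using the elementary bound $g(\theta)\le\tfrac{\theta^2/2}{1-\theta R/3}$ and the choice $\theta=t/(v+Rt/3)$ collapses the exponent to exactly $-\tfrac{t^2/2}{v+Rt/3}$, which is the claim. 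The main obstacle is step (ii): subadditivity of the matrix cgf is the non-elementary ingredient — it rests on Lieb's concavity theorem (joint concavity of $A\mapsto\Tr\exp(L+\log A)$). A self-contained but looser substitute is the original Ahlswede--Winter argument via the Golden--Thompson inequality $\Tr e^{X+Y}\le\Tr e^Xe^Y$ iterated over the summands, which gives the same tail shape but with $v$ replaced by $\sum_k\|\E[H_k^2]\|$; obtaining the stated form with $v=\|\sum_k\E[H_k^2]\|$ genuinely needs the Lieb route. The remaining pieces — the dilation bookkeeping, the scalar-to-operator transfer in (iii), and the one-variable optimization — are routine.
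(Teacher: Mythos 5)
Your proposal is correct: it is precisely the standard Laplace-transform argument (Hermitian dilation, Lieb-based subadditivity of the matrix cgf, the Bernstein mgf bound, and the scalar optimization with $g(\theta)\le\tfrac{\theta^2/2}{1-\theta R/3}$) from the cited Tropp monograph, and all steps, including the variance bookkeeping under dilation and the choice $\theta=t/(v+Rt/3)$, check out. The paper itself states this theorem as a recalled result and gives no proof beyond the citation, so there is nothing to contrast with.
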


A more convenient and equivalent of \Cref{eq:bernstein_noncommutative_bound} is, for any $\tau \in (0,1)$, w.p.a.l. $1-\tau$
\begin{equation*}
\normop{\frac{1}{n}\sum_{k=1}^n A_k} \leq  \sqrt{\frac{2 \, \max\{\sigma_L^2, \sigma_R^2\}\log\left(\frac{d_1 + d_2}{\tau}\right)}{n^2}} + \frac{R}{3} \frac{\log\left(\frac{d_1 + d_2}{\tau}\right)}{n}.
\end{equation*}

\section{Experimental Details}
\label{sec:experimental-details}
The code needed to reproduce all figures in this work can be found at \href{https://github.com/Wornbard/AugSpecIV}{https://github.com/Wornbard/AugSpecIV}.
    
\subsection{Structural functions and alignment in dSprites}
\label{sec:dsprites_is_aligned}

The dSprites structural function $h_0=h_\text{old}$, as used in, \eg, \citet{xu2024deepproxycausallearning} is not a new idea. However, it differs from an alternative on which SpecIV\citep{sun25speciv} and DFIV\citep{xu2020learning} were originally evaluated. That one takes the form, \[h_0(x)=(\|B X\|^2-5000)/1000\] with $B\in\R^{10\times 4096}$ consisting of \iid $\mathrm{Uniform}([0,1])$ entries. This function evaluates the squared norm of 10 random linear measurements of the image and returns a linear transformation of it. Since the entries of $B$ are \iid, this $h_0$ has no clear dependence on the values of $Z$. To see that it should be extremely difficult to recover $h_0$ from the relation $\gT h_0=r_0$, note that moving the sprite vertically, while keeping the orientation, scale and $x$-position constant (i.e. not changing the instrumental variable) can lead to different values of $h_0$. This does not quite imply that $h_0$ has a non-trivial projection onto the kernel of $X$ because the distribution of the images $X$ is not vertical-shift-invariant. But it does suggest that the instrument $Z$ is essentially uninformative about an important property of the image needed to recover $h_0$. One can reliably learn part of $h_0$ due to its monotonicity in the sprite's scale but beyond this, any dependence on $x$-position and orientation is likely to be extremely irregular and not representative of real-life applications of IV regression. For these reasons we choose not to utilize this benchmark.

\subsection{Spectral alignment in dSprites}

\begin{figure}[htbp]
    \centering
    \begin{subfigure}[b]{0.6\textwidth}
        \centering
        \includegraphics[width=\textwidth]{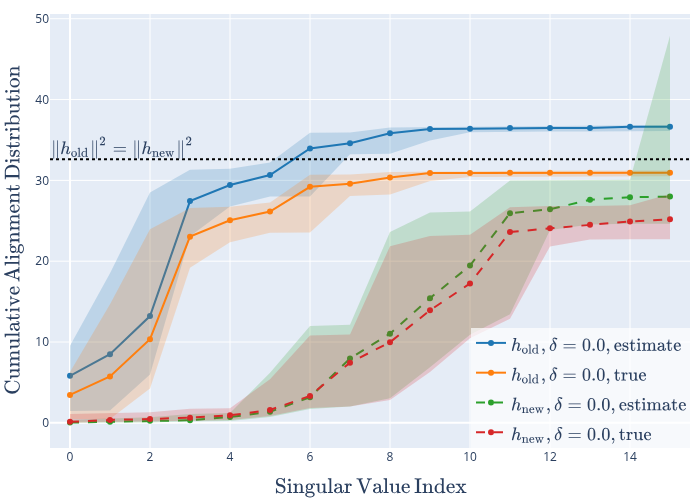}
        \caption*{}
        \label{fig:first_sub_fig}
    \end{subfigure}

    \caption{Distributions of cumulative alignment estimates and true values $\|\Pi_{\hat v^{(i)}}h_0\|^2$ for increasing $i$, evaluated on separately fitted models (with identical parameters) for $h_0=h_\text{old},h_\text{new}$ at $\delta=0$.}
    \label{fig:dsprite_alignments}
\end{figure}
\paragraph{Standard dSprites is well-aligned.}
Intuitively, most of the variability in the image should be explained by the heart's position and scale since these quantities determine where the non-zero pixels are and how many of them there are in total. By contrast, the rotation angle only explains the location of non-zero pixels on the boundary of the broad region where the sprite is located (since most of the sprite's area is preserved by rotations around its center). Therefore, structural functions that mostly depend on scale and $x$-position should intuitively be easier to learn than those that are sensitive to rotation. Since $A$ in the dSprites structural function ``measures'' distance of the non-zero pixels from the vertical central bar in the image, it should be mostly sensitive to $x$-positions and the number of non-zero points to sum over.

Moreover, following \citet{meunier2025demystifying}, we can compute the empirical-distribution-based singular value decomposition of a fitted $\hat{\gT}_d$ for $\delta=0$ (which we just treat as an operator $L_2(X)\to L_2(Z)$). That is, we compute features $\hat u_i,\hat v_i$ and positive real numbers $\hat\sigma_i$ such that,
\[
\hat{\gT}_d = \sum_{i=1}^d \hat\sigma_i\hat u_i\otimes\hat v_i.
\]
The feature functions $(\hat u_{1:d}),( \hat v_{1:d})$ are orthonormal systems with respect to the empirical distributions of $Z,X$ based on the samples used for the SVD estimation.
If $\normop{\hat{\gT}_d-\gT_d}$ is sufficiently small, we are guaranteed  that the projections onto singular features $\hat v_i$ are close to the projections onto the singular features of the true truncated conditional mean operator $\gT_d$ \citep{meunier2025demystifying}. 

We compute the individual-feature projection lengths $|\langle \hat v_i,h_0\rangle|$ to conclude that most of $h_\text{old}$ is supported on the leading singular functions whose singular values are close to 1. Moreover, we compare these true alignment values to their estimates that are not based on $h_0$ \citep{meunier2025demystifying}. This is to ensure that spectral (mis-)alignment in the dSprites setting can also be detected in a real-life setting where the structural function is not available. Denote the projection onto the leading $i$ singular $X$-features of $\hat{\gT}_d$ with $\Pi_{\hat v^{(i)}}$. As seen in \Cref{fig:dsprite_alignments}, both the estimates of the squared projection length onto the leading $i$ features of $\hat{\gT}_d$ and the true values $\|\Pi_{\hat v^{(i)}}h_0\|^2$ are in agreement with one another and indicate that $h_\text{old}$ lies in the top of the spectrum $\gT$.
\paragraph{A more difficult structural function.} Inspired by the discussion above, we would like to evaluate our methods in a more challenging setting. An intuitive method of obtaining such a benchmark is constructing a structural function which is most sensitive to the rotation of the sprite as opposed to its position or scale. In order to work with shapes whose orientation is easier to estimate from the image, we replace hearts with ellipses in our benchmark. For an image $X$ of an ellipse, let $\tilde X$ be the result of convolving it with a smoothing gaussian kernel with a small bandwidth of around 4 pixels. This is done in order to decrease our method's sensitivity to noise. Then let $x_{\max,i}~\doteq~\max_{j\in\{1,...,64\}} \tilde{X}_{ji}$ be the maximum of $\tilde{X}$ along its vertical bars, and $y_{\max,i}~\doteq ~\max_{j\in\{1,...,64\}}\tilde{X}_{ij}$ be the corresponding maximum over horizontal bars. Then, one expects the norms of $x_{\max}$ and $y_{\max}$ to be roughly proportional to the length of the ellipses projection onto the horizontal and vertical axes of the image. Hence, taking
\begin{equation*}
    h_{\text{new}} \doteq a\cdot(\|x_{\max}\|_{\ell_2}/\|y_{\max}\|_{\ell_2}-b),
\end{equation*}
one should obtain a function which depends on the ellipse's orientation and is insensitive to its scale or position. The scalars $a,b$ are chosen in order to match the first two moments of the original structural function $h_\text{old}$. This is done in order to not artificially change the downstream problem's difficulty by making the norm of the signal relative to noise different.

\begin{figure}[htbp]
    \centering  
    \includegraphics[width=0.8\textwidth]{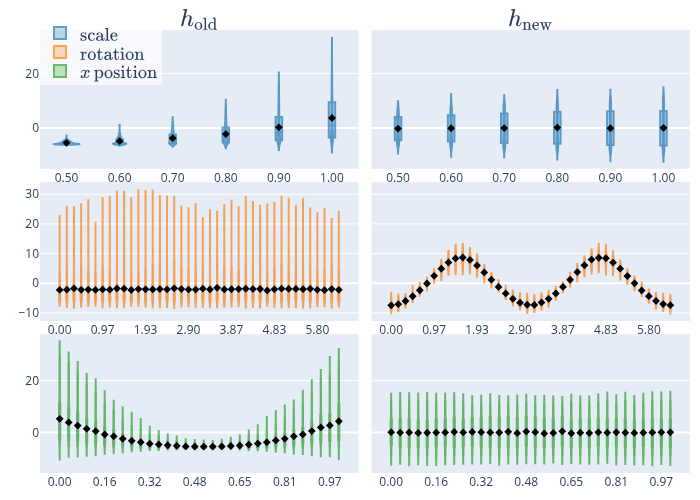}
    \caption{Comparison of how the distributions of $h_\text{old}$ and $h_\text{new}$ vary with each component of the instrument $Z$ (scale, orientation or $x$ position). The values of each component of $Z$ in dSprites are quantized. The $x$-axis positions in the figure correspond to those values. For each value of a component of $Z$, we display the distribution of the values of $h_0=h_\text{old},h_\text{new}$ evaluated on images where the component takes that value. The marked values are the means of each bin.}
    \label{fig:zh_dependence}
\end{figure}

As seen in \Cref{fig:zh_dependence}, while $h_\text{old}$ has a clear dependence on the sprite position and orientation, $h_\text{new}$ is only visibly sinusoidal in its orientation and insensitive to the latter two. \Cref{fig:dsprite_alignments} confirms that data generated using $h_\text{new}$ should be considerably more challenging for spectral-feature-based IV regression, since its projection onto the leading singular functions is close to zero for all models with $\delta=0$ that we have fitted. It is supported further in the spectrum, and in fact $\|\Pi_{\hat{v}^\upd} h_{\text{new}}\|^2$ is far from $\|h_\text{new}\|^2$. This means that not even all of the learned features span $h_\text{new}$.

\begin{figure}[htbp]
    \centering
    \begin{minipage}{0.35\textwidth}
        \centering
        \includegraphics[width=\textwidth]{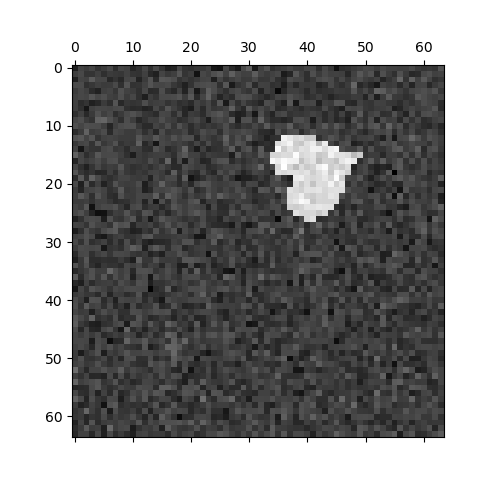}
    \end{minipage}
    \begin{minipage}{0.35\textwidth}
        \centering
        \includegraphics[width=\textwidth]{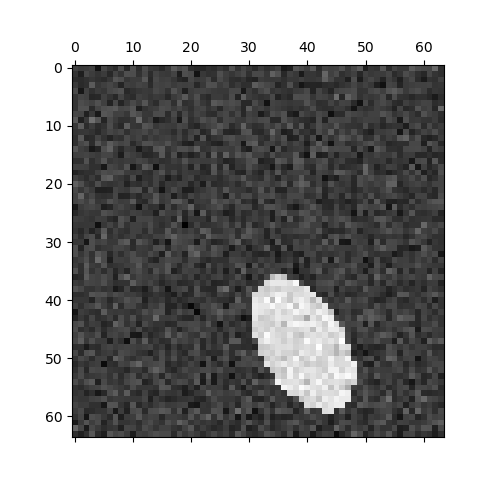}
    \end{minipage}\hfill
   
    \caption{Examples of sprite images on which $h_\text{old}$ (left) and $h_\text{new}$ (right) are evaluated.}
   
\end{figure}
\subsection{Synthetic data models}
\label{sec:synthetic_models}
We train our models using the same architecture as \citet{meunier2025demystifying}, which is described in \Cref{tab:synthetic_network_architectures}. Notably, the first layer of this network utilizes the $\sin(x)^2+x$ activation proposed by \citet{ziyin2020neural}, which enables it to learn the oscillatory basis functions more reliably. To make the improvements in the alignment of leading features with $\delta$ clearer, we learn $10$ $Z-$ and $X-$ features on synthetic datasets with $d=11$. The models are trained on 50000 $(Z,X,Y)$ samples using the Adam algorithm.
\begin{table}[h!]
    \centering
    \begin{tabular}{ll}
        \toprule
        \textbf{Layer} & \textbf{Configuration} \\
        \midrule
        1 & Input: 1 \\
        2 & FC(1,50), $x\mapsto x+\sin^2 x$ \\
        3 & FC(50, 50), GeLU \\
        4 & FC(50, 10)\\
        \bottomrule
    \end{tabular}
    \caption{$Z,X$ feature networks for synthetic data}
    \label{tab:synthetic_network_architectures}
\end{table}

Evaluation of the benefits of using a positive $\delta$ on a bigger range of $c_\sigma$ values than in the main body of the work can be found in \Cref{fig:full_synthetic_MSEs}

 \begin{figure}[htbp]
    \centering
    \begin{minipage}{0.5\textwidth}
        \centering
        \includegraphics[width=\textwidth]{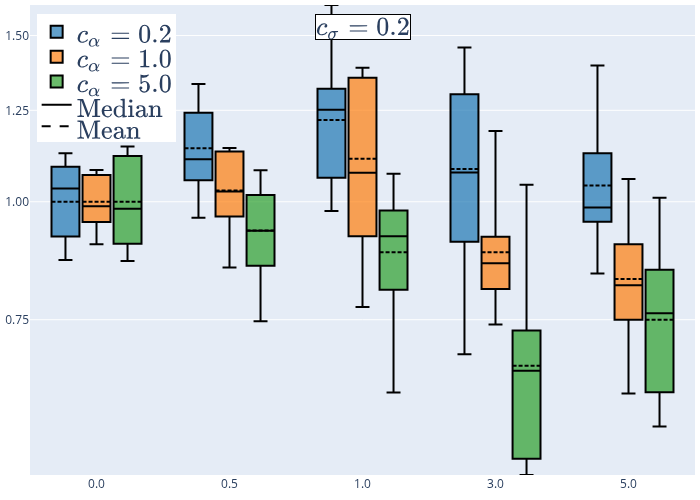}
        \label{fig:mse_sv0.2}
    \end{minipage}\hfill
    \begin{minipage}{0.5\textwidth}
        \centering
        \includegraphics[width=\textwidth]{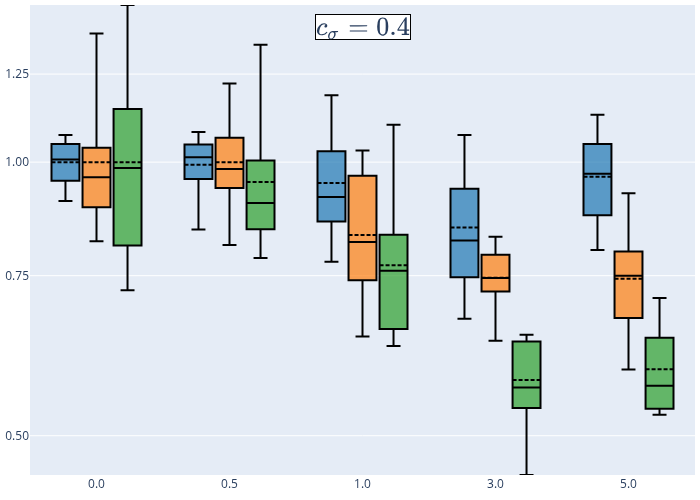}
        \label{fig:mse_sv0.4}
    \end{minipage}
    
    \vspace{-1em}
    
    \begin{minipage}{0.5\textwidth}
        \centering
        \includegraphics[width=\textwidth]{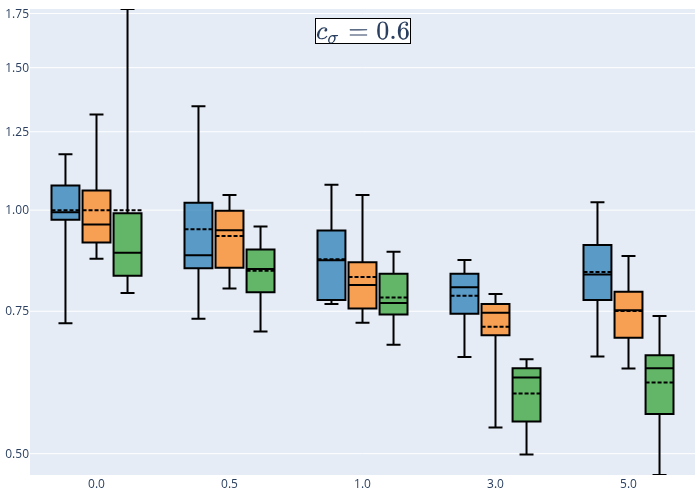}
        \label{fig:mse_sv0.6}
    \end{minipage}\hfill
    \begin{minipage}{0.5\textwidth}
        \centering
        \includegraphics[width=\textwidth]{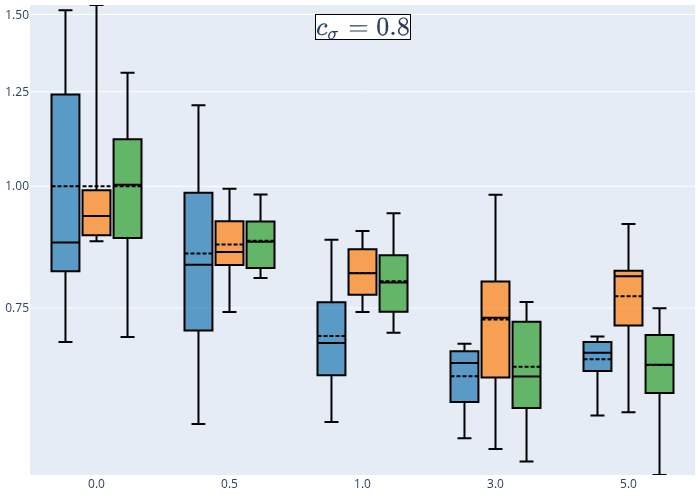}
        \label{fig:mse_sv0.8}
    \end{minipage}
    \caption{Distributions of relative IV regression MSEs for the synthetic example with $\delta\in\{0,0.5,1.0,3.0,5.0\}$.}
    \label{fig:full_synthetic_MSEs}
\end{figure}
\subsection{Spectral learning models for dSprites}
\label{sec:spec_dsprites_models}
For learning spectral features we utilize nearly the same models as \citet{sun25speciv}. The only difference is replacing ReLU activations with GeLU which we found to lead to slightly easier training. ``BN'' in \Cref{tab:dsprite_architectures} refers to Batch Normalization \citep{ioffe15batch}. The models are trained on 25000 $(Z,X,Y)$ samples and optimized with the Adam algorithm.

We observed that feature models utilizing 32 features, as originally proposed in \citet{sun25speciv}, were very prone to overfitting, often before this became apparent in the observed loss, making early stopping difficult to implement. Hence, we trained the models with 16 features instead. This led to better performance across the board for $h_\text{old}$. In $h_\text{new}$, vanilla SpecIV, trained with 32 features attained a smaller loss than it did for 16. However, the performance was still significantly below that of DFIV or spectral learning with a non-zero $\delta$. For all the ``optimal'' values of $\delta$ (between 0.1 and 1.0) we observed benefits from using 16 features over 32.
\begin{table}[!ht]
    \centering
    \begin{minipage}{0.48\textwidth}
        \centering
        \begin{tabular}{ll}
            \toprule
            \textbf{Layer} & \textbf{Configuration} \\
            \midrule
            1 & Input: 4096 \\
            2 & FC(4096,1024), BN, GeLU \\
            3 & FC(1024,512), BN, GeLU \\
            4 & FC(512,128), BN, GeLU \\
            5 & FC(128,16)\\
            \bottomrule
        \end{tabular}
        \caption*{$X$ - features}
        \label{tab:first_table}
    \end{minipage}
    \hspace{-1cm} 
    \begin{minipage}{0.48\textwidth}
        \centering
        \begin{tabular}{ll}
            \toprule
            \textbf{Layer} & \textbf{Configuration} \\
            \midrule
            1 & Input: 3 \\
            2 & FC(3,256), BN, GeLU \\
            3 & FC(256,128), BN, GeLU \\
            4 & FC(128,128), BN, GeLU \\
            5 & FC(128,16)\\
            \bottomrule
        \end{tabular}
        \caption*{$Z$ - features}
        \label{tab:second_table}
    \end{minipage}
    \captionof{table}{Architectures of spectral learning networks for dSprites datasets}
    \label{tab:dsprite_architectures}
\end{table}
\subsection{DFIV models}
\label{sec:dfiv_dsprites_models}
For the comparison to DFIV, which is the only viable competitor to SpecIV, we utilize the same architecture as proposed in the original work \cite{xu2020learning}. ``SN'' in \Cref{tab:dfiv_architectures} refers to spectral normalization proposed by \citet{miyato2018spectral} and ``BN'' is Batch Normalization. The models are trained on 25000 $(Z,X,Y)$ samples and optimized with the Adam algorithm.

Since we decided to decrease the number of features employed by spectral learning, relative to the standard architecture used in the literature, we also investigated whether doing so leads to better performance in DFIV. The opposite was observed and hence we retain the original models.

\begin{table}[htbp]
    \centering
    \begin{minipage}{0.48\textwidth}
        \centering
        \begin{tabular}{ll}
            \toprule
            \textbf{Layer} & \textbf{Configuration} \\
            \midrule
            1 & Input: 4096 \\
            2 & FC(4096,1024), SN, ReLU \\
            3 & FC(1024,512), SN, ReLU, BN \\
            4 & FC(512,128), SN, ReLU \\
            5 & FC(128,32), SN, BN, Tanh\\
            \bottomrule
        \end{tabular}
        \caption*{$X$ - features}
        \label{tab:first_table_bis}
    \end{minipage}
    \hspace{-1cm} 
    \begin{minipage}{0.48\textwidth}
        \centering
        \begin{tabular}{ll}
            \toprule
            \textbf{Layer} & \textbf{Configuration} \\
            \midrule
            1 & Input: 3 \\
            2 & FC(3,256), SpectralNorm, ReLU \\
            3 & FC(256,128), SN, ReLU, BN \\
            4 & FC(128,128), SN, ReLU, BN \\
            5 & FC(128,32), BN, ReLU\\
            \bottomrule
        \end{tabular}
        \caption*{$Z$ - features}
        \label{tab:second_table_bis}
    \end{minipage}
    \captionof{table}{Architectures of DFIV networks for dSprites datasets}
    \label{tab:dfiv_architectures}
\end{table}
\subsection{Comparison to KIV}
\label{sec:kiv_dsprites}
Since Kernel IV (KIV) proposed by \citet{singh2019kernel} is a very popular and easily applicable method, we also include it in our benchmarks. We evaluated it using the Gaussian kernel with a bandwidth proportional to $m\sqrt{d}$ where $m$ is the median distance between pairs samples on which the kernel is evaluated, and $d$ is the dimension of the samples (i.e. 3 for $Z$ and 4096 for $X$).

\newcommand{\Qfunc}{Q_\pi}

\subsection{OPE Experiment Protocol and Results}\label{sec:OPE_results}

\paragraph{Hyperparameter optimization.} For each method and task, we randomly sample 100 hyperparameter combinations from the grids in \Cref{tab:hparams_dfiv,tab:hparams_augspeciv}, and select the configuration achieving the lowest stage-2 mean squared error on the corresponding task with $p = 0.2$. Orthonormal regularization is applied to both spectral methods, while the $\delta$ parameter is specific to AugSpecIV, i.e., $\delta = 0$ for SpecIV and is tuned as any other hyperparameter for AugSpecIV. With the chosen hyperparameters, we run each method five times across all tasks and noise levels and report the mean and standard deviation of the estimated policy values.

\begin{figure}[t]
    \centering
    \includegraphics[width=\linewidth]{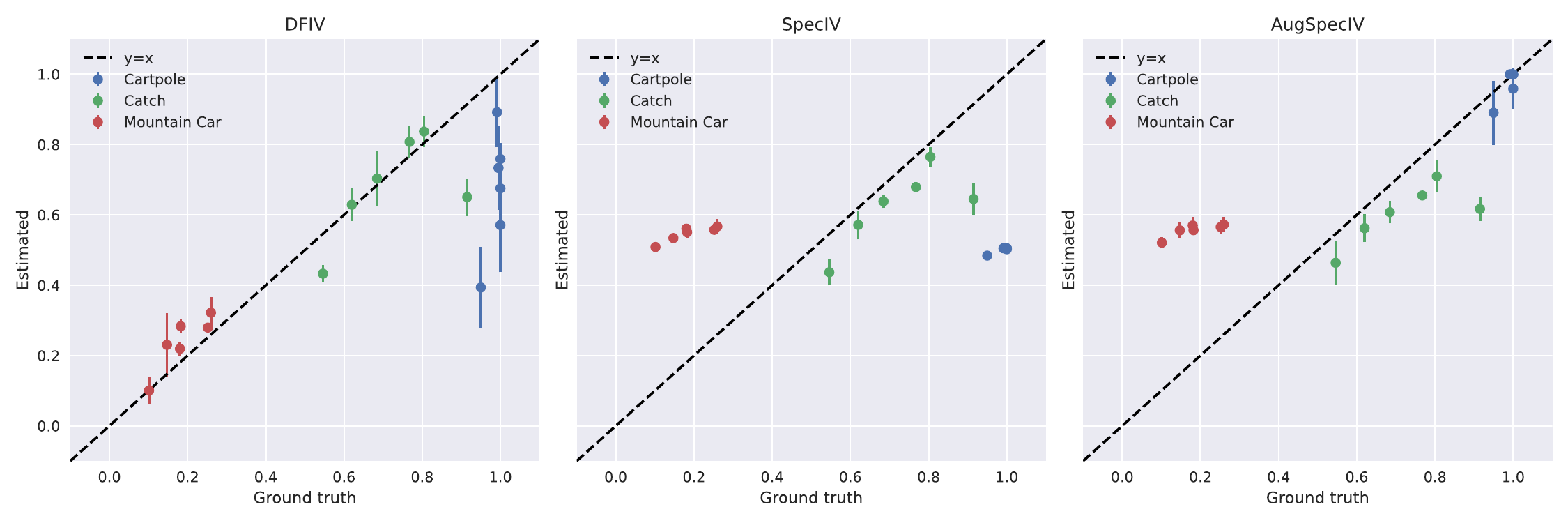}
    \caption{Scatter plot of estimated policy values vs. ground truth.}
    \label{fig:ope_scatter}
\end{figure}

\begin{figure}[t]
    \centering
    \includegraphics[width=\textwidth]{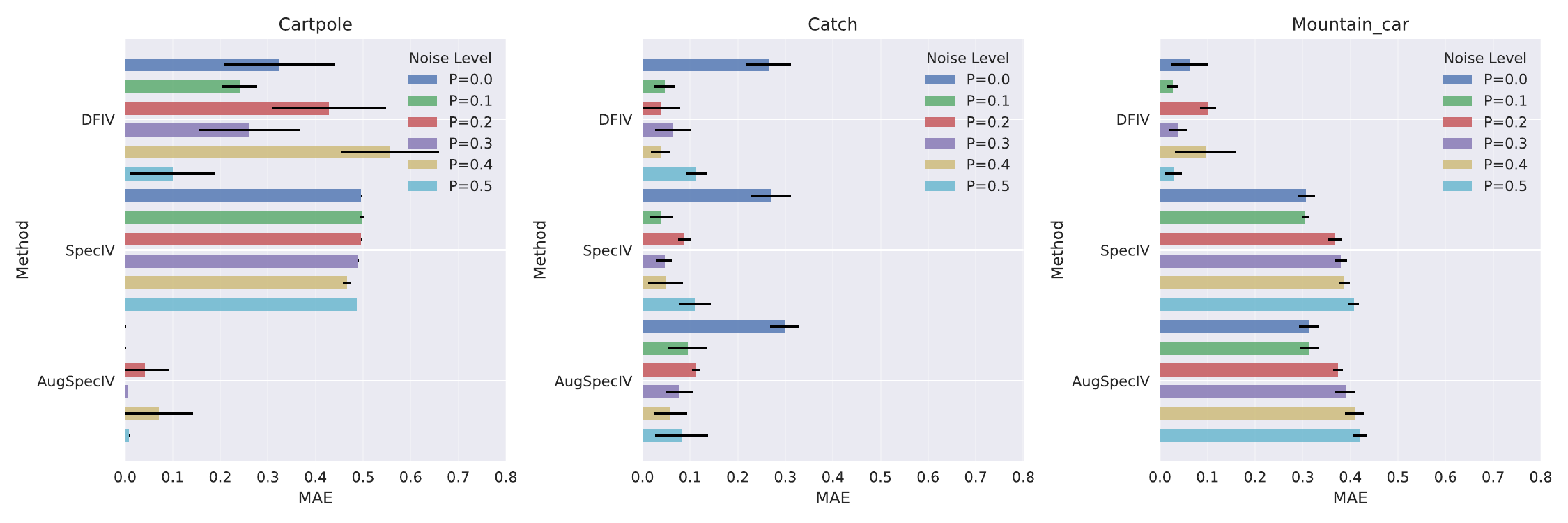}
    \caption{Box plot of mean absolute error (MAE) between policy value and ground-truth.}
    \label{fig:ope_mae}
\end{figure}

\begin{figure}
    \centering
    \includegraphics[width=0.5\linewidth]{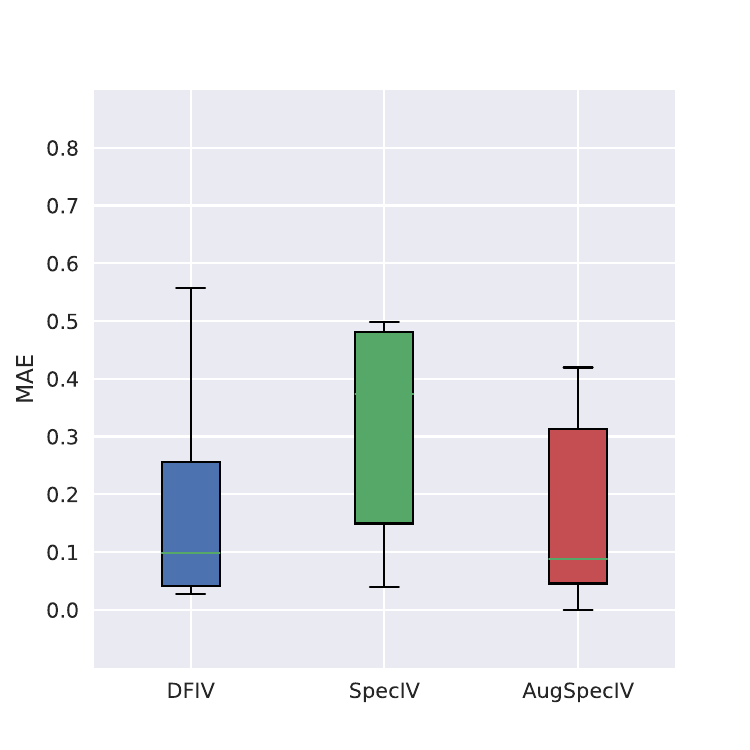}
    \caption{Distribution of MAE across NPIV methods.}
    \label{fig:ope_dist}
\end{figure}

\begin{figure}[htbp]
    \centering
    \begin{subfigure}[b]{0.32\textwidth}
        \centering
        \includegraphics[width=\textwidth]{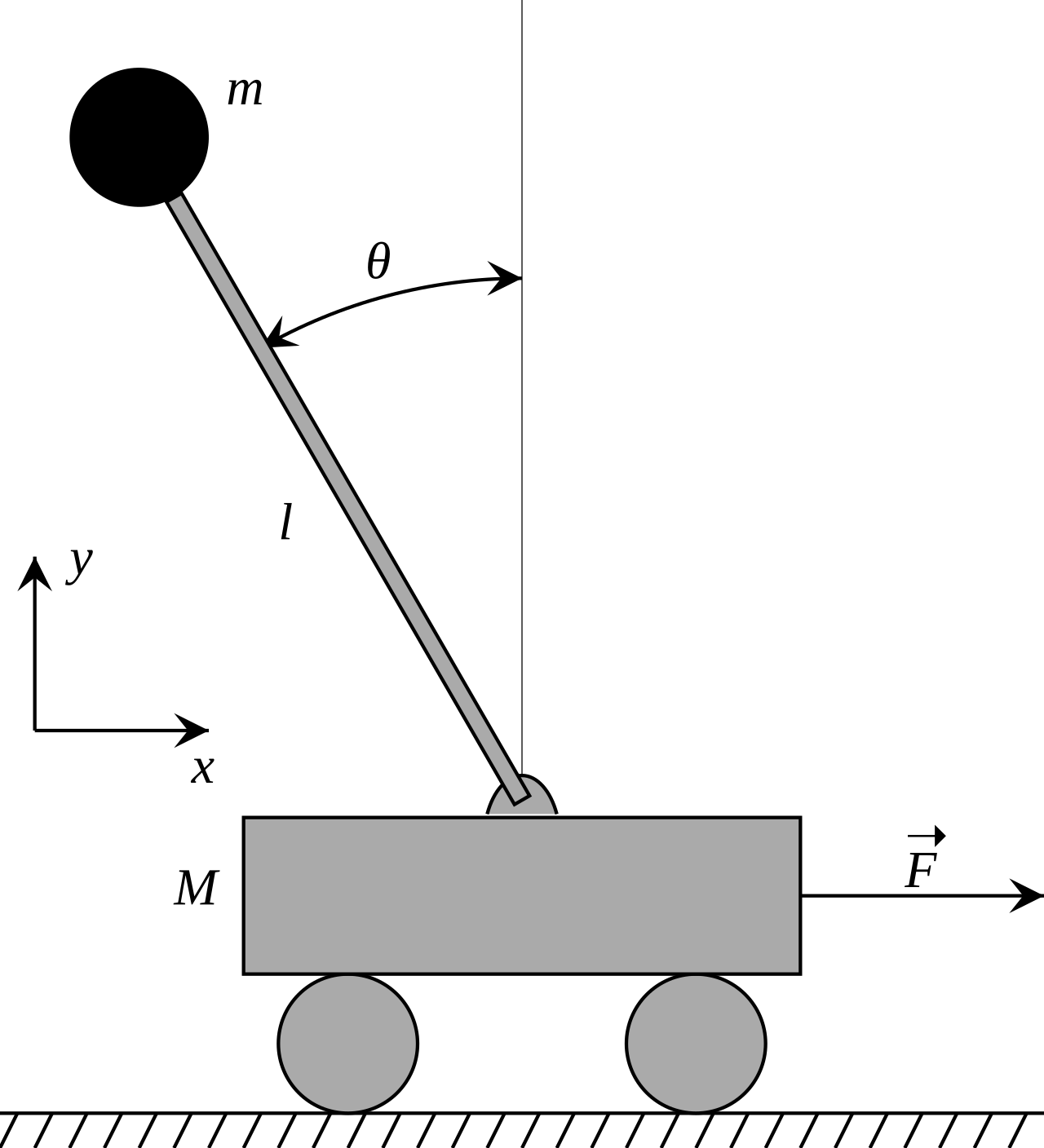}
        \caption{CartPole}
    \end{subfigure}
    \hfill
    \begin{subfigure}[b]{0.32\textwidth}
        \centering
        \includegraphics[width=\textwidth]{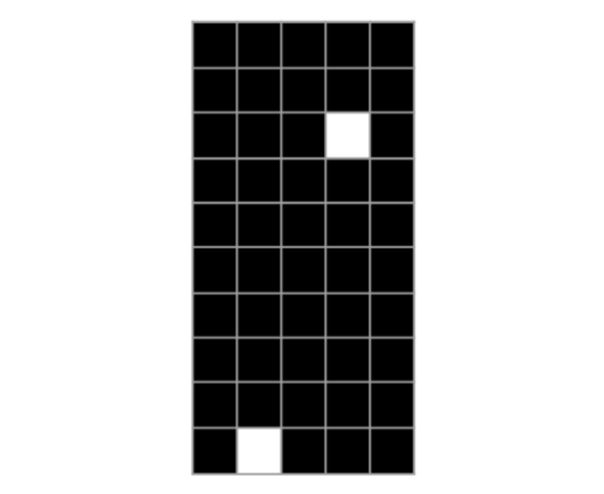}
        \caption{Catch}
    \end{subfigure}
    \hfill
    \begin{subfigure}[b]{0.32\textwidth}
        \centering
        \includegraphics[width=\textwidth]{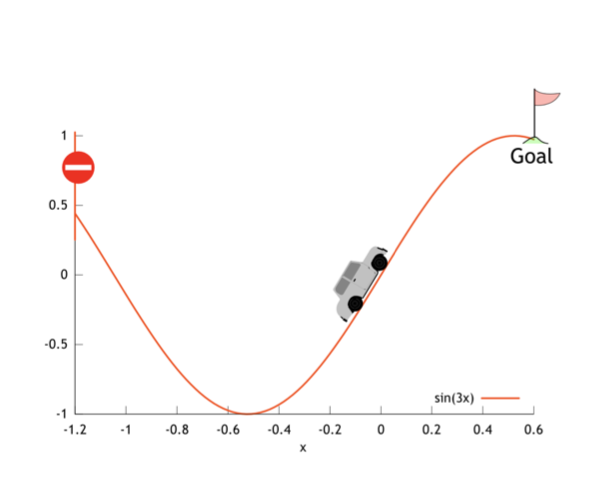}
        \caption{Mountain Car}
    \end{subfigure}
    \caption{Depiction of the considered environments.}
    \label{fig:environments}
\end{figure}

\begin{table}[htbp]
    \centering
        \begin{tabular}{lc}
        \toprule
        \textbf{Hyperparameter} & \textbf{Values} \\
        \midrule
        Training Steps & $10^5$ \\
        Batch Size & 2048 \\
        Stage-1 reg. & $\{10^{-8},10^{-6},10^{-4},10^{-2} \}$ \\
        Stage-2 reg. & $\{10^{-8},10^{-6},10^{-4},10^{-2} \}$\\
        Value reg. & $\{10^{-8},10^{-6},10^{-4},10^{-2} \}$ \\
        Instrumental reg. & $\{10^{-8},10^{-6},10^{-4},10^{-2} \}$ \\
        Value learning rate & $\{10^{-5},3\!\cdot\!10^{-5},10^{-4},3\!\cdot\!10^{-4},10^{-3} \}$ \\
        Instrumental learning rate & $\{10^{-5},3\!\cdot\!10^{-5},10^{-4},3\!\cdot\!10^{-4},10^{-3} \}$ \\
        $X$ net layer sizes & $(50,50)$ \\
        $Z$ net layer sizes & $\{ (50, 50), (100, 100), (150, 150) \}$ \\
        \bottomrule
        \end{tabular}
    \caption{DFIV hyperparameters.}
    \label{tab:hparams_dfiv}
\end{table}

\begin{table}[htbp]
    \centering
        \begin{tabular}{lc}
        \toprule
        \textbf{Hyperparameter} & \textbf{Values} \\
        \midrule
        Training Steps & $10^4$ \\
        Batch Size & 2048 \\
        Stage-1 reg. & $\{10^{-8},10^{-6},10^{-4},10^{-2} \}$ \\
        Stage-2 reg. & $\{10^{-8},10^{-6},10^{-4},10^{-2} \}$\\
        Orthonormal reg. & $\{10^{-8},10^{-6},10^{-4},10^{-2} \}$ \\
        Learning rate & $\{10^{-5},3\!\cdot\!10^{-5},10^{-4},3\!\cdot\!10^{-4},10^{-3} \}$ \\
        $X$ net layer sizes & $(50,50)$ \\
        $Z$ net layer sizes & $\{ (50, 50), (100, 100), (150, 150) \}$ \\
        $\delta$ & $\{10^{-3}, 10^{-2}, 10^{-1}, 1 \}$ \\
        \bottomrule
        \end{tabular}
    \caption{SpecIV and AugSpecIV hyperparameters. The neural architectures are the same as those of DFIV described in \citep{chen2022ivForPolicyEval}, except for the usage of GELU activations.}
    \label{tab:hparams_augspeciv}
\end{table}

\subsubsection{OPE Context: Policies and Data}\label{sec:OPE_context}
In OPE, the ``offline'' constraint is critical:
\begin{itemize}
    \item We cannot interact with the environment. We cannot take a state $s$ and an action $a \sim \pi$ to observe a new transition $(s, a, r, s')$.
    \item We can compute with $\pi$. Since $\pi$ is a known policy (e.g., a function in our code), we can sample actions $a' \sim \pi(a'|s')$ for any state $s'$ that we observe in our dataset.
\end{itemize}

\subsubsection{NPIV Formulation of OPE and the Compactness Problem}\label{sec:OPE}

In OPE, to estimate the expected return of a policy $\pi$, a common and effective approach consists of first estimating the Q-function $Q_\pi$, and then averaging over the initial state distribution. While early methods like Least Squares Temporal Difference (LSTD; \citealp{Bradtke1996}) pioneered this direction using linear function approximation, modern approaches increasingly leverage NPIV regression to handle general function approximation.

\paragraph{Related work on OPE.} We briefly discuss existing methods for OPE and refer to \citet{jiang2025offline} and references therein for additional details. Early approaches such as LSTD were restricted to linear function approximation. For general function approximation, the dominant approach has historically been Fitted-Q Evaluation (FQE), which iteratively solves a regression problem to minimize the Bellman residual. Recent advances have highlighted that such an approach requires strong assumptions to succeed, namely: having access to a sufficiently expressive function class (\eg, that is realizable and Bellman complete) and using data with good coverage. NPIV formulations offer an alternative perspective to these regression-based methods. Rather than minimizing a squared error directly, NPIV approaches \cite{hu2024primaldualspectralrepresentationoffpolicy} frame the Bellman equation as a conditional moment restriction as shown below.

\paragraph{$Q$ function and NPIV.} Let $(\cS, \cA, P, R, \mu_0)$ be a Markov Decision Process (MDP) with state space $\cS$, action space $\cA$, transition kernel $P(s' \mid s,a)$, reward distribution $P_{rew}(r \mid s,a)$, and initial distribution $\mu_0$. We denote by $R$ the random reward variable such that its distribution given any state-action pair $(s,a)$ is $P_{rew}(\cdot \mid s,a)$.

Given a target policy $\pi$ and discount factor $\gamma \in (0, 1)$, its value is defined as:
\begin{equation}
    \rho(\pi) = \E \left[ \sum_{t=0}^\infty \gamma^t R_t \right] = \E_{S_0 \sim \mu_0, A_0 \sim \pi(S_0)} [\Qfunc(S_0, A_0)], \label{eq:ope_rho}
\end{equation}
where $R_t \sim P_{rew}(\cdot \mid S_t, A_t)$, $(S_t, A_t)$ follows $\pi$ and $P$, and $\Qfunc$ is the state-action value function (i.e. Q-function), given by
\begin{equation*}
    \Qfunc(s, a) = \E \left[ \sum_{t=0}^\infty \gamma^t R_t \mid S_0=s, A_0=a \right]. 
\end{equation*}
The $Q$-function, $\Qfunc$, is the unique fixed point of the Bellman equation:
\begin{equation}
    \Qfunc(s, a) = \E_{R\sim P_{rew}(\cdot \mid s,a)}[R] + \gamma \E_{S' \sim P(\cdot|s,a), A' \sim \pi(\cdot|S')} [\Qfunc(S', A')] \label{eq:ope_bellman}
\end{equation}

We first show that we can rewrite this Bellman Equation as an NPIV problem of the form \Cref{eq:npiv_int}. A given policy $\pi$ induces an occupancy measure $\mu_{\pi}$ on the state-action space $\cS \times \cA$. We then take $X = (S,A,S',A')$ equipped with the measure such that $(S,A) \sim \mu_{\pi}, S' \sim P(\cdot \mid S,A), A' \sim \pi(\cdot \mid S')$, $Z = (S,A)$ equipped with the measure such that $(S,A) \sim \mu_{\pi}$ and finally $Y=R$. We take $\cT$ as defined in the main body so that \Cref{eq:ope_bellman} can be re-written as
$$
\cT h_{\pi}(S,A) = \E[R | S,A] \qquad \text{ i.e. } \qquad \cT h_{\pi}(Z) = \E[Y | Z],
$$
where $h_{\pi}(X) = h_{\pi}(S,A,S',A') = \Qfunc(S, A) - \gamma \Qfunc(S', A').$ Note that solving NPIV with respect to $\cT$ allows us to retrieve $\Qfunc$ on the support of $\mu_{\pi}$. However instead of using $\mu_{\pi}$ for which we don't have samples, we can use the logging policy. Recall the logging policy $\pi_b$ and denote by $\mu_b$ the induced state-action distribution. We introduce $\cT_b$ as the standard conditional expectation operator but we now equip $X$ with $(S,A) \sim \mu_b, S' \sim P(\cdot \mid S,A), A' \sim \pi(\cdot \mid S')$ and $Z$ with $(S,A) \sim \mu_b$. The following equation 
$$
\cT_b h_{\pi}(S,A) = \E[R | S,A] \qquad \text{ i.e. } \qquad \cT_b h_\pi(Z) = \E[Y | Z],
$$
allows us to identify $h_{\pi}(X) = h_{\pi}(S,A,S',A') = \Qfunc(S, A) - \gamma \Qfunc(S', A')$ on the support $\mu_b$. Crucially, this identification relies on a coverage assumption \citep{jiang2025offline} that states that the support of the distribution $\mu_b$ covers the support of the target policy. Consider $\cQ$ a function class to approximate $\Qfunc$. We can retrieve $\Qfunc$ by solving for an empirical version of the following loss
$$
\argmin_{q \in \cQ} \E_{(S,A) \sim \mu_b, R \sim P_{rew}(\cdot \mid S,A)}[(Y - \cT_b(h_q)(S,A))^2] \qquad h_q(s,a,s',a') = q(s,a) - \gamma q(s',a'). 
$$
We can then plug our estimate into \Cref{eq:ope_rho} to get an estimate of $\rho(\pi)$. Different methods have been employed with this loss where methods differ by their choice of $\cQ$ and their approximation of $\cT_b$, see \citet{chen2022ivForPolicyEval}. 

However, this formulation is not amenable to spectral feature methods since $\mathcal{T}_b$ is non-compact as $X$ contains $Z$, and thus cannot be learned using the usual contrastive loss. To overcome this, we instead recover $\Qfunc$ through a modified NPIV problem. This time consider $X = (S',A')$ equipped with the measure such that given $(s,a)$, $S' \sim P(\cdot \mid s,a), A' \sim \pi(\cdot \mid S')$, $Z = (S,A)$ equipped with the measure such that $(S,A) \sim \mu_b$ and finally $Y(Q_{\pi})=-\gamma^{-1}(R - Q_\pi(S,A))$. 
Define $\tilde{\cT}q(Z) \doteq \E[q(S',A')\mid Z]$ so that
\begin{equation*}
\E[Y(Q_{\pi}) | Z] = \tilde{\cT}(Q_{\pi})(Z)
\end{equation*}
This removes the direct shared-variable source of non-compactness and yields the operator we target with spectral feature learning, under the compactness assumption used throughout the paper. However, this modified formulation introduces a new challenge: the target outcome $Y(Q_{\pi})$ depends on $\Qfunc$, which is the very function we are trying to estimate.

This necessitates an iterative procedure for $k=0, 1, \dots, K-1$, similar to Fitted Q-Evaluation (FQE):
\begin{enumerate}
    \item Start with an initial guess, $Q_0$ (e.g., $Q_0 = 0$).
    \item At iteration $k+1$, we solve the NPIV problem defined by $Q_k$:
    \begin{itemize}
        \item We construct the target outcome $Y_k$ using our previous estimate $Q_k$ and the observed reward $r(S, A)$ from the data:
            $$ Y_k(S, A) = -\frac{1}{\gamma}(r(S, A) - Q_k(S, A)) $$
        \item We solve the spectral NPIV problem $\E[Y_k \mid Z] = \tilde{\mathcal{T}} Q_{k+1}$ to find the new estimate $Q_{k+1}$.
    \end{itemize}
    \item Repeat until convergence.
\end{enumerate}
This iterative process introduces a potential for dynamic spectral misalignment. The target $Y_k$ changes at every iteration $k$. If the spectral features required to estimate $Y_k$ (the ``outcome-aware'' direction) are not the same as the dominant spectral features of $\tilde{\cT}$, or if this direction shifts as $Q_k$ converges, an outcome-agnostic method will fail. This is precisely the scenario where our \textbf{Augmented Spectral Feature Learning} could help. 

\subsection{Validity of hyperparameter tuning by cross-validation on the 2SLS loss}
\label{sec:model_selection_cv}

The population level optimal estimator of $\mathcal T$ based on a fixed set of $\psi^{(d)},\phi^{(d)}$ features is $\Pi_{\psi^{(d)}}\mathcal T\Pi_{\phi^{(d)}}$ where $\Pi_{\phi^{(d)}},\Pi_{\psi^{(d)}}$ denote the orthogonal projections onto the spans of the learned $X,Z$ features. By minimizing the contrastive loss we can ensure that our estimator is indeed close to having this structure, with whatever features are obtained in the process.

Given $\widehat \cT$ an estimation of $\cT$, consider the so-called stage-2 loss, $\ell(h) \doteq \E[(\widehat\cT h(Z)-Y)^2]$ over $h$ spanned by the learned $X-$features. For simplicity assume it is performed with this population-level optimal estimate of the conditional mean $\widehat \cT = \Pi_{\psi^{(d)}}\mathcal T\Pi_{\phi^{(d)}}$, and let the resulting estimate of the structural function be $\widehat h$. We have,\[
\ell(\widehat h)=\mathbb E\left[(\mathcal Th_0(Z)-\Pi_{\psi^{(d)}}\mathcal T\Pi_{\phi^{(d)}}\widehat h(Z))^2\right]+\mathbb E[(\mathcal Th_0(Z)-Y)^2].
\]

The second term is a model-independent constant so we can disregard it. If we wanted to evaluate the quality of our model based on this loss, then we should be able to decompose it into a monotone function of $\|h_0-\widehat h\|$ or (to make the task easier) of $\|\mathcal T(h_0-\widehat h)\|$ and a term that will not vary across different learned feature sets (or is sufficiently small to be negligible). However, there seems to be no clear way of achieving this. Consider the most natural approach below.

Note $\Pi_{\psi^{(d)}}\mathcal T\Pi_{\phi^{(d)}}\widehat h=\Pi_{\psi^{(d)}}\mathcal T\widehat h$ since $\widehat h$ is spanned by the $X-$features. Let $\Pi_{\psi^{(d)}}^\perp$ be the projection onto the orthogonal complement of the span of the $\psi$ features. Then
$$
\mathbb E\left[(\mathcal T h_0-\Pi_{\psi^{(d)}}\mathcal T \widehat h)^2\right]=\mathbb E\left[(\cT(h_0-\widehat h))^2\right]+\mathbb E\left[(\Pi_{\psi^{(d)}}^\perp\mathcal T\widehat h)^2\right]+2\mathbb E\left[\mathcal T(h_0-\widehat h)\cdot\Pi_{\psi^{(d)}}^\perp\mathcal T\widehat h\right].
$$
If we could argue that the latter two terms are negligible, then stage 2 error should reflect $\mathbb E[(\mathcal T(h_0-\widehat h))^2]$ and we would indeed be done. This condition would be satisfied if one could for instance argue that $\normop{\Pi_{\psi^{(d)}}^\perp\mathcal T\Pi_{\phi^{(d)}}}$ is always small. But there is no clear reason why this should hold. We note that this indicates that minimization of the 2SLS loss is a generally unprincipled methodology, not only for tuning $\delta$ in our setting, but for IV model selection more broadly.

\paragraph{Practical performance.} Regardless of the aforementioned obstacles, we find that selecting the model, and, in particular, tuning $\delta$ based on $\ell(h)$, the stage-2 loss, yields good results. On the dSprites benchmarks these are consistent with the values selected by the procedure based on maximizing the estimated projection length of $h_0$ onto the learned $X-$features, as shown in \Cref{fig:2sls_cv_comparison}.
\begin{figure}[htbp]
    \centering  
    \includegraphics[width=0.7\textwidth]{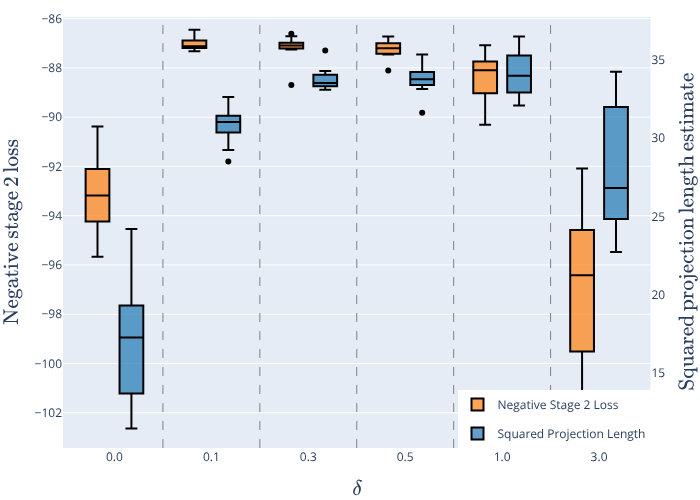}
    \caption{Comparison of the estimators of $\|\Pi_{\phi_*^{(d)}}h_0\|^2$ proposed in \Cref{sec:delta_selection} and negative stage-2 errors, for a range of $\delta$ values on the dSprites benchmark with $h_0=h_\text{new}$. Despite the lack of theoretical backing for the latter, both methods attain their maxima near the same values of $\delta$.}
    \label{fig:2sls_cv_comparison}
\end{figure}

{\color{black}
\subsection{Hyperparameter tuning by maximizing estimated spectral alignment}
\label{app:alignment_maximisation_delta}

Estimators of the alignment quantity in \Cref{prop:alignment_pop} provide a natural way to choose $\delta$.  For a fitted augmented operator, let
\[
    \widehat \cT_\delta^{(d)}
    = \sum_{i=1}^d \widehat\sigma_{\delta,i}\,
        \widehat\psi_{\delta,i} \otimes
        (\widehat\phi_{\delta,i},\widehat\omega_{\delta,i})
\]
be its rank-$d$ empirical SVD, with the singular systems orthonormal in $L_2(\widehat\mu_Z)$ and $L_2(\widehat\mu_X)\times\R$, respectively.  Define
\[
    \widehat\alpha_{\delta,i}
    = \widehat\sigma_{\delta,i}^{-1}\widehat\E\!
        \left[Y\widehat\psi_{\delta,i}(Z)\right],
    \qquad
    \widehat\alpha_\delta=(\widehat\alpha_{\delta,1},\ldots,
    \widehat\alpha_{\delta,d})^\top,
    \qquad
    \widehat\omega_\delta=(\widehat\omega_{\delta,1},\ldots,
    \widehat\omega_{\delta,d})^\top .
\]
The plug-in version of \Cref{prop:alignment_pop} is then
\[
    \widehat A(\delta)
    = \widehat\alpha_\delta^\top
      \left(I_d-\widehat\omega_\delta\widehat\omega_\delta^\top\right)^{-1}
      \widehat\alpha_\delta
    \approx \|\Pi_{\phi_\star^{(d)}} h_0\|_{L_2(X)}^2 .
\]
Thus one may tune $\delta$ by maximizing $\widehat A(\delta)$ over a candidate grid.  The caveat is numerical: the inverse above becomes ill-conditioned when
$\|\widehat\omega_\delta\|_{\ell_2}$ is close to one, since
$\lambda_{\min}(I_d-\widehat\omega_\delta\widehat\omega_\delta^\top)
=1-\|\widehat\omega_\delta\|_{\ell_2}^2$.

If $P_\delta$ denotes the rank-$d$ left singular projection of $\cT_\delta$, then, since
$\cT_\delta\cT_\delta^\star=\cT\cT^\star+\delta^2 r_0\otimes r_0$,
\[
    P_\delta \in
    \argmax_{\substack{P=P^\star=P^2\\ \operatorname{rank}(P)=d}}
    \left\{
        \Tr(P\cT\cT^\star)+\delta^2\|P r_0\|_{L_2(Z)}^2
    \right\}.
\]
Consequently, if $0\leq\delta_1<\delta_2$ and the above maximizers are chosen exactly, then
$\|P_{\delta_2}r_0\|_{L_2(Z)}^2\geq\|P_{\delta_1}r_0\|_{L_2(Z)}^2$, by adding the two optimality inequalities.  This formalizes the intuition that increasing $\delta$ encourages the learned $Z$-features to contain $r_0$.  In fact, there is an explicit lower bound.  Let $S_d:=\sum_{j=1}^d\lambda_j^2$, where $(\lambda_j)_j$ are the singular values of $\cT$.  Comparing $P_\delta$ with any rank-$d$ projection whose range contains $r_0$ gives
\[
    \Tr(P_\delta\cT\cT^\star)
    +\delta^2\|P_\delta r_0\|_{L_2(Z)}^2
    \geq \delta^2\|r_0\|_{L_2(Z)}^2 .
\]
$\Tr(P_\delta\cT\cT^\star)\leq \sup_{\text{projection } P,\mathrm{rank}(P)=d}\Tr(P\cT\cT^\star) = S_d$.  Hence
\[
    \|P_\delta r_0\|_{L_2(Z)}^2
    \geq
    \left(\|r_0\|_{L_2(Z)}^2 - \frac{S_d}{\delta^2}\right)_+ ,
\]
where $t_+:=\max\{t,0\}$.  Thus, at the population optimum, the $Z$-side projection not only increases with $\delta$ but approaches $\|r_0\|_{L_2(Z)}^2$.

The same lower bound explains why the plug-in inverse can become unstable.  Fix a $d$-dimensional $Z$-feature subspace with projection $\Pi_\Psi$, and consider
\[
    A_{\delta,\Psi} := \Pi_\Psi[\cT\mid \delta r_0]
    : L_2(X)\times\R\to L_2(Z).
\]
Assume that $A_{\delta,\Psi}$ has rank $d$ and write its right singular vectors as $(\phi_i,\omega_i)$, $i=1,\ldots,d$.  With $e=(0,1)\in L_2(X)\times\R$ and $\omega=(\omega_1,\ldots,\omega_d)^\top$,
\[
    \|\omega\|_{\ell_2}^2
    = \|\Pi_{\mathrm{Span}(A_{\delta,\Psi}^\star)} e\|^2
    = \|\Pi_{\mathrm{Ker}(A_{\delta,\Psi})^\perp} e\|^2 .
\]
Moreover,
\[
    \mathrm{Ker}(A_{\delta,\Psi})
    = \{(f,a):\Pi_\Psi\cT f + a\delta\Pi_\Psi r_0=0\}.
\]
Since $\Pi_\Psi r_0=\Pi_\Psi\cT h_0$, the minimum-norm $f$ satisfying this constraint for a fixed $a$ is
$-a\delta(\Pi_\Psi\cT)^\dagger\Pi_\Psi r_0$.  Hence, with
\[
    \eta_\Psi^2
    :=\| (\Pi_\Psi\cT)^\dagger\Pi_\Psi r_0\|_{L_2(X)}^2,
\]
we obtain
\[
    \|\omega\|_{\ell_2}^2
    = \min_{a\in\R}\left\{a^2\delta^2\eta_\Psi^2+(a-1)^2\right\}
    = \frac{\delta^2\eta_\Psi^2}{1+\delta^2\eta_\Psi^2} .
\]
It remains only to note that $\eta_\Psi^2$ is itself bounded below by the $Z$-side projection length.  If $\Pi_\Psi r_0=0$ this is immediate; otherwise,
\[
    \eta_\Psi
    = \min\{\|f\|_{L_2(X)}:\Pi_\Psi\cT f=\Pi_\Psi r_0\}
    \geq
    \frac{\|\Pi_\Psi r_0\|_{L_2(Z)}}{\|\Pi_\Psi\cT\|_{\mathrm{op}}}
    \geq \|\Pi_\Psi r_0\|_{L_2(Z)},
\]
where the last inequality uses $\|\Pi_\Psi\cT\|_{\mathrm{op}}\leq\|\cT\|_{\mathrm{op}}\leq1$.  Therefore, for the population subspace $\Pi_\Psi=P_\delta$,
\[
    \delta^2\eta_{P_\delta}^2
    \geq
    \delta^2\left(\|r_0\|_{L_2(Z)}^2 - \frac{S_d}{\delta^2}\right)_+
    \longrightarrow \infty
\]
whenever $r_0\neq 0$.  Consequently $\|\omega\|_{\ell_2}^2\to1$ along the population optima.  For empirical learned features the same conclusion is approximate when their $Z$-span is close to $P_\delta$. Thus increasing $\delta$ encourages $Z$-side alignment with $r_0$, but the resulting inverse in $\widehat A(\delta)$ becomes fragile when that alignment pushes $\|\omega\|_{\ell_2}$ close to one.

In practice we therefore use $\widehat A(\delta)$ conservatively. We scan an increasing grid of candidate values and accept a larger $\delta$ only when it produces a substantial relative increase in the estimated alignment. In \Cref{fig:adaptive_delta_oned,fig:adaptive_delta_dsprites}, the threshold is $5\%$: a larger value of $\delta$ is accepted only if $\widehat A(\delta)$ improves by at least $5\%$ relative to the previous accepted value. This rule selects models that outperform the non-outcome-aware SpecIV baseline, and in the dSprites experiments also outperform DFIV.

\begin{figure}[htbp]
    \centering
    \includegraphics[width=\textwidth]{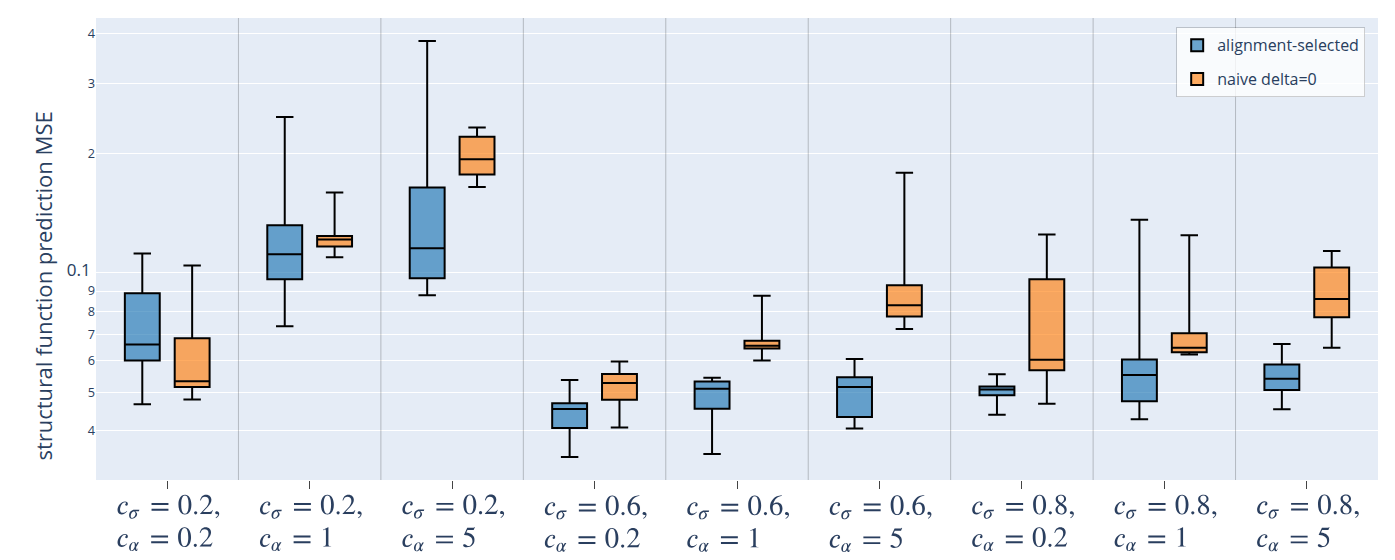}
    \caption{$h_0$ prediction MSE on synthetic one-dimensional experiments when $\delta$ is selected by estimated spectral alignment, compared to the non-outcome-aware $\delta=0$ baseline.}
    \label{fig:adaptive_delta_oned}
\end{figure}

\begin{figure}[htbp]
    \centering
    \includegraphics[width=0.78\textwidth]{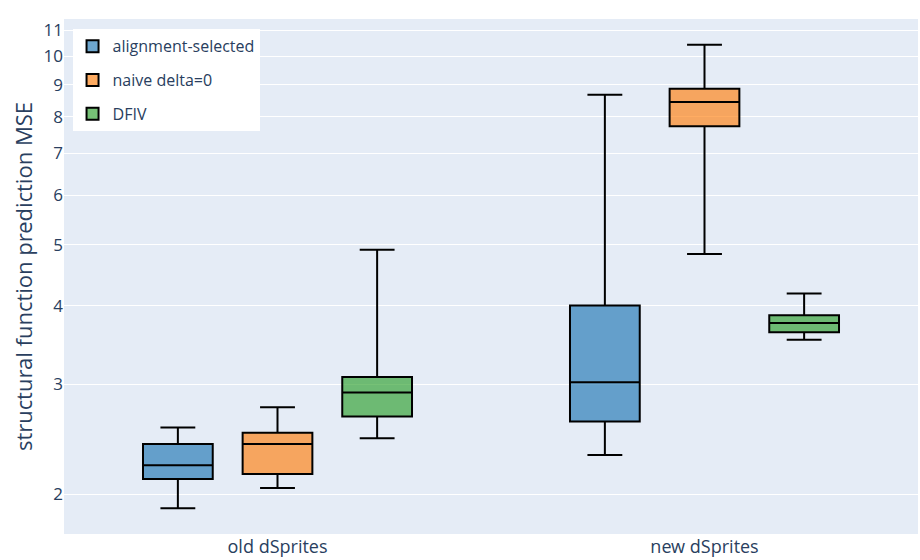}
    \caption{Structural-function prediction MSE on the dSprites benchmarks when $\delta$ is selected by estimated spectral alignment, compared to the non-outcome-aware $\delta=0$ baseline and DFIV.}
    \label{fig:adaptive_delta_dsprites}
\end{figure}

}
{\color{black}
\section{Higher rank perturbations}
\label{sec:high_rank_perturbations}

The rank one augmentation we propose extends directly to higher rank perturbations. This setting includes vector–valued outcomes and also the case where one wishes to encourage the learned $Z$ features to retain predictive information about multiple functions of $Y$.

Let $f_1,\dots,f_K$ be functions of $Y$ that we would like to be well approximated from $Z$. We define $r_k = \mathbb E[f_k(Y)\mid Z]$, $k=1,\ldots,K$. For $\underline{\delta} \in \mathbb R^K$ we define:
\[
\mathcal L_{\underline{\delta}}^{(d)}(\theta)
\ \doteq\
\mathcal L_0^\upd(\theta)
-
\sum_{k=1}^K \underline{\delta}_k^2
\left\|
\Pi_{\psi_{\theta}^{(d)}}r_k(Z)
\right\|^2 .
\]
As in the rank one case, to avoid differentiating through matrix inverses, this can be rewritten with auxiliary variables $\omega_k \in \mathbb R^d$ as:
\[
\mathcal L_{\underline{\delta}}^{(d)}(\theta,\omega_1,\ldots,\omega_K)
\ \doteq\
\mathcal L_0^\upd(\theta)
+
\sum_{k=1}^K \left(
\omega_k^{\top}
C_{\psi_{\theta}^{(d)}} \omega_k
-
2\,\underline{\delta}_k
\E[f_k(Y)\psi_\theta^{(d)}(Z)]^{\top} \omega_k\right).
\]
The optimal features correspond to the truncated SVD of the following rank $K$ perturbed operator:
\[
\mathcal T_{\underline{\delta}}:
L_2(X)\times\mathbb R^K \rightarrow L_2(Z),\qquad
(h,a)\mapsto \mathcal T h + \sum_{k=1}^K a_k\,\underline{\delta}_k\,r_k.
\]
A complete theoretical analysis of the rank $K$ setting is beyond the present scope and we regard it as an important direction for future work. We report preliminary results for $K=2$ using $f_k(y) = y^k$ in \Cref{fig:high_rank_pert_losses}. In the dSprites setting with $h_0 = h_{\text{new}}$, including $f_2$ alone yields similar improvements as $f_1$ alone. Using both functions gives slightly lower error, but the differences are small and do not allow strong conclusions at this stage.

\begin{figure}[htbp]
    \centering  
    \includegraphics[width=0.7\textwidth]{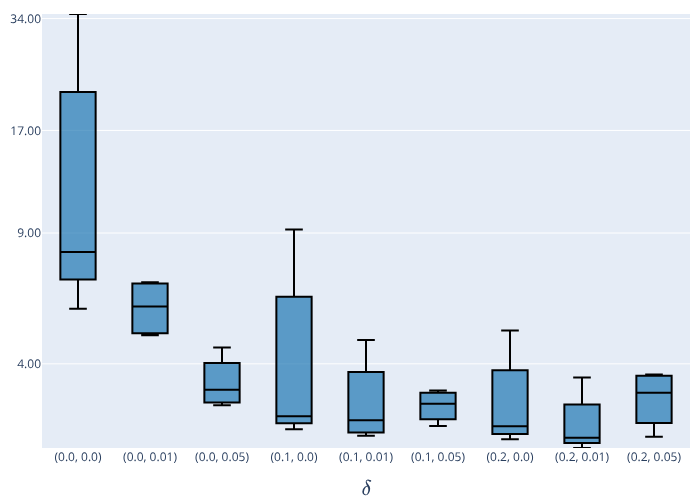}
    \caption{IV MSE for rank two perturbations with $f_k(y) = y^k$, $k=1,2$, on dSprites with $h_0=h_\text{new}$. For example, $\delta=(0.2,0.0)$ reduces to the rank one case.}
    \label{fig:high_rank_pert_losses}
\end{figure}
}
{\color{black}
\section{Naive outcome-aware baselines}
\label{sec:naive_baselines}
One can conceive of far simpler outcome-aware IV methods than ours. For instance, a naive idea is to train the features in two stages. \begin{enumerate}
    \item Train $Z$-features $\psi_\theta^{(d)}$ by maximizing $\|\Pi_{\psi_\theta^{(d)}}r_0\|^2_{L_2(Z)}$, perhaps with an additional regularization term preventing features from blowing up or becoming collinear.
    \item Keeping the $Z$-features fixed, train $\phi_{\theta}^{(d)}$ by minimizing a loss that measures how well $\phi_\theta^{(d)}$
    predicts $\psi_\theta^{(d)}$. Natural candidates include $\cL_0(\theta)$ or simply $\E[\|\psi_\theta(Z)-\phi_\theta(X)\|^2]$.
\end{enumerate}
This approach is not guaranteed to yield consistent results even in population. For instance, when learning $d=1$ features, if the first stage learns $\psi(Z)\propto r_0$, then the minimizer of $\E[\|\phi(X)-\psi(Z)\|^2]$ is $\phi(X)\propto \cT^* r_0$. Unless $r_0$
happens to be a singular function of $\cT$, this means that the method is not learning the correct features. Indeed, the nature of the first stage of the representation learning objective makes it impossible to discern the properties of the learned $Z$ features whenever $d>1$, other than asserting that jointly they should span $r_0$. This is precisely why our method balances the projection length of $r_0$ onto the $Z$ features, with capturing the geometry of $\cT$. Nonetheless, we report on the performance of such baselines, in part because the one based on minimizing $\E[\|\psi_\theta^{(d)}-\phi_\theta^{(d)}\|^2]$ in the second stage sometimes performed surprisingly well. We found minimizing $\cL_0(\theta)$ for a fixed $\psi_\theta$ in the second step to yield very poor results, and we do not include them below.
\paragraph{Experimental setting}
}

\begin{table}[h!]
    \centering
    \begin{tabular}{ll}
        \toprule
        \textbf{Layer} & \textbf{Configuration} \\
        \midrule
        1 & Input: 1 \\
        2 & FC(1,50), $x\mapsto x+\sin^2 x$ \\
        3 & FC(50, 50), GeLU \\
        4 & FC(50, 5)\\
        \bottomrule
    \end{tabular}
    \caption{$Z,X$ feature networks for the outcome-aware baseline.}
    \label{tab:synthetic_network_naive_baseline}
\end{table}

{\color{black}
We evaluate the ``naive outcome-aware baseline'' on a range of fully synthetic one-dimensional experiments identical to the ones in \Cref{sec:experiments}, all with the additive noise variance at $\sigma^2=0.2$, 15000 training samples and 5000 evaluation samples. The models used are specified in \Cref{tab:synthetic_network_naive_baseline}. 
We compare the attained IV strong-norm losses to those of AugSpecIV. Further, in contrast to our experiments reported in the main body of the paper, we report on a fully-adaptive $\delta$-tuning rule. For each $\delta\in\{0,0.5,1.0,3.0,5.0\}$, we fit a model by minimizing $\cL_\delta(\theta)$ jointly over the $X,Z$ features. We choose to evaluate the model with the largest $\delta$ such that the estimate of $\|\Pi_{\phi^{(d)}}h_0\|^2$ based on \Cref{prop:alignment_pop} increases by at least $5\%$ relative to the previous, smaller $\delta$. In \Cref{fig:naive_baseline_mse_grid} we report the median losses attained by so-trained models, and the newly proposed baseline, across a range of values of $c_\alpha$ and $c_\sigma$. The ridge regularization parameters used in both stages of 2SLS were chosen by the minimization of 2SLS weak-error estimate. Note that $\mathrm{MSE}=0.5$ corresponds to the loss attained by a constant function estimator of $h_0$.
}

\begin{figure}[h!]
    \centering
    \includegraphics[width=\textwidth]{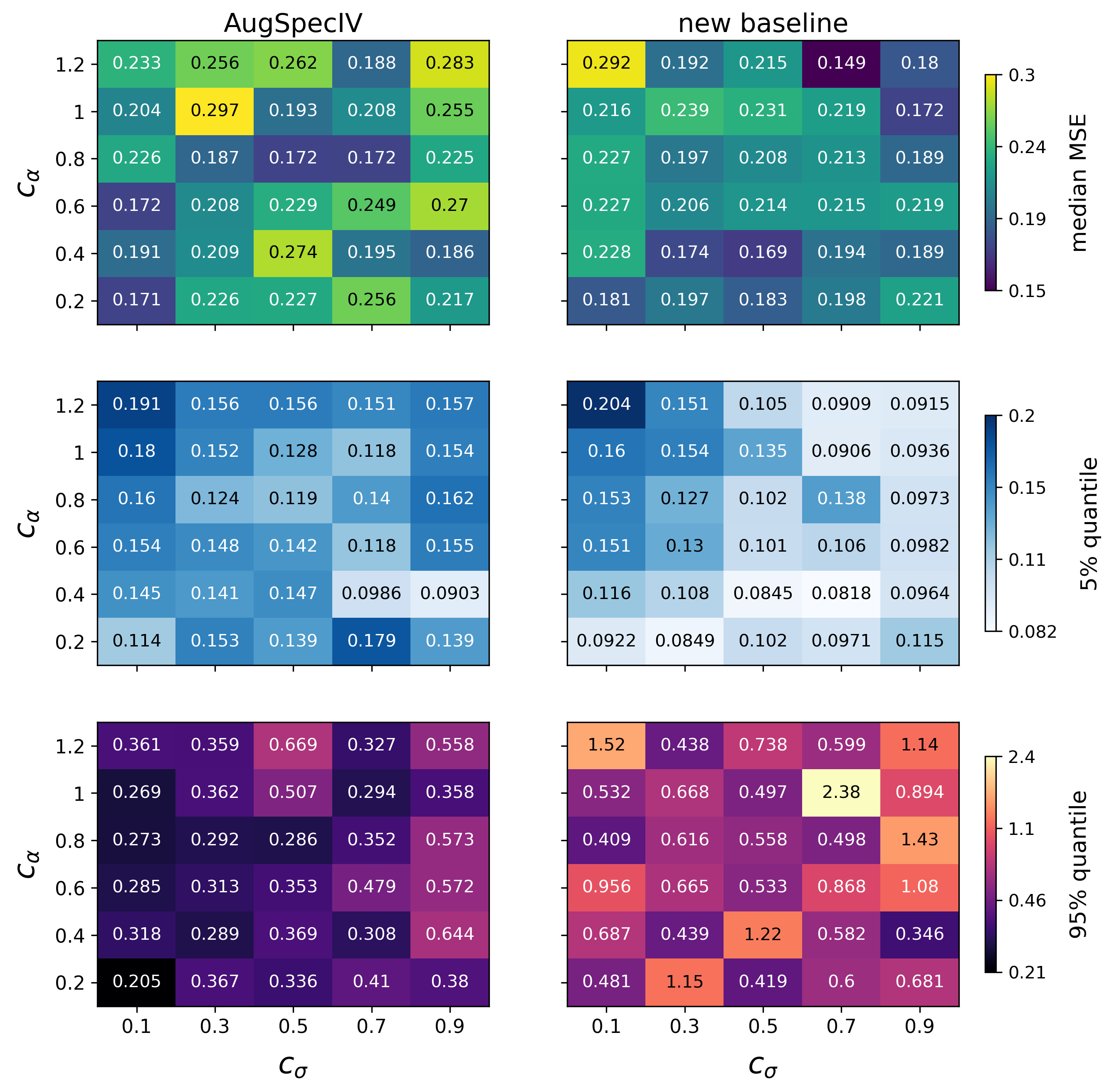}
    \caption{Comparison of AugSpecIV and the naive outcome-aware baseline on synthetic one-dimensional experiments across values of $c_\alpha$ and $c_\sigma$. The top row reports median IV MSE. The middle and bottom rows report the $5\%$ and $95\%$ quantiles, respectively.}
    \label{fig:naive_baseline_mse_grid}
\end{figure}

{\color{black}
Since the newly proposed baseline's performance was very unstable, we introduced an additional post-hoc method for rejecting poorly-fitted models. 
Whenever the model generated forecasts for $h_0$ whose standard deviation was over 5 times larger than that of the observed $Y$, we deemed the predictions implausible and rejected them.
\Cref{fig:naive_baseline_pass_rates,fig:augspeciv_delta_pass_rates} report the fraction of fitted models passing this rejection rule. Across our testing parameter grid, this happened in 9 out of 1200 AugSpecIV models, but in around half of the instances of the new baseline. This is, in spirit, similar to the approach of \citet{bruns-smith2024two}, who also proposes to
learn outcome-aware representations based on a regression-only objective, but then performs a hypothesis test to verify whether the learned representations can be used for approximating the conditional-mean operator. Indeed, when the learned method produces ``sensible-looking'' predictions, it is competitive with ours, attaining comparable median losses, though with a noticeably heavier tail of large MSEs.
}

\begin{figure}[h!]
    \centering
    \includegraphics[width=0.72\textwidth]{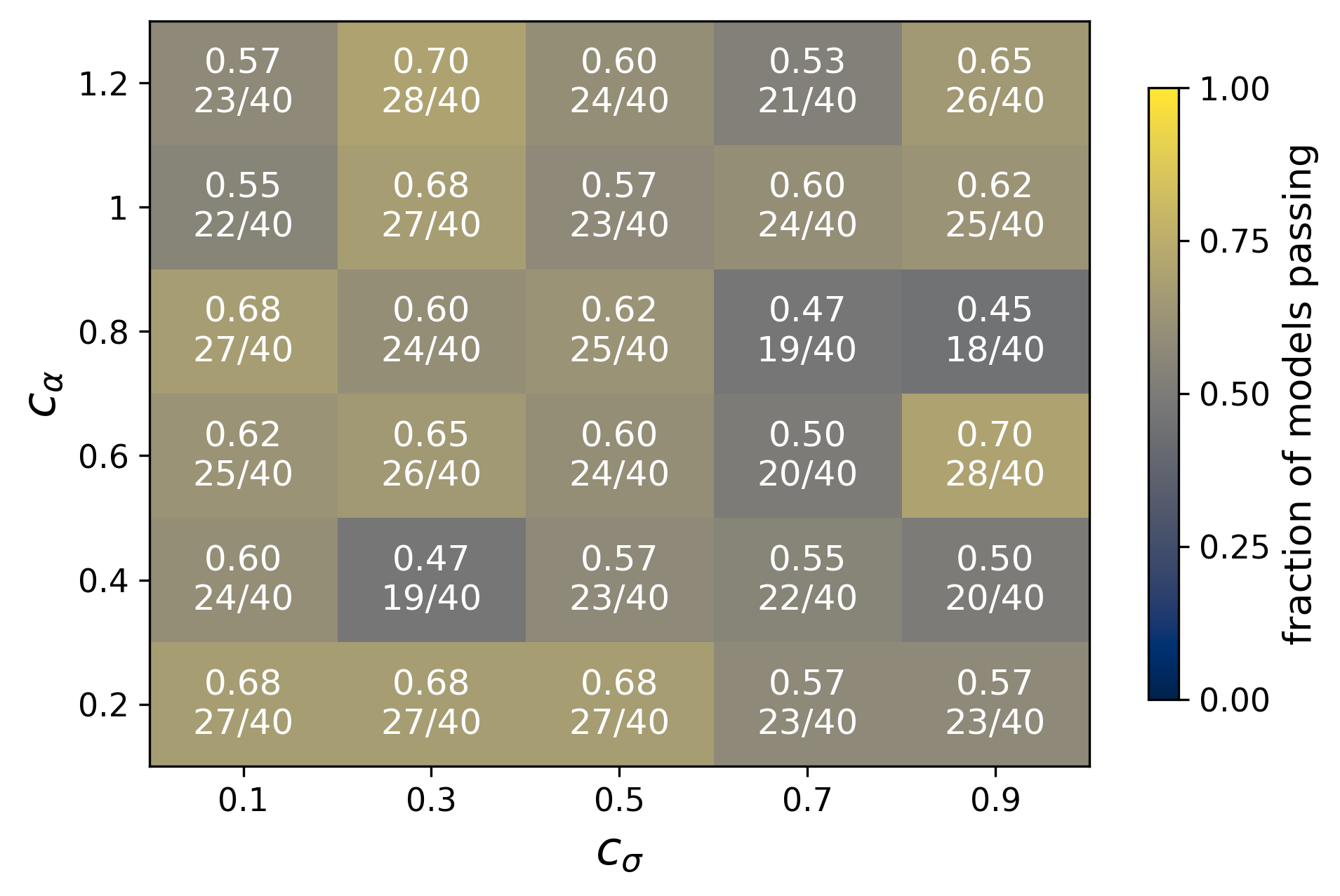}
    \caption{Fraction of naive outcome-aware baseline models passing the post-hoc rejection rule across values of $c_\alpha$ and $c_\sigma$. Each cell also reports the corresponding number of passing models.}
    \label{fig:naive_baseline_pass_rates}
\end{figure}

\begin{figure}[h!]
    \centering
    \includegraphics[width=\textwidth]{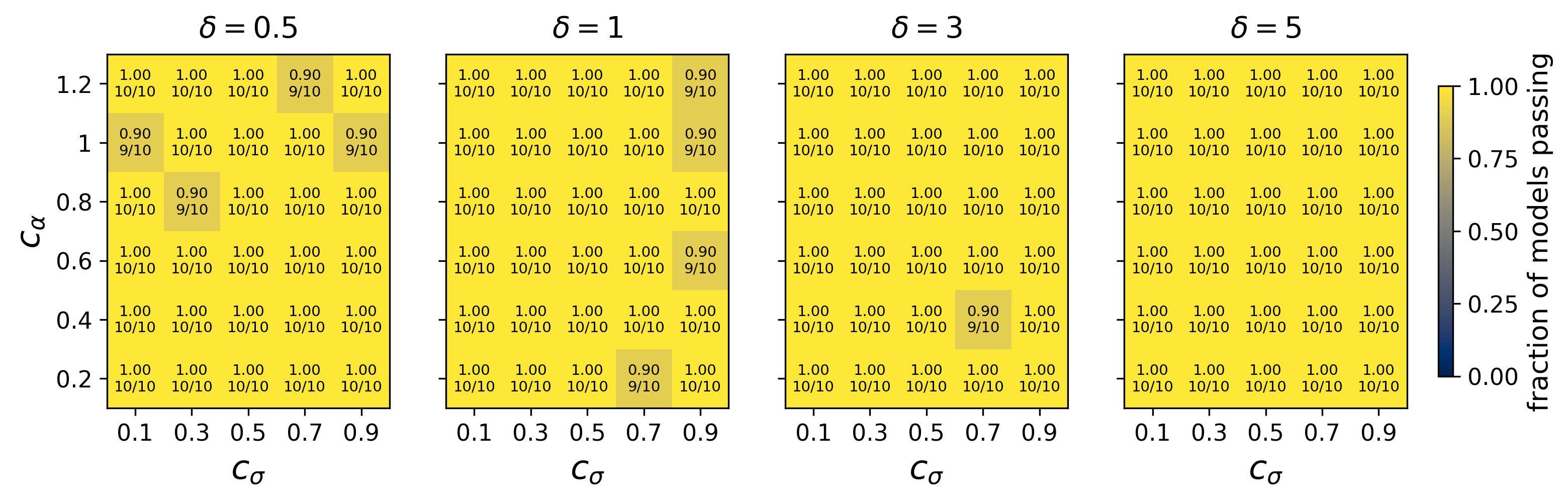}
    \caption{Fraction of AugSpecIV models passing the post-hoc rejection rule for each value of $\delta$, across values of $c_\alpha$ and $c_\sigma$. Each cell also reports the corresponding number of passing models.}
    \label{fig:augspeciv_delta_pass_rates}
\end{figure}

\end{document}